\pgfplotsset{compat=1.16}
\tikzstyle{state}+=[minimum size = 6mm, inner sep=0,outer sep=1]
\colorlet{disabled}{lightgray}
\tikzstyle{state}=[draw,rectangle,inner sep=5pt,rounded corners=2pt]
\tikzstyle{action}=[font=\small,inner sep=0pt,outer sep=3pt]
\tikzstyle{actionnode}=[circle,draw=black,fill=black,minimum size=1mm,inner sep=0,outer sep=0]
\tikzstyle{actionedge}=[draw,-]
\tikzstyle{prob}=[font=\scriptsize,inner sep=0pt,outer sep=1pt]
\tikzstyle{probedge}=[draw,->]
\tikzstyle{directedge}=[draw,->]
\tikzset{chainarrow/.tip={Stealth[length=3pt]}}
\tikzset{>=chainarrow}
\newtheorem{theorem}{Theorem}
\newtheorem{corollary}{Corollary}
\newtheorem{definition}{Definition}
\newtheorem{example}{Example}
\Crefname{equation}{Eq.}{Eqs.}
\Crefname{figure}{Fig.}{Figs.}
\Crefname{tabular}{Tab.}{Tabs.}
\Crefname{remark}{Rem.}{Rems.}
\Crefname{section}{Section}{Secs.}
\Crefname{subsection}{Sec.}{Secs.}
\Crefname{appendix}{App.}{Apps.}
\Crefname{example}{Ex.}{Exs.}
\Crefname{definition}{Def.}{Defs.}
\Crefname{corollary}{Cor.}{Cors.}
\crefname{equation}{Eq.}{Eqs.}
\crefname{figure}{Fig.}{Figs.}
\crefname{tabular}{Tab.}{Tabs.}
\crefname{remark}{Rem.}{Rems.}
\crefname{section}{Sec.}{Secs.}
\crefname{subsection}{Sec.}{Secs.}
\crefname{appendix}{App.}{Apps.}
\crefname{example}{Ex.}{Exs.}
\crefname{definition}{Def.}{Defs.}
\crefname{lemma}{Lem.}{Lems.}
\crefname{algorithm}{Alg.}{Algs.}
\crefname{theorem}{Thm.}{Thms.}
\DeclarePairedDelimiter{\delimabs}{\lvert}{\rvert}
\DeclarePairedDelimiter{\delimcardinality}{\lvert}{\rvert}
\DeclarePairedDelimiter{\delimnorm}{\lVert}{\rVert}
\NewDocumentCommand{\abs}{sm}{\IfBooleanTF{#1}{\delimabs*{#2}}{\delimabs{#2}}}
\NewDocumentCommand{\cardinality}{sm}{\IfBooleanTF{#1}{\delimcardinality*{#2}}{\delimcardinality{#2}}}
\NewDocumentCommand{\norm}{sm}{\IfBooleanTF{#1}{\delimnorm*{#2}}{\delimnorm{#2}}}
\NewDocumentCommand{\powerset}{r()}{2^{#1}}
\newcommand{\Naturals}{\mathbb{N}}
\newcommand{\Rationals}{\mathbb{Q}}
\newcommand{\Reals}{\mathbb{R}}
\DeclareMathOperator{\support}{supp}
\NewDocumentCommand{\Measures}{d()}{\IfValueTF{#1}{\Pi(#1)}{\Pi}}
\NewDocumentCommand{\Distributions}{d()}{\IfValueTF{#1}{\mathcal{D}(#1)}{\mathcal{D}}}
\NewDocumentCommand{\integral}{d<> m m}{\IfValueTF{#1}{\int_{#1} #2\,d#3}{\int #2\,d#3}}
\NewDocumentCommand{\Expectation}{s d[]}{\IfValueTF{#2}{\mathbb{E}}{\mathbb{E}\IfBooleanTF{#1}{\left[#2\right]}{[#2]}}}
\NewDocumentCommand{\Probability}{s d[]}{\mathop{\mathrm{Pr}}\IfValueT{#2}{\IfBooleanTF{#1}{\left[#2\right]}{[#2]}}}
\newcommand{\MDP}{\mathsf{M}}
\newcommand{\SG}{\mathcal{G}}
\NewDocumentCommand{\SGinduced}{o}{\SG_{\IfNoValueTF{#1}{\RMDP}{#1}}}
\NewDocumentCommand{\SGinducedpoly}{o}{\SG^\mathit{poly}_{\IfNoValueTF{#1}{\RMDP}{#1}}}
\newcommand{\RMDP}{\mathcal{M}}
\newcommand{\opt}{\mathit{opt}}
\newcommand{\States}{S}
\newcommand{\Smax}{S_\mathit{max}}
\newcommand{\Smin}{S_\mathit{min}}
\newcommand{\rew}{\mathsf{r}}
\NewDocumentCommand{\mctransitions}{d()}{\IfValueTF{#1}{P(#1)}{P}}
\newcommand{\Actions}{A}
\NewDocumentCommand{\stateactions}{r()}{{\Actions}(#1)}
\NewDocumentCommand{\mdptransitions}{d()}{\IfNoValueTF{#1}{\mathsf{P}}{\mathsf{P}(#1)}}
\NewDocumentCommand{\rmdptransitions}{d()}{\IfNoValueTF{#1}{\mathcal{P}}{\mathcal{P}(#1)}}
\newcommand{\infinitepath}{\rho}
\newcommand{\finitepath}{\varrho}
\NewDocumentCommand{\Infinitepaths}{d<>}{\IfValueTF{#1}{\mathsf{Paths}_{#1}}{\mathsf{Paths}}}
\NewDocumentCommand{\Finitepaths}{d<>}{\IfValueTF{#1}{\mathsf{FPaths}_{#1}}{\mathsf{FPaths}}}
\newcommand{\policy}{\pi}
\newcommand{\polOne}{\policy}
\newcommand{\polTwo}{\tau}
\NewDocumentCommand{\Policies}{d<>}{\IfNoValueTF{#1}{\Pi}{\Pi_{#1}}}
\NewDocumentCommand{\StrategiesMD}{d<>}{\IfNoValueTF{#1}{\Pi}{\Pi_{#1}}^{\mathsf{MD}}}
\newcommand{\val}{\mathsf{V}}
\newcommand{\valmax}{\val^{\max}}
\newcommand{\valmin}{\val^{\min}}
\newcommand{\valopt}{\val^{\opt}}
\newcommand{\bellmanupdate}{\mathsf{VI}}
\newcommand{\lb}{\mathsf{L}}
\newcommand{\ub}{\mathsf{U}}
\DeclareMathOperator{\SccsOp}{SCC}\NewDocumentCommand{\Sccs}{r()}{\SccsOp(#1)}
\DeclareMathOperator{\BsccsOp}{BSCC}\NewDocumentCommand{\Bsccs}{r()}{\BsccsOp(#1)}
\DeclareMathOperator{\EcsOp}{EC}\NewDocumentCommand{\Ecs}{d()}{\IfNoValueTF{#1}{\EcsOp}{\EcsOp(#1)}}
\DeclareMathOperator{\MecsOp}{MEC}\NewDocumentCommand{\Mecs}{d()}{\IfNoValueTF{#1}{\MecsOp}{\MecsOp(#1)}}
\newcommand{\probability}{\mathbb{P}}
\newcommand{\expectation}{\mathbb{E}}
\NewDocumentCommand{\ProbabilityMC}{s r<> d[]}{\mathsf{Pr}_{#2}\IfNoValueF{#3}{\IfBooleanTF{#1}{\!\left[#3\right]\!}{[#3]}}}
\NewDocumentCommand{\ProbabilityMDP}{s r<> r<> d[]}{\mathsf{Pr}_{#2}^{#3}\IfNoValueF{#4}{\IfBooleanTF{#1}{\!\left[#4\right]\!}{[#4]}}}
\NewDocumentCommand{\ProbabilityMDPmax}{s r<> d[]}{\mathsf{Pr}_{#2}^{\max}\IfNoValueF{#3}{\IfBooleanTF{#1}{\!\left[#3\right]\!}{[#3]}}}
\NewDocumentCommand{\ProbabilityMDPsup}{s r<> d[]}{\mathsf{Pr}_{#2}^{\sup}\IfNoValueF{#3}{\IfBooleanTF{#1}{\!\left[#3\right]\!}{[#3]}}}
\NewDocumentCommand{\ExpectationMC}{s r<> r[]}{\mathbb{E}_{#2}\IfBooleanTF{#1}{\!\left[#3\right]\!}{[#3]}}
\newcommand{\totalReward}{\mathsf{TR}}
\newcommand{\lraReward}{\mathsf{LRA}}
\newcommand{\payoff}{\mathsf{Payoff}}
\NewDocumentCommand{\stepreach}{r<>}{\Diamond^{=#1}}
\NewDocumentCommand{\boundedreach}{r<>}{\Diamond^{{\leq}#1}}
\NewDocumentCommand{\steadystate}{d<> d()}{\IfValueTF{#1}{\pi^\infty_{#1}}{\pi^\infty}\IfValueT{#2}{(#2)}}
\newcommand{\INITTR}{\mathsf{INIT\_TR}}
\newcommand{\COLLAPSE}{\mathsf{COLLAPSE}}
\newcommand{\surjPolicyA}{\mathsf{projA}}
\newcommand{\surjPolicyE}{\mathsf{projE}}
\newcommand{\surjPath}{\mathsf{destutter}}
\newcommand{\SPre}{\mathsf{SPre}}
\newcommand{\APre}{\mathsf{APre}}
\newcommand{\AssmOptPolExist}{\textbf{Opt-Policy-Existence}\xspace}
\newcommand{\AssmConstSupp}{\emph{Constant-Support}\xspace}
\newcommand{\ccs}{closed constant-support\xspace}
\newcommand{\AssmPmin}{\textbf{Minimum-Probability}\xspace}
\newcommand{\pmin}{p_{\min}}
\newcommand{\lnorm}[1]{L^{#1}}
\newcommand{\lp}{\lnorm{p}}
\newcommand{\lone}{\lnorm{1}}
\newcommand{\ltwo}{\lnorm{2}}
\newcommand{\linf}{\lnorm{\infty}}
\title{
	Solving Robust Markov Decision Processes: Generic, Reliable, Efficient
}
\author{
    Tobias Meggendorfer\textsuperscript{\rm 1}\equalcontrib,
    Maximilian Weininger\textsuperscript{\rm 2}\equalcontrib,
    Patrick Wienh\"oft\textsuperscript{\rm 3,4}\equalcontrib
}
\newcommand{\ifarxivelse}[2]{\iftoggle{arxiv}{#1}{#2}}
\begin{document}

\maketitle

\begin{abstract}
	
	Markov decision processes (MDP) are a well-established model for sequential decision-making in the presence of probabilities.
	In \emph{robust} MDP (RMDP), every action is associated with an \emph{uncertainty set} of probability distributions, modelling that transition probabilities are not known precisely.
	Based on the known theoretical connection to stochastic games, we provide a framework for solving RMDPs that is generic, reliable, and efficient.
	It is \emph{generic} both with respect to the model, allowing for a wide range of uncertainty sets, including but not limited to intervals, $L^1$- or $L^2$-balls, and polytopes; and with respect to the objective, including long-run average reward, undiscounted total reward, and stochastic shortest path.
	It is \emph{reliable}, as our approach not only converges in the limit, but provides precision guarantees at any time during the computation.
	It is \emph{efficient} because -- in contrast to state-of-the-art approaches -- it avoids explicitly constructing the underlying stochastic game.
	Consequently, our prototype implementation outperforms existing tools by several orders of magnitude and can solve RMDPs with a million states in under a minute.

\end{abstract}

\begin{links}
	\link{Code}{https://zenodo.org/records/14385450}
	\ifarxivelse{}{
		\link{Extended version}{<arxiv link>}
	}
\end{links}

\section{Introduction}\label{sec:intro}

\paragraph{Robust Markov decision processes.}
\emph{Markov decision processes (MDPs)} \cite{Puterman} are \emph{the} standard model for sequential decision making and planning in the context of non-determinism and uncertainty.
In brief, an MDP proceeds as follows: Starting in some state of the modelled system, an agent chooses an action (resolving non-determinism) and the MDP continues to a successor state, sampled from a probability distribution associated with the state-action pair (resolving uncertainty).
In practice, this uncertainty is often not known precisely, but rather estimated from data.
\emph{Robust Markov decision processes (RMDPs)}~\cite{NG05,Iyengar05} are an extension of MDPs that lift the assumption of knowing every transition probability exactly.
Instead of one precise probability distribution per state-action pair, RMDPs have an \emph{uncertainty set} consisting of (potentially uncountably) many probability distributions.
RMDPs have been used in, e.g., healthcare applications~\cite{zhang2017robust}. 
However, existing approaches to solving RMDP suffer from significant drawbacks, e.g.\ they are limited to very specific objectives or classes of uncertainty sets, or do not provide any guarantees on the correctness of their result.
Alleviating all these problems, we present a framework for solving RMDPs that is \emph{generic}, \emph{reliable}, and \emph{efficient}.

\paragraph{Generic Uncertainty Sets.}
In the literature, many variants of uncertainty sets exist:
Firstly, polytopic uncertainty sets~\cite{ijcai-krish} generalize simple interval uncertainty, e.g.~\cite{givan2000bounded,DBLP:conf/colt/TewariB07}, $L^1$-balls around a given probability distribution, e.g.~\cite{StrLit04,DBLP:conf/icml/HoPW18}, and contamination models~\cite{Wang24}.
Definable uncertainty sets~\cite{GPV23} are a recent generalization of polytopic uncertainty.
Secondly, non-polytopic (and non-definable) uncertainty sets include the Chi-square \cite{Iyengar05}, Kullback-Leibler divergence \cite{NG05}, and Wasserstein distance \cite{Yan17} uncertainty sets, and $L^p$-balls around a distribution for $1 < p < \infty$, all of which state-of-the-art methods cannot handle in general.
In this paper, we introduce the \AssmConstSupp Assumption, which intuitively requires that the successors of an action are certain, and only the transition probabilities are unknown.
It allows us to capture all the listed non-polytopic variants and more.
\AssmConstSupp uncertainty sets are incomparable to both polytopic and definable uncertainty sets, i.e.\ there exists polytopic (and definable) uncertainty sets that do not satisfy the \AssmConstSupp Assumption and vice versa.

Solution algorithms are always restricted to some particular representation of uncertainty.
Considering more general uncertainty sets comes with several complications, e.g.\ that in some cases optimal policies may cease to exist (see \Cref{ex:opt-pol}).
In this work, we consider a large class of uncertainty sets by investigating ones that are polytopic or that satisfy the \AssmConstSupp Assumption.

\paragraph{Generic Objectives.}
RMDPs mainly have been investigated with \emph{discounted} or \emph{finite-horizon objectives}, which put an emphasis on the immediate performance of the system, see e.g. the seminal works~\cite{NG05,Iyengar05} or the recent overview~\cite[Sec.\ 1.2]{Wang24}.
While these objectives can be included in our framework, they are not the focus of this paper and are only discussed in \ifarxivelse{\cref{app:disc-rew}}{the extended version of the paper \cite[App.~B]{GRIP-techreport}}.
Recently, several works studied RMDPs with \emph{long-run average reward (LRA)} objectives \cite{CGK+23,Wang24}.
We discuss these and their relation to our framework in the related work section below.
Lastly, the popular objectives of \emph{undiscounted total reward (TR)} and \emph{stochastic shortest path (SSP)} have only been investigated in the very restricted setting of interval uncertainty sets~\cite{DBLP:conf/ictai/Buffet05,DBLP:journals/ai/WuK08}, while we provide solutions for the more general RMDP setting. The importance of RMDPs with these objectives for several research fields is discussed in~\cite{DBLP:journals/sttt/BadingsSSJ23}.

\paragraph{Technical Contribution.}
To achieve generality, reliability, and efficiency, we exploit two key ideas.
Firstly, RMDPs can be reduced to \emph{stochastic games (SGs)}, where one player is the agent of the RMDP, and the other player is the environment, picking a probability distribution from the uncertainty set.
This connection was mentioned already in~\cite{NG05,Iyengar05} for finite horizon and discounted objectives, and recently was formalized for polytopic uncertainty sets and LRA objectives~\cite{ijcai-krish}.
We extend this reduction to arbitrary uncertainty sets, as well as to TR and SSP objectives.
Thus, we can apply \emph{theoretical} results from SG literature to tackle RMDPs reliably.

Secondly, previous works exploiting this connection either do not address how to \emph{practically} find the decisions of the other player~\cite{NG05,Iyengar05}, or explicitly construct the induced SG~\cite{ijcai-krish}.
The latter approach is not applicable to general uncertainty sets, as the SG can be infinite; and for polytopic uncertainty sets, while finite, the SG requires exponential space.
Here, we show how this explicit construction can be avoided, performing the key steps of the SG solution algorithm implicitly.
This mitigates both drawbacks of the explicit approach.

\paragraph{Reliable Stopping Criterion.}
Several approaches for solving RMDPs are based on value iteration (VI), e.g.~\cite{GPV23,Wang24}.
However, these only converge in the limit and cannot bound the current imprecision.
Consequently, in practice they are terminated when the result can still be arbitrarily far off and unreliable.
The problem of obtaining guarantees on the precision and a stopping criterion has been a major area of research in the past decade for non-robust systems, see e.g.~\cite{atva14,ensuring-BKLPW17,HM18} or~\cite{LICS23} for a unified framework subsuming MDPs and SGs with quantitative objectives.
We extend these new advances to RMDPs.

\paragraph{Efficient through Implicit Updates.}
While this extension is straightforward in theory using the explicit reduction to SGs, we provide an implicit approach, addressing a major technical challenge.
These implicit updates not only avoid the exponential space requirement, but for many practically relevant uncertainty sets can be computed in polynomial time.
Our experimental evaluation shows that in practice this results in several orders of magnitude improvements over approaches that explicitly construct the induced SG. 
In particular, we not only consider small, handcrafted examples from previous works, but also use MDPs from well-established benchmark sets~\cite{DBLP:conf/tacas/HartmannsKPQR19} and complement them with uncertainty sets, thus demonstrating that our approach scales to RMDPs with complex structure and millions of states.

\paragraph{Algorithm Overview.}
Firstly, we provide an efficient implicit algorithm applicable to a wide range of RMDP that converges in the limit but does not give guarantees on its result (\Cref{alg:implicit-vi}).
For RMDPs satisfying the \AssmConstSupp Assumption, we provide algorithms with guaranteed stopping criterion that are implicit and thus efficient for TR and SSP (\Cref{alg:3-main-algo}), and for long-run average reward (\Cref{alg:app-anytime} in \ifarxivelse{\cref{app:6-lics}}{\cite[App.~F]{GRIP-techreport}}).
For polytopic RMDPs violating \AssmConstSupp, we provide an implicit algorithm with stopping criterion for maximizing TR and SSP (see paragraph \enquote{Beyond \emph{Constant-Support}} at the end of \Cref{sec:5-stopping}).
For the other objectives (minimizing TR and LRA) in polytopic RMDPs, we either offer the aforementioned implicit and convergent \Cref{alg:implicit-vi} without reliable stopping criterion, or an algorithm with guaranteed stopping criterion that is explicit and hence less efficient (building on \cref{lemma:sg-rmdp}).
We highlight that if convergence in the limit is sufficient and no sound stopping criterion is required, \Cref{alg:implicit-vi} provides the most efficient solution that works for all considered objectives and uncertainty sets.

\paragraph{Summary.}
By deepening the understanding of the connection between RMDPs and SGs and exploiting recent advances on SG solving, we obtain a generic framework able to deal with more variants of uncertainty sets and objectives than the state-of-the-art.
At the same time, our approach provides a correct stopping criterion, ensuring reliability, and, through implicit computation, is orders of magnitude more efficient than existing, explicit solution approaches.

\paragraph{Related Work.}
In~\cite{WVAPZ23} (journal version~\cite{Wang24}) the authors provide a \emph{value iteration (VI)} solution to the LRA objective.
However, they provide no stopping criterion and restrict the graph structure of the RMDPs to be unichain. 
In \cite{ijcai-krish}, the authors provide a complexity analysis and policy iteration algorithm for the same problem.
They only consider polytopic uncertainty sets and explicitly construct the induced SG, resulting in the exponential space requirement.
Finally,~\cite{GPV23} proposes value iteration algorithms similar to ours, but leave providing a stopping criterion as an open question.
Moreover, in their practical implementation, they significantly restrict the uncertainty sets.
When restricting to \emph{interval} RMDPs, \texttt{PRISM}~\cite{prism} supports TR objectives and \texttt{IntervalMDP.jl}~\cite{MATHIESEN20241} supports reachability and discounted rewards, focussing on parallelization.
However, both tools offer no guarantees, employing an unsound stopping criterion.
Our experimental evaluation (\Cref{sec:eval}) compares with all mentioned related works except~\cite{GPV23} which does not provide an implementation.

\section{Preliminaries}\label{sec:2-prelims}

A \emph{probability distribution} over a finite or countable set $X$ is a mapping $d : X \to [0,1]$, such that $\sum_{x \in X} d(x) = 1$.
The set of all probability distributions on $X$ is $\Distributions(X)$.
We denote the support of a probability distribution $p \in \Distributions(X)$ by $\support(p) = \{x\in X\mid p(x)>0\}$.

\paragraph{Markov Decision Process.}
A (finite-state, discrete-time) \emph{Markov decision process (MDP)}, e.g.\ \cite{Puterman}, is a tuple $\MDP = (\States, \Actions, \mdptransitions, \rew)$, where
$\States$ is a finite set of \emph{states};
$\Actions$ is a finite set of \emph{actions};
$\mdptransitions \colon \States \times \Actions \rightharpoonup \Distributions(\States)$ is a (partial) \emph{transition function} mapping state-action pairs to a distribution over successor states;
and $\rew \colon \States \times \Actions \to \Naturals$ is a \emph{reward function} mapping state-action pairs to non-negative rewards (see \ifarxivelse{\cref{app:2-prelims}}{\cite[App.~A]{GRIP-techreport}} for a reduction from commonly occurring reward functions to natural numbers).
We denote by $\Actions(s) \neq \emptyset$ the \emph{available} actions of a state $s$ where $\mdptransitions(s,a)$ is defined. 

The semantics of MDPs are defined by means of \emph{policies} which are mappings from finite paths (also called histories) to distributions over actions, formally $(\States\times\Actions)^\ast \times \States \to \Distributions(\Actions)$. 
Intuitively, a path in an MDP initially consists only of some state $s$, and evolves under a policy $\policy$ by sampling an action $a$ according to the distribution $\policy(s)$, receiving the reward $\rew(s,a)$, and transitioning to the next state $s'$ sampled according to $\mdptransitions(s,a)$.
The process continues in this way ad infinitum, always using the whole path as input for the policy (e.g.\ $\policy(s a s')$ in the second step).
A policy is \emph{memoryless} if it only depends on the current state and \emph{deterministic} if it assigns probability 1 to a single action.

\paragraph{Robust MDP.}
A \emph{robust MDP (RMDP)} $\RMDP = (\States, \Actions, \rmdptransitions, \rew)$~\cite{NG05} is a generalization of MDPs, where instead of a fixed distribution the transition function yields an \emph{uncertainty set} $\rmdptransitions(s,a)$.
More formally, $\rmdptransitions \colon \States \times \Actions \to 2^{\Distributions(\States)}$, where $2^X$ denotes the set of all subsets of $X$.
We say that an RMDP is \emph{closed} if $\rmdptransitions(s, a)$ is closed for every state-action pair.
We employ the classical assumption that uncertainty sets are (s,a)-rectangular as in, e.g.~\cite{NG05,ijcai-krish,Wang24}, i.e.\ the uncertainty sets are independent for each state-action pair.
Intuitively, there is one additional step in the evolution of an RMDP: Before the successor state is sampled, the environment chooses a distribution from the uncertainty set $\rmdptransitions(s,a)$ according to an \emph{environment policy} $\tau$, i.e. $\tau(s_0a_0\dots s_na_n)\in\rmdptransitions(s_n,a_n)$.
Formally, a pair of policies $\polOne$ for the agent and $\polTwo$ for the environment induces a probability measure $\probability^{\polOne,\polTwo}_{\RMDP}$ over infinite paths in an RMDP $\RMDP$, see \ifarxivelse{\cref{app:2-prelims}}{\cite[App.~A]{GRIP-techreport}} for details.
We denote by $\expectation^{\polOne,\polTwo}_{\RMDP,s}$ the expectation under this probability measure when using $s$ as initial state.

\paragraph{Objectives.}
Objectives define a mapping from infinite paths $\rho = s_0 a_0 s_1 a_1 \cdots$ to their payoff.
We consider undiscounted total reward (TR) as well as long-run average reward (LRA)~\cite[Chps.\ 7 \& 8]{Puterman}, where 
\[\totalReward(\rho) = {\sum}_{t=0}^\infty \rew(s_t,a_t) \text{ and } \]
\[ \lraReward(\rho) = {\liminf}_{n\to\infty} \frac 1 n {\sum}_{t=0}^{n-1} \rew(s_t,a_t).\]
(Further details can be found in \ifarxivelse{\cref{app:2-prelims}}{\cite[App.~A]{GRIP-techreport}}, in particular how \emph{stochastic shortest path (SSP)} is a variant of TR.)
Using $\payoff \in \{ \totalReward, \lraReward \}$, the \emph{value} of a state $s$ in an RMDP $\RMDP$ under policies $(\polOne,\polTwo)$ is
\[
\val_{\RMDP}^{\polOne,\polTwo}(s) = \expectation^{\polOne,\polTwo}_{\RMDP,s}[\payoff]
.\]

\paragraph{Problem Statement.}
The \emph{optimal value of an RMDP} $\RMDP$ is the value under the best possible policy of the agent in whichever instantiation the environment chooses.
We consider both the problems of maximizing or minimizing the payoff (interpreting rewards as costs when minimizing), defined by
\[\valmax_{\RMDP}(s) = \sup_{\polOne} \inf_{\polTwo} \val_{\RMDP}^{\polOne,\polTwo}(s) \text{ and } \valmin_{\RMDP}(s) = \inf_{\polOne} \sup_{\polTwo} \val_{\RMDP}^{\polOne,\polTwo}(s)\]
We often use the shorthands $\opt \in \{\max,\min\}$ and $\valopt$ to talk about both maximizing and minimizing objectives.
For simplicity, we write $\overline{\opt}=\max$ if $\opt=\min$ and $\overline{\opt}=\min$ if $\opt=\max$.

\paragraph{Uncertainty Set Variants.} 
Throughout the paper, we mostly distinguish two different variants of uncertainty sets: polytopic and arbitrary.
\begin{definition}
We say an RMDP $\RMDP = (\States, \Actions, \rmdptransitions, \rew)$ is \emph{polytopic} if for each state-action pair $(s,a)$ the uncertainty set $\rmdptransitions(s,a) \subseteq \Reals^{\abs{\States}}$ is a \emph{polytope}.
\end{definition}
A polytope is the convex hull of finitely many points $\Reals^{\abs{\States}}$ or, equivalently, it is the intersection of a finite family of closed half-spaces~\cite{grunbaum}.
Thus, a polytope classically is given in $\mathcal{V}$-representation (vertex) or $\mathcal{H}$-representation (half-space).
Computing the $\mathcal{V}$-representation from $\mathcal{H}$-representation can result in exponentially many vertices, e.g.\ already when an interval on every probability is given~\cite[Lem.~6]{SVA06}.
Many uncertainty set variants are polytopic, for example $\lone$-balls around a probability distribution.
However, e.g. $\ltwo$-balls are not polytopes, which brings us to the second variant:
In \emph{arbitrary} RMDPs, the uncertainty sets are not restricted at all.
For our algorithmic results, we assume that the uncertainty sets are closed and satisfy the \AssmConstSupp Assumption, explained in \cref{sec:3-connect-SG}.

\paragraph{RMDPs Semantics.}
There are several semantics for RMDP, related to how probability distributions are chosen from the uncertainty set, see e.g.~\cite{NG05,Iyengar05}.
The main questions are (i)~\enquote{Is the environment choosing the distribution allowed to use memory (time-varying) or not (stationary)?}, and (ii)~\enquote{Is the environment an ally (best-case) or an antagonist (worst-case)?}.
We prove that for a large class of RMDPs, (i) is irrelevant, as the environment has equal power in both cases (see \cref{cor:3-sem-polytopic,cor:3-sem-general}). As to (ii), our solutions for the harder worst-case are also applicable to the best-case (see \ifarxivelse{\cref{app:2-prelims}}{\cite[App.~A]{GRIP-techreport}} for further discussion).
Note that if we assume a stochastic environment instead of an ally or antagonistic environment, the RMDP can be reduced to a standard MDP by collapsing the stochasticity of the environment and the system into a single step.

\paragraph{Turn-based Stochastic Game.}
Our solution approach utilizes a reduction to \emph{turn-based stochastic games (SG)}.
A finite-action SG~\cite{DBLP:journals/iandc/Condon92,DBLP:journals/fmsd/ChenFKPS13} is a tuple $\SG = (\States, \Actions, \mdptransitions,\rew)$ where all components are as for MDP, but with an additional partitioning of $\States$ into \emph{max-states} $\Smax$ and \emph{min-states} $\Smin$.
Additionally, we define \emph{infinite-action SGs} as SGs where we allow the set of actions $\Actions$ to be infinite.
We distinguish between (finite) max-paths and min-paths depending on whether the last state in the path is a max- or a min-state.
The semantics of a SG are determined by a pair of policies $\polOne,\polTwo$, one for each player, with the max-player deciding the next action for a max-path and the min-player for min-paths.
These induce a probability measure over infinite paths, which is used to compute the expectation of the given objective.
Using $\val_{\SG}^{\polOne,\polTwo}(s) = \expectation^{\polOne,\polTwo}_{\SG,s}[\payoff]$, the value of an SG $\SG$ is defined analogously to RMDPs, i.e.\ $\valmax_{\SG}(s) = \sup_{\polOne} \inf_{\polTwo} \val_{\SG}^{\polOne,\polTwo}(s)$ and $\valmin_{\SG}(s) = \inf_{\polOne} \sup_{\polTwo} \val_{\SG}^{\polOne,\polTwo}(s)$

\section{Connection between RMDP and SG}\label{sec:3-connect-SG}

Since the environment in an RMDP acts as an antagonist to the agent, there is a natural correspondence between RMDP and SG, as noted in e.g.~\cite{NG05}.
Intuitively, we can add a second player who chooses the instance of the RMDP.
This player then chooses one action from the uncertainty set at every state.
Thus, we alternate between original MDP state $s$ where the optimizing player chooses action $a$, and a new antagonistic state $s^a$ where the environment selects some probability distribution from the uncertainty set, see, e.g., \cite{Iyengar05,NG05,ijcai-krish}.
For the formal definition, recall that $\opt\in\{\max,\min\}$ denotes the optimization direction of the agent's objective and $\overline{\opt}$ is the environment's optimization direction, i.e.\ the \enquote{inverse} of $\opt$.

\begin{definition}[Induced SG  for arbitrary RMDP]\label{def:sg-rmdp}
	For an arbitrary RMDP $\RMDP=(\States,\Actions,\rmdptransitions,\rew)$ with an $\opt$-objective, its induced SG $\SGinduced=(\States^{\SG},\Actions^{\SG},\mdptransitions^{\SG},\rew^{\SG})$ is defined as follows:
	\begin{compactitem}
		\item $\States^{\SG} = \States^{\SG}_\opt \cup \States^{\SG}_{\overline{\opt}}$ where
		\begin{compactitem}
			\item $\States^{\SG}_\opt = \States$, and
			\item $\States^{\SG}_{\overline{\opt}} = \{ s^a \mid s\in\States, a\in\Actions(s) \}$;
		\end{compactitem}
		\item for agent states $s\in\States^{\SG}_\opt$, we have
		\begin{compactitem}
			\item $\Actions^{\SG}(s) = \Actions(s)$,
			\item $\mdptransitions^{\SG}(s,a)=\{s^a\mapsto 1\}$ for $a\in\Actions^{\SG}(s)$, i.e.\ it surely transitions to the newly added environment state, and
			\item $\rew^{\SG}(s,a)= \rew(s,a)$ for $a\in\Actions^{\SG}(s)$;
		\end{compactitem}
		\item for environment states $s^a \in \States^{\SG}_{\overline{\opt}}$ we have
		\begin{compactitem}
			\item $\Actions^{\SG}(s^a) = \rmdptransitions(s,a)$, i.e.\ the uncertainty set,
			\item $\mdptransitions^{\SG}(s^a,\mdptransitions)=\mdptransitions$ for $\mdptransitions\in \Actions^{\SG}(s^a)$, and
			\item $\rew^{\SG}(s^a,\mdptransitions) = \rew_n(s,a)$ for $\mdptransitions\in \Actions^{\SG}(s^a)$, where $\rew_n$ is an (objective-dependent) \emph{neutral reward}.
		\end{compactitem}
	\end{compactitem}
\end{definition}

\noindent
Intuitively, $\rew_n$ is chosen in such a way that removing all $\overline{\opt}$-states does not affect the $\payoff$ of a path, i.e. for a an infinite path in the SG $\infinitepath=s_0a_0s_1a_1\dots$ with $s_0\in\States^{\SG}_\opt$ we have $\payoff(\infinitepath)=\payoff(s_0a_0s_2a_2\dots s_{2k}a_{2k}\dots)$.
For $\totalReward$ objectives, the neutral reward is 0 and
for $\lraReward$ objectives, we define $\rew_n(s^a)=r(s,a)$ for all $s^a\in\States^{\SG}_{\overline{\opt}}$.	
			
In general, this reduction results in an infinite-action SG, since $\Actions^{\SG}(s^a) = \rmdptransitions(s,a)$, and the uncertainty set commonly contains uncountably many distributions.
However, for RMDPs with polytopic uncertainty sets, we can utilize the fact that the polytope can be captured by randomizing over its finitely many corner points.
That is, each action inside the polytope can be simulated by a probabilistic policy randomly choosing between actions corresponding to corners of the polytope.
This allows for a finite representation:

\begin{definition}[Induced SG  for polytopic RMDP]\label{def:sg-poly}
	For a polytopic RMDP $\RMDP=(\States,\Actions,\rmdptransitions,\rew)$ with $C(s,a)=\{\mdptransitions^{s,a}_1,\dots,\mdptransitions^{s,a}_k\}\subseteq\rmdptransitions(s,a)$ denoting the corner points of the polytopic confidence region for $\rmdptransitions(s,a)$, its induced SG $\SGinducedpoly=(\States^{\SG},\Actions^{\SG},\mdptransitions^{\SG},\rew^{\SG})$ can be obtained as in in \cref{def:sg-rmdp}, only changing the available actions for environment states $s^a \in \States^{\SG}_{\overline{\opt}}$ as $\Actions^{\SG}(s^a) = \{a^s_1,\dots,a^s_{|C(s,a)|}\}$, i.e.\ the corner points of the polytope.
\end{definition}

\noindent
We prove the correctness of both reductions for all considered objectives, uncertainty sets, and semantics.
Our proof is similar to~\cite[Sec.\ 3.2]{ijcai-krish}; the novelty is the addition of TR objectives and the formalization of the infinite-action reduction, where the latter requires several changes.

\begin{restatable}[Connection to SG -- Proof in \ifarxivelse{\cref{app:2-SG-connect}}{\cite[App.~C]{GRIP-techreport}}]{theorem}{sgrmdp}\label{lemma:sg-rmdp}
	Let $\RMDP$ be an arbitrary RMDP and $\SGinduced$ its induced \emph{infinite-action} SG (\cref{def:sg-rmdp}).
	Then for all $s\in\States$ and any TR or LRA objective
	$
	\val_{\RMDP}^\opt(s) = \val_{\SGinduced}^\opt(s)
	$.
	Moreover, if $\RMDP$ is polytopic and $\SGinduced'$ its induced \emph{finite-action} SG (\cref{def:sg-poly}), 
	$
	\val_{\SGinduced}^\opt(s) = \val_{\SGinduced'}^\opt(s)
	$.
\end{restatable}

\paragraph{Implications of the Polytopic Reduction.}
Using this observation, we can immediately generalize~\cite[Cor.\ 1]{ijcai-krish} to undiscounted reward on polytopic RMDP:
In finite-action SGs, memoryless deterministic policies are sufficient for optimizing TR objectives in SGs \cite{sgchapter}.
With \cref{lemma:sg-rmdp}, we thus get that there is an optimal memoryless environment policy in RMDPs that is attained at the vertices of the polytope.
\begin{corollary}[Environment Policy Semantics -- Polytopic]\label{cor:3-sem-polytopic}
	In polytopic RMDPs with TR objectives, both agent and environment have deterministic memoryless optimal policies.
	Thus, stationary and time-varying semantics coincide.
\end{corollary}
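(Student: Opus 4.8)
The plan is to reduce the statement to the corresponding fact about the \emph{finite}-action induced SG $\SGinduced'$ of \cref{def:sg-poly} and then pull it back to the RMDP. First I would invoke \cref{lemma:sg-rmdp}, which for a polytopic $\RMDP$ and any TR objective gives $\val_{\RMDP}^\opt(s) = \val_{\SGinduced'}^\opt(s)$ for all $s$. Since $\SGinduced'$ is a finite-action SG and the rewards are non-negative (as $\rew$ maps into $\Naturals$), I can apply the classical SG result that in finite-action SGs with TR objectives both players admit deterministic memoryless (MD) optimal policies \cite{sgchapter}. In particular, $\SGinduced'$ is determined and its value is attained by an MD policy pair $(\polOne^\ast,\polTwo^\ast)$; this covers both optimization directions, since $\opt\in\{\max,\min\}$ merely swaps which player is the maximizer.

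Second, I would translate these MD policies back into $\RMDP$. The agent states of $\SGinduced'$ are exactly $\States^{\SG}_\opt = \States$, so the agent's MD strategy $\polOne^\ast$ is literally a deterministic memoryless agent policy for $\RMDP$. Each environment state $s^a\in\States^{\SG}_{\overline{\opt}}$ encodes precisely the last state-action pair $(s,a)$, and $\polTwo^\ast$ deterministically and memorylessly selects one available action $a^s_i$, i.e.\ one corner point $\mdptransitions^{s,a}_i\in C(s,a)$ of the polytope $\rmdptransitions(s,a)$. This yields a deterministic environment policy for $\RMDP$ that depends only on $(s,a)$ and always picks a \emph{vertex} of the uncertainty set, hence a stationary one. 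The crucial point, which I would extract from the value-preserving correspondence underlying the proof of \cref{lemma:sg-rmdp}, is that this translation preserves values at the level of individual policy pairs: the RMDP policies obtained from $(\polOne^\ast,\polTwo^\ast)$ realize $\val_{\RMDP}^\opt$, so both are optimal in $\RMDP$.

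Finally, the coincidence of stationary and time-varying semantics follows from optimality within the memoryless class. Restricting the environment to stationary policies only weakens the adversary, so the resulting value differs from the time-varying value in the agent's favor, giving one inequality. Conversely, the stationary environment policy obtained from $\polTwo^\ast$ already secures the unrestricted optimum $\val_{\RMDP}^\opt$ against every (time-varying) agent policy, which yields the reverse inequality; hence the two values coincide.

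The main obstacle is the second step: I must transfer the \emph{existence of optimal policies}, not merely the equality of game values, across the reduction. This requires that the correspondence behind \cref{lemma:sg-rmdp} is value-preserving for individual policy pairs and maps memoryless strategies to stationary ones --- which is exactly where it matters that environment SG states $s^a$ record only $(s,a)$ and that the finite-action reduction lets the environment optimum sit at a vertex of the polytope. A secondary point to check is that the invoked SG theorem for TR is applicable, relying on the non-negativity of $\rew$.
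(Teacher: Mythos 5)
Your proposal is correct and follows essentially the same route as the paper: the paper derives \cref{cor:3-sem-polytopic} by combining \cref{lemma:sg-rmdp} with the classical fact that finite-action SGs admit memoryless deterministic optimal policies for TR objectives \cite{sgchapter}, exactly as you do. Your additional elaboration --- pulling the MD policy pair back through the policy surjections ($\surjPolicyA$, $\surjPolicyE$) and noting that value preservation holds per policy pair and that the environment optimum sits at a polytope vertex --- is precisely what the paper's proof of \cref{lemma:sg-rmdp} (Steps~2--5 in its appendix) supplies, so it fills in details the paper leaves implicit rather than taking a different path.
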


\paragraph{Complications for Arbitrary Uncertainty Sets.}
Finite-action SG have many useful properties, e.g.\ 
existence of memoryless deterministic optimal policies.
For RMDPs with arbitrary uncertainty sets, this is not the case in general (see also~\cite[Prop.\ 3.2]{GPV23}): 

\begin{example}[Optimal Policy Need Not Exist]\label{ex:opt-pol}
	Consider the RMDP (in fact, a Robust Markov chain) in \Cref{fig:non-constant-support}.
	The only action in state $s_\text{init}$ has reward 0 and uncertainty set given by $0\leq q=p^2$. 
	The other states have values $\val(s_\text{goal})=1$ and $\val(s_\text{sink})=0$. 
	Then the value of $s_\text{init}$ is $V(s_\text{init}) = 0 $ if $p=0$ and $\frac{1}{1+p}$ otherwise.
	This function is discontinuous at $p=0$.
	When the environment is maximizing (i.e.\ the agent is minimizing costs), there is no optimal environment policy; the supremum over all environment policies is 1, but it cannot be attained.
	Even restricting to closed convex uncertainty sets is not sufficient: 
	Intuitively, convex combinations can only increase $q$ in relation to $p$, and thus decrease the value.
\end{example}

\begin{figure}\centering
	\begin{tikzpicture}[->, >=stealth', shorten >=1pt, auto, node distance=3cm, semithick]
		\tikzstyle{every state}=[fill=white,draw=black,text=black]
		
		\node[state] (A) {$s_\text{init}$};
		\node[state] (B) [ left of=A] {$s_\text{goal}$};
		\node[state] (C) [ right of=A] {$s_\text{sink}$};
		
		\path (A) edge [loop above] node {$1-p-q$} (A)
		edge [] node [above] {$p$} (B)
		edge [] node [above] {$q$} (C);
	\end{tikzpicture}\\
	\caption{RMDP without optimal environment-policy.\label{fig:non-constant-support}}
\end{figure}
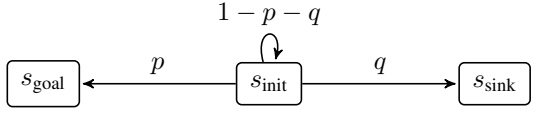

\paragraph{Sufficient Assumptions.}
So far, we have not put any restrictions on the uncertainty sets; \cref{lemma:sg-rmdp} does not even require them to be closed or convex. 
However, for our approach we require that optimal policies exist for both agent and environment.
To this end, we introduce a sufficient assumption, which will allow us to provide anytime algorithms with a stopping criterion.

\noindent\textbf{\AssmConstSupp Assumption:}
\textit{For all state-action pairs $(s,a)$, two distributions in the same uncertainty set $\mdptransitions_1, \mdptransitions_2 \in \rmdptransitions(s,a)$ have the same support, i.e.\ $\support(\mdptransitions_1)=\support(\mdptransitions_2)$.}
\smallskip

\noindent Intuitively, this requires knowing all possible successors of each state-action pair, which is realistic in cases where the existence of transitions is certain and only their probability is unknown.
In the context of robust systems, where typically it is known how the system behaves, this a natural assumption and is also called \enquote{positive uncertainty}~\cite{DBLP:conf/fossacs/ChatterjeeSH08}.
In statistical applications, confidence intervals are derived by sampling an unknown system.
Here, knowledge of the transition structure often is either assumed, also called \enquote{grey-box knowledge}~\cite{AKW19}, or can be ensured by gathering enough samples as many distance-based uncertainty sets, e.g. those in~\cite[Sec.\ 5.3]{Wang24}, satisfy this assumption either naturally or for sufficiently small distances, see \ifarxivelse{\cref{app:2-prelims}}{\cite[App.~A]{GRIP-techreport}}.
A weaker assumption is to require only knowledge of the minimum positive transition probability~\cite{DBLP:journals/tocl/DacaHKP17}.
For this, we prove at the end of \ifarxivelse{\cref{app:2-SG-connect}}{\cite[App.~C]{GRIP-techreport}} how it reduces to the \AssmConstSupp case and in the main body focus only on the latter for readability.
We write \emph{\ccs} RMDP to denote a closed RMDP satisfying the assumption.
Note that polytopic RMDP and  \ccs RMDP are incomparable: \ccs RMDP may be non-polytopic and polytopic RMDP may violate the \AssmConstSupp Assumption.

\begin{restatable}[Optimal Policies under \AssmConstSupp]{theorem}{constsupp}\label{lemma:const-supp}
	In every \ccs RMDP, optimal policies exist, formally
	$\sup_{\polOne} \inf_{\polTwo} \val_{\RMDP}^{\polOne,\polTwo}(s) =\max_{\polOne} \min_{\polTwo} \val_{\RMDP}^{\polOne,\polTwo}(s)$, and analogously for minimization objectives.
	Moreover, these policies are memoryless deterministic.
\end{restatable}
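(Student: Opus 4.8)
The plan is to transfer the statement to the induced infinite-action SG and then exploit that the \AssmConstSupp Assumption makes this game behave, as far as optimality is concerned, like one with compact action sets and continuous payoffs. By \cref{lemma:sg-rmdp} we have $\val_{\RMDP}^\opt(s) = \val_{\SGinduced}^\opt(s)$ for the TR and LRA objectives, so it suffices to produce memoryless deterministic optimal policies in $\SGinduced$ and to verify the min-max identity there. The agent's states carry finite action sets, so the only genuine difficulty lies at the environment states $s^a$, whose action set is the (possibly uncountable) uncertainty set $\rmdptransitions(s,a)$. The first thing I would record is a structural fact: since $\RMDP$ is a \ccs RMDP, each $\rmdptransitions(s,a)$ is closed and contained in the probability simplex over $\States$, hence compact, and by \AssmConstSupp all its members share a common support $D(s,a) \eqdef \support(p)$. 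Consequently the edge set of $\SGinduced$ is independent of the environment's choices, so the reachability structure -- in particular the (maximal) end-component decomposition and the set of states with finite objective value -- is the same under every environment policy. This is precisely the stability that fails in \cref{ex:opt-pol}.

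Next I would prove the attainment lemma underlying everything. For any value vector $V \colon \States \to \Reals$ that is finite on $D(s,a)$, the map $p \mapsto \sum_{s'} p(s')\,V(s')$ is linear, hence continuous, on the compact set $\rmdptransitions(s,a)$; therefore the inner optimum $\overline{\opt}_{p \in \rmdptransitions(s,a)} \sum_{s'} p(s')\,V(s')$ is attained by some $p^*_{s,a} \in \rmdptransitions(s,a)$. Thus the robust Bellman operator $\bellmanupdate$ is well defined with attained optimizers on both sides (the agent's outer $\opt$ ranging over a finite set), and from any solution $V^*$ of the Bellman optimality equation I can read off memoryless deterministic policies $\polOne^*$ for the agent and $\polTwo^*$ for the environment by selecting attaining actions.

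It then remains to show that such a $V^*$ exists and equals the game value, and that the greedily extracted policies realize it. For LRA the values are bounded by the maximal reward, and because the end-component structure is policy-independent the game can be solved by the standard SG fixed-point analysis carried out on this fixed graph, yielding $V^*$ together with $\polOne^*,\polTwo^*$. For TR I would first peel off the (policy-independent) set of states with infinite value and then argue on the remaining proper sub-game, where the same fixed-point reasoning applies. In both cases $\polOne^*$ guarantees $\val_{\SGinduced}^{\polOne^*,\polTwo} \ge V^*$ for every $\polTwo$ while $\polTwo^*$ guarantees $\val_{\SGinduced}^{\polOne,\polTwo^*} \le V^*$ for every $\polOne$ (taking $\opt=\max$; the other direction is symmetric). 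Combining these with weak duality sandwiches $\sup_{\polOne}\inf_{\polTwo}\val = V^* = \inf_{\polTwo}\sup_{\polOne}\val$, with both extrema attained by the memoryless deterministic $\polOne^*,\polTwo^*$; transferring back through \cref{lemma:sg-rmdp} yields the claim, and the minimization statement follows by symmetry.

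The main obstacle I anticipate is the TR case: undiscounted total reward is neither bounded nor gives a contractive Bellman operator, so I cannot simply invoke a fixed-point theorem to obtain a solution that coincides with the true game value. The work is in showing that the Bellman optimality equation is solved by the actual value and that the memoryless deterministic policies extracted from it attain it rather than merely fix it. This is exactly where the \AssmConstSupp Assumption earns its keep, since it freezes the set of infinite-value states and the recurrent/transient decomposition and thereby lets me reduce the finite-value part to an SSP-style proper sub-game on which classical existence results for memoryless deterministic optimal strategies apply.
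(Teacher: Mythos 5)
Your route is genuinely different from the paper's, and it founders on a concrete step. The paper never touches the Bellman optimality equation: it proves (\cref{lemma:v-cont}) that for a \ccs RMDP the map $\polTwo \mapsto \val_{\RMDP}^{\polOne,\polTwo}(s)$ is \emph{continuous} (for TR via a perturbation bound on the fundamental matrix $(I-Q)^{-1}$ of the induced absorbing chain, for LRA via the policy-independent MEC decomposition and continuity results for communicating models), then invokes compactness of the closed uncertainty sets to get attainment of the inner optimum by the extreme value theorem, and finally downgrades to memoryless deterministic policies by a separate, elementary lemma (\cref{lemma:inf-act-md}). Your plan instead goes through the induced SG, a solution $V^*$ of the robust Bellman equation, greedy extraction of MD policies, and a weak-duality sandwich.

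The gap is exactly where you locate the ``main obstacle,'' and your proposed fix does not close it as stated. For maximizing undiscounted TR, a policy that is greedy with respect to $V^*$ merely \emph{conserves} the value; it need not \emph{attain} it. The standard counterexample is already in the paper (\cref{ex:non-conv-ub}, \cref{fig:5-full}): in state $q$ both $\mathit{stay}$ and $\mathit{exit}$ satisfy the optimality equation with $V^*(q)=1$, yet the greedy choice $\mathit{stay}$ yields payoff $0$. Crucially, this happens in the \emph{finite-value} part of the game, so peeling off the infinite-value states (which \AssmConstSupp does make policy-independent) does \emph{not} leave ``an SSP-style proper sub-game'': zero-reward end components survive, improper policies exist there with finite (zero) payoff, and the classical SSP existence results you want to cite (which require either properness of all policies or that improper policies have infinite cost) do not apply. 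To make the sub-game proper one must quotient out the zero-reward ECs and prove that this preserves values and admits a policy mapping back to memoryless deterministic policies of the original RMDP --- i.e., essentially rebuild the paper's $\COLLAPSE$ construction and \cref{lem:5-collapse}, which is the real content and is absent from your argument. A similar issue affects your LRA case: for multichain models there is no single fixed-point equation with greedy extraction; one needs the MEC-wise reduction (solve the game inside each MEC, then an SSP-like game on the quotient). So the skeleton can be made to work under \AssmConstSupp, but as written the decisive step is an assertion, and the assertion in its literal form (``the finite-value part is a proper sub-game'') is false; the paper's continuity-plus-compactness argument is precisely how it sidesteps this difficulty.
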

\begin{proof}[Proof Sketch --- Full Proof in \ifarxivelse{\cref{app:2-SG-connect}}{\cite[App.~C]{GRIP-techreport}}]
	We first show value functions are continuous with respect to changes of the probabilities in the RMDP except for cases like \cref{ex:opt-pol} where the support of the transition changes the set of reachable states (\Cref{lemma:v-cont}).
	Intuitively, small changes to the probabilities can only have limited impact on the behaviour of an MDP, unless the change in probabilities adds or removes transitions, thereby potentially changing the fundamental long-term behaviour of the MDP, such as certain states being (un-)reachable from others.
	Thus, if the support for all distributions is constant, all value functions are continuous w.r.t the environment policy $\polTwo$.
	Since all uncertainty sets (i.e. the sets of possible values of $\polTwo$) are closed, the value function must admit optimal policies, as every continuous function attains its optimum on a closed domain.
	Then, existence of memoryless and deterministic optimal policies follow from basic observations on the objectives (\ifarxivelse{\Cref{lemma:inf-act-md}}{\Cref{lemma:inf-act-md} in \cite[App.~C]{GRIP-techreport}}).
\end{proof}

\begin{corollary}[Environment Policy Semantics -- Arbitrary]\label{cor:3-sem-general}
	In \ccs RMDPs, stationary and time-varying semantics coincide.
\end{corollary}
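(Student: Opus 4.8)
The plan is to derive the corollary as an essentially immediate consequence of \Cref{lemma:const-supp}. The crucial observation is that, for the environment, a \emph{memoryless} policy is exactly a \emph{stationary} one: a memoryless environment policy fixes a single distribution $\polTwo(s,a)\in\rmdptransitions(s,a)$ per state-action pair independently of the history, which is precisely the stationary (time-invariant) semantics, whereas a general history-dependent $\polTwo$ realizes the time-varying semantics. Hence it suffices to show that restricting the environment to stationary policies leaves the optimal value unchanged, i.e.\ that in $\sup_{\polOne}\inf_{\polTwo}\val_{\RMDP}^{\polOne,\polTwo}(s)$ (and the dual inf--sup) the environment's optimization may be taken over stationary policies only.

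First I would record the trivial inequality coming from the fact that stationary policies form a subset of time-varying ones. In the maximizing case, where the environment minimizes, for every fixed agent policy $\polOne$ we have $\inf_{\polTwo\text{ stat.}}\val_{\RMDP}^{\polOne,\polTwo}(s)\ge\inf_{\polTwo}\val_{\RMDP}^{\polOne,\polTwo}(s)$, and taking $\sup_{\polOne}$ shows the stationary-semantics value is at least $\valmax_{\RMDP}(s)$; the minimizing case is symmetric and gives the opposite inequality for $\valmin$. For the reverse direction I would invoke \Cref{lemma:const-supp}, which guarantees that the relevant sup--inf is attained by memoryless deterministic policies for both players, and thereby that the induced game is determined, so both orders of optimization agree: $\valmax_{\RMDP}(s)=\inf_{\polTwo\text{ mem.}}\sup_{\polOne}\val_{\RMDP}^{\polOne,\polTwo}(s)$. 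The minimax inequality $\sup_{\polOne}\inf_{\polTwo\text{ stat.}}\le\inf_{\polTwo\text{ stat.}}\sup_{\polOne}$ then sandwiches the stationary-semantics value between $\valmax_{\RMDP}(s)$ and itself, closing the remaining inequality; the argument for $\valmin$ is again dual.

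The only genuine subtlety --- and the step I would treat most carefully --- is this quantifier-order point: passing from \enquote{an optimal stationary environment policy exists} to \enquote{restricting the environment to stationary policies does not change the game value} requires that the environment's best response against an arbitrary, possibly history-dependent, agent can be realized statically. This is not automatic from the mere existence of an optimal policy \emph{pair}; it follows from the memoryless \emph{determinacy} of the induced game established in \Cref{lemma:const-supp}, which is precisely why the \AssmConstSupp assumption (guaranteeing optimal memoryless deterministic policies exist at all) is the hypothesis under which the corollary holds, mirroring how \Cref{cor:3-sem-polytopic} rests on the finite-action determinacy of the polytopic induced SG.
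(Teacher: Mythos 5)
Your proof is correct and takes essentially the same approach as the paper: there, \Cref{cor:3-sem-general} is stated as an immediate consequence of \Cref{lemma:const-supp}, identifying memoryless environment policies with the stationary semantics, so that the existence of optimal memoryless deterministic policies for both players makes the stationary and time-varying values coincide. Your minimax sandwich merely spells out the quantifier-order step the paper leaves implicit, and the \enquote{memoryless determinacy} you invoke is indeed what \Cref{lemma:const-supp} supplies --- its proof constructs the environment policy as the greedy choice with respect to $\valmax_{\RMDP}$ at \emph{every} state-action pair, which gives exactly the uniform optimality (saddle-point) property your reverse inequality needs.
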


\section{Implicit Bellman Updates}\label{sec:4-implicit}

We now discuss how to solve RMDPs with value iteration (VI) in an \emph{implicit} way.
This means that we avoid constructing the induced SG explicitly, and the algorithm works directly on the RMDP.
The motivation for this is twofold:
Firstly, in polytopic RMDPs (given in $\mathcal{H}$-representation), the induced finite SG (and thus any approach solving it explicitly) requires exponential space.
Secondly and more importantly, in general the induced SG has an \emph{uncountably infinite} number of actions, and thus cannot be constructed explicitly at all.

\paragraph{Bellman Updates in SGs.}
VI centrally relies on the \emph{Bellman update}.
For example, for SGs with TR objective, we start from a lower bound $\lb_0$ on the value (e.g.\ $\lb_0(s) = 0$ for all states $s$) and iteratively apply the update
\begin{equation}\label{eq:3-bellman}
	\lb_{i+1}(s) = \opt_{a\in\Actions(s)} \rew(s,a) + {\sum}_{s' \in \States} \mdptransitions(s,a)(s') \cdot \lb_i(s'),
\end{equation}
where $\opt = \max$ if $s\in\Smax$ and $\opt = \min$ otherwise~\cite{DBLP:journals/fmsd/ChenFKPS13}.
Intuitively, this performs one step in the SG, back-propagating all rewards.
In the limit, this sequence of estimates converges for TR objectives~\cite{DBLP:journals/fmsd/ChenFKPS13}.
Moreover, $\frac{1}{i} \lb_i$ converges to the LRA value~\cite[Lem.\ 8]{LICS23-arxiv}.

\paragraph{Bellman Updates in RMDPs -- Robust VI.}
Observe that in the induced SG, for any action that the agent chooses, the game surely transitions to the environment state corresponding to the chosen state-action pair, and it is the environment's turn to pick the uncertainty set.
We can aggregate these two steps to one update in the RMDP by
\begin{multline}\label{eq:3-bellman-conv}
	\lb_{i+1}(s) = \opt_{a\in\Actions(s)} \big( \rew(s,a) + {} \\ \overline{\opt}_{\mdptransitions(s,a)\in\rmdptransitions(s,a)} {\sum}_{s' \in \States} \mdptransitions(s,a)(s') \cdot \lb_i(s') \big)
\end{multline}

\begin{restatable}[Robust VI convergence -- Proof in \ifarxivelse{\cref{app:4-implicit}}{\cite[App.~D]{GRIP-techreport}}]{theorem}{VIconv}\label{lem:3-VI-conv}
	Let $\RMDP$ be a polytopic or \ccs RMDP.
	For a TR objective, 
	the sequence $L_{i}$ obtained from \cref{eq:3-bellman-conv} converges to the value in the limit, i.e.\ for all $s\in\States$ it holds that $\lim_{i\to\infty} \lb_i(s) = \val_{\RMDP}^{\opt}(s)$. 
	Similarly, for an LRA objective, $\lim_{i\to\infty} \frac{\lb_i(s)}{i} = \val_{\RMDP}^{\opt}(s)$. 
\end{restatable}

\cref{eq:3-bellman-conv} generalizes robust VI for discounted reward as in, e.g.~\cite{NG05}.
Unlike~\cite{Wang24}, we impose no restrictions on the structure of the RMDP.
A result similar to this theorem is~\cite[Thm.\ 5.2]{GPV23}, which shows convergence for RMDPs with \enquote{definable} uncertainty and LRA objective.

\paragraph{Implicit Updates.}
By itself, \cref{eq:3-bellman-conv} is only of theoretical value for now, as $\rmdptransitions(s, a)$ might be uncountably infinite.
The key to an effective algorithm is the ability to evaluate the inner expression in \cref{eq:3-bellman-conv}.
This only requires optimizing a linear function over the uncertainty set (generalizing the already quite generic \enquote{definable} assumption~\cite[Def.~4.11]{GPV23}).
Moreover, this optimization can be performed efficiently for many uncertainty sets, of which we provide a (non-exhaustive) list.
\begin{restatable}[Efficiency of the Implicit Update]{lemma}{effimplupdate}\label{lemma:cov-opt-poly}
	Let $\rmdptransitions(s,a)$ be an uncertainty set given as 
	(i)~polytope in $\mathcal{H}$-representation or (ii)~$\mathcal{V}$-representation, or
	(iii)~(weighted) $\lp$-norm-balls around a probability distribution where $p\in\Naturals\cup\{\infty\}$.
	Then,
	\begin{equation*}
		\overline{\opt}_{\mdptransitions(s,a)\in\rmdptransitions(s,a)} {\sum}_{s' \in \States} \mdptransitions(s,a)(s') \cdot \lb_i(s')
	\end{equation*}
	can be evaluated with a number of operations that is polynomial w.r.t.\ $|S|$ and the representation of $\rmdptransitions(s,a)$.
\end{restatable}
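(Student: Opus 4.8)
The plan is to observe that for a \emph{fixed} value vector $\lb_i$, the inner expression is a \emph{linear} functional of the chosen distribution, so the whole task reduces to optimizing a linear function over the uncertainty set. Writing $\langle \mdptransitions, \lb_i\rangle = \sum_{s'\in\States} \mdptransitions(s')\cdot\lb_i(s')$, we must compute $\overline{\opt}_{\mdptransitions\in\rmdptransitions(s,a)} \langle \mdptransitions, \lb_i\rangle$. In all three cases $\rmdptransitions(s,a)$ is convex and the objective is linear, so this is a convex program whose optimum lies on the boundary (indeed at an extreme point in the polytopic cases), regardless of whether $\overline{\opt}$ is $\max$ or $\min$.

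First I would dispatch the two polytopic cases. For the $\mathcal{V}$-representation (ii), a linear function over a polytope attains its optimum at a vertex, so it suffices to evaluate $\langle \mdptransitions, \lb_i\rangle$ at each corner point and return the best; this is linear in the number of vertices times $\cardinality{\States}$. For the $\mathcal{H}$-representation (i), the problem is a \emph{linear program}: optimize the linear objective subject to the half-spaces describing $\rmdptransitions(s,a)$ together with the implicit simplex constraints $\mathbf{1}^\top\mdptransitions = 1$ and $\mdptransitions \ge 0$. Since linear programs are solvable in time polynomial in the number of variables ($\cardinality{\States}$) and constraints, this yields the claim.

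The substantive case is (iii). For $p\in\{1,\infty\}$ the $\lp$-ball is itself a polytope, reducing to case (i) (and in fact to a simple sort-and-water-fill). For general $p$, let $\bar{\mdptransitions}$ be the center distribution and $q$ the Hölder conjugate of $p$, i.e.\ $1/p + 1/q = 1$. Substituting $\delta = \mdptransitions - \bar{\mdptransitions}$ and using $\mathbf{1}^\top\bar{\mdptransitions}=1$, we optimize $\langle\delta,\lb_i\rangle$ over $\{\,\norm{\delta}_p \le r,\ \mathbf{1}^\top\delta = 0,\ \delta \ge -\bar{\mdptransitions}\,\}$. I would dualize the single equality constraint with a multiplier $\lambda$: for fixed $\lambda$, optimizing $\langle \lb_i - \lambda\mathbf{1},\delta\rangle$ subject only to $\norm{\delta}_p \le r$ has the closed-form optimum $r\cdot\norm{\lb_i-\lambda\mathbf{1}}_q$ by Hölder's inequality, with the optimal $\delta(\lambda)$ given explicitly in terms of $\lb_i-\lambda\mathbf{1}$. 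It then remains to choose $\lambda$ so that $\mathbf{1}^\top\delta(\lambda)=0$; as this map is monotone in $\lambda$, a bisection locates the correct $\lambda$, each step costing $O(\cardinality{\States})$ work, for an overall polynomial bound (with a logarithmic factor in the target precision).

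The main obstacle I expect is the non-negativity constraint $\mdptransitions \ge 0$ in the general-$p$ subcase: the clean Hölder closed form is valid only while no coordinate is clamped at $0$, and an active non-negativity constraint breaks it. I would resolve this by a complementary-slackness / active-set argument: the set of coordinates forced to $0$ is determined monotonically by $\lb_i$, so it can be folded into the same one-dimensional search (equivalently, handled by iterating over a polynomial family of candidate active sets), which preserves the polynomial bound. A secondary, purely technical point is that for irrational $p$ the optimizer need not be rational; this is harmless here, since the bisection returns a value within any desired $\varepsilon$, which is exactly what the anytime value iteration of \Cref{eq:3-bellman-conv} requires.
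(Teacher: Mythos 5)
Your proposal is correct, and three of its four parts coincide with the paper's own proof: LP solving for the $\mathcal{H}$-representation, vertex enumeration for the $\mathcal{V}$-representation, and the greedy sort-and-fill for $\lone$/$\linf$ are exactly what the paper does. Where you genuinely diverge is the general $\lp$ case, and there your route is the more careful one. The paper writes down a single closed-form surface point: it subtracts the mean of $\lb_i$ so the perturbation sums to zero, scales that direction to radius $\zeta$, and argues optimality by saying the normal to the ball at that point is parallel to the gradient -- an argument that is literally phrased for (and is only exact for) the $\ltwo$-ball, since for $p\neq 2$ the normal at a surface point $\delta$ is proportional to $\mathrm{sign}(\delta_i)\abs{\delta_i}^{p-1}$, not to $\delta$ itself. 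Your derivation -- dualize the simplex equality constraint with a multiplier $\lambda$, get the H\"older-conjugate form $\delta_i \propto \mathrm{sign}(x_i)\abs{x_i}^{q-1}$ with $x=\lb_i-\lambda\mathbf{1}$, and pin down $\lambda$ by a monotone bisection -- is the correct computation for every $p$, and it collapses to the paper's formula precisely when $p=q=2$, where the conjugate power is $1$ and the root of the monotone map is the mean. What each approach buys: the paper gets a direct formula and exact arithmetic (but with an optimality argument that, as written, is only valid for $p=2$), while you get full generality in $p$ at the cost of a one-dimensional search and hence an $\varepsilon$-approximation -- which, as you correctly observe, is all that the value iteration of \cref{eq:3-bellman-conv} requires. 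Your active-set handling of the non-negativity constraints mirrors the paper's iterative clamp-and-reproject loop (which the paper skips entirely under \AssmConstSupp), so that part is shared in spirit.
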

\begin{proof}[Proof Sketch --- Full Proof in \ifarxivelse{\cref{app:4-implicit}}{\cite[App.~D]{GRIP-techreport}}]
	Case (i) reduces to a polynomially sized linear program, which is PTIME~\cite{DBLP:journals/combinatorica/Karmarkar84}.
	For (ii), iterating over all vertices takes linear time.
	For $\lone$- and $\linf$-balls it suffices to order the successors according to $\lb_i$ and maximize their probability in this order.
	For general $\lnorm{p}$-balls we compute the surface point where the gradient of the objective function is orthogonal to its surface.
	We note that when considering interval constraints (a special case of (iii)), this coincides with the technique of \emph{ordering-maximization} \cite{givan2000bounded,DBLP:journals/tac/LahijanianAB15}, which is also employed by \texttt{IntervalMDP.jl} \cite{MATHIESEN20241}.
\end{proof}
\begin{algorithm}[tb]
	\caption{Best-Effort Implicit Value Iteration for RMDP}
	\label{alg:implicit-vi} 
	\textbf{Input}: Polytopic or \ccs RMDP $\RMDP$ and precision hint $\varepsilon > 0$\\
	\textbf{Output}: Lower bounds on the optimal total reward value $\val_{\RMDP}^\opt$
	\begin{algorithmic}[1] 
		\STATE $\lb_0(\cdot) \gets 0$, $i \gets 0$
		\WHILE{\texttt{true}}
			\FORALL{$s\in S$}
				\STATE $\lb_{i+1}(s) \gets \opt_{a\in\Actions(s)} \big(\rew(s,a) + {}$\\
				$\quad\quad\quad\overline{\opt}_{\mdptransitions(s,a)\in\rmdptransitions(s,a)} \sum_{s' \in \States} \mdptransitions(s,a)(s') \cdot \lb_i(s')\big)$ \label{line:implicit-vi:lower-update}
			\ENDFOR
			\IF{$\max_{s \in S} \lb_{i+1}(s) - \lb_i(s) < \varepsilon$} \label{line:implicit-vi:stopping}
				\STATE \textbf{return} $\lb_{i+1}$
			\ENDIF
			\STATE $i \gets i + 1$
		\ENDWHILE
	\end{algorithmic}
\end{algorithm}
Thus, for a large class of RMDPs (with any of the listed uncertainty sets and LRA or TR objectives), every single step of VI is fast.
Further, while the overall number of steps for VI can be exponential, typically a much smaller number of iterations suffices~\cite{DBLP:conf/tacas/HartmannsJQW23}, suggesting that implicit Bellman updates yields an efficient VI approach.
Together, this gives rise to a generic implicit value iteration algorithm, presented in \cref{alg:implicit-vi}.
\cref{lem:3-VI-conv} directly yields that the computed $\lb_i$ converge to the true value in the limit and \cref{lemma:cov-opt-poly} shows that the updates in Line~\ref{line:implicit-vi:lower-update} can be performed effectively and efficiently.
However, note that such a (one-sided) VI does not yet give us a stopping criterion.
In particular, while the rule in Line~\ref{line:implicit-vi:stopping} usually works well in practice, it does not guarantee that the computed values are close to the true value.
Obtaining such a guarantee is the topic of the next section.

\section{Implicit Anytime Value Iteration}\label{sec:5-stopping}

The approach of \cref{sec:4-implicit} converges in the limit (similar to \cite{GPV23,Wang24}), however we cannot bound the distance between $\lb_i$ and $\val_{\RMDP}^\opt$ for any concrete $i$.
In other words, we do not know how close we are to the true value at any time and thus cannot give any guarantees upon stopping the VI.
This absence of a stopping criterion is explicitly noted as an open question in~\cite[Sec.\ 5.2]{GPV23}.
Even for non-robust systems, efficient stopping criteria were a major challenge.
One prominent solution is \emph{bounded value iteration (BVI)}, see e.g.~\cite{LICS23}.
The main idea is to compute an additional sequence of \emph{upper} bounds $\ub_i$ that \emph{over}-approximates the value and converges to it in the limit, yielding an \emph{anytime algorithm}.
Our goal is to obtain such an algorithm for RMDP.
\begin{definition}[Anytime Algorithm with Stopping Criterion]\label{def:5-anytime-algo}
	An anytime algorithm (with stopping criterion) for RMDPs maintains two sequences $\lb_i, \ub_i$ such that for all states $s$
	(i)~for every iteration $i\in\Naturals$, $\lb_i(s) \leq \val_{\RMDP}^\opt(s) \leq \ub_i(s)$, and
	(ii) $\lim_{i\to\infty} \ub_i(s)-\lb_i(s) = 0$.
\end{definition}
Intuitively, an anytime algorithm is correct at every step and guarantees a precision of $\ub_i(s)-\lb_i(s)$; moreover, eventually the algorithm terminates for every precision $\varepsilon>0$.
Using \cref{lemma:sg-rmdp}, we can obtain an \emph{explicit} anytime algorithm for polytopic RMDPs, namely by constructing the induced finite-action SG and applying the algorithms of~\cite{LICS23}.

\paragraph{Key Contribution.}
To obtain an algorithm that is efficient and applicable for arbitrary uncertainty sets, we now propose an \emph{implicit} anytime algorithm.
For ease of presentation, the descriptions in this section focus on \ccs RMDPs with TR objective.
We later provide an intuition how to extend the results to LRA objectives and non-constant support RMDPs (details are provided in \ifarxivelse{\cref{app:5-stopping,app:6-lics}}{\cite[Apps.~E and F]{GRIP-techreport}}).

\paragraph{Challenges.}
Obtaining converging upper bounds is not as simple as for lower bounds.
In particular, just applying Bellman updates to upper bounds does not necessarily converge.

\begin{example}[Non-Convergence of Upper Bounds]\label{ex:non-conv-ub}
	Consider the RMDP (even MDP) in \cref{fig:5-full} (left) and assume BVI starts with an upper bound of $\ub_0(p)=\ub_0(q)=t>1$.
	The correct value is $\val_{\RMDP}^{\max} = 1$, since $p$ has the value of $q$, and $q$ can pick action $\mathit{exit}$ to obtain a reward of 1 and enter the sink $s$, from which point onward no further rewards are collected. 
	However, the Bellman update in $q$ chooses the action $\mathit{stay}$ that maximizes the upper bound, keeping it at $t$. 
	Thus, $\ub(p)=\ub(q)=t$ is a spurious fixpoint of the Bellman updates for \emph{all} $t>1$, and BVI does not converge from above.
\end{example}

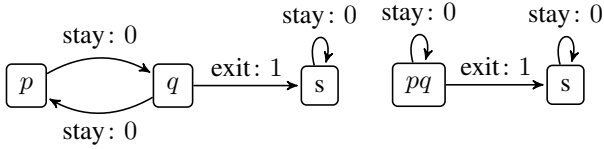
\begin{figure}
	\centering
		\begin{tikzpicture}[>=stealth', shorten >=1pt, auto, node distance=1.95cm, semithick]

		  \node[state] (p) {$p$};
		  \node[state, right of=p] (q) {$q$};
		  \node[state, right of=q] (s) {s};
		
		  \path[->] (s) edge[loop above] node {stay$\colon 0$} (s);
		  \path[->] (p) edge[bend left, above] node {stay$\colon 0$} (q);
		  \path[->] (q) edge[bend left, below] node {stay$\colon 0$} (p);
		
		  \path[->] (q) edge[above] node {exit$\colon 1$} (s);
		
		\end{tikzpicture}
		\begin{tikzpicture}[>=stealth', shorten >=1pt, auto, node distance=1.95cm, semithick]

		  \node[state] (pq) {$pq$};
		  \node[state, right of=pq] (s) {s};
		
		  \path[->] (s) edge[loop above] node {stay$\colon 0$} (s);
		  \path[->] (pq) edge[loop above] node {stay$\colon 0$} (pq);
		
		  \path[->] (pq) edge[above] node {exit$\colon 1$} (s);
		  
		  \phantom{\path[->] (s) edge[bend left, below] node {stay$\colon 0$} (pq);}
		
		\end{tikzpicture}
	\caption{An MDP where value iteration from above does not converge (left) and the collapsed MDP (right).}
	\label{fig:5-full}
\end{figure}

\paragraph{Convergence by Collapsing End Components.}
The core problem are so-called \emph{end components (ECs)}, e.g.~\cite[Chp.~3.3]{de1998formal}, which are cyclic parts of the state space where the agent can remain arbitrarily long without obtaining any reward.
The solution introduced in~\cite{atva14,HM18} is to \emph{collapse} these  ECs, i.e.\ to replace them with a single representative and remove all internal behaviour. 
In the example, the states $p$ and $q$ can be aggregated into a single state while removing the $\mathit{stay}$ actions, as depicted in \cref{fig:5-full} (right). 
The modified MDP has the same value (where all collapsed states have the value of their representative) and in it, Bellman updates have a unique fixpoint so that BVI converges.
Since in constant-support RMDPs the environment cannot affect the set of successor states, ECs are solely under the agents control and we can lift the solution of collapsing from MDP to RMDP.
\begin{restatable}[Collapsing -- 
	Proof in \ifarxivelse{\cref{app:5-stopping}}{\cite[App.~E]{GRIP-techreport}}]{lemma}{collapse}\label{lem:5-collapse}
	Let $\RMDP$ be a \ccs RMDP with a TR objective.
	We can construct a linearly sized RMDP $\RMDP' = \COLLAPSE(\RMDP)$ in polynomial time, such that $\val_{\RMDP}^\opt = \val_{\RMDP'}^\opt$ and in $\RMDP'$, the Bellman updates have a unique fixpoint.
\end{restatable}

\paragraph{Initializing Upper Bounds.}
Classical BVI requires an (a-priori) upper bound $\ub$ that for all states over-approximates their value.
Such a bound can be obtained in two steps, which we only briefly outline in the interest of space.
First, we identify states with infinite value, which can be done by graph analysis, extending methods for SG~\cite{DBLP:journals/fmsd/ChenFKPS13} to work \emph{implicitly} in RMDPs.
The remaining states with finite value almost surely reach a sink state where no further reward is obtained.
Their value can be bounded by extending a standard approach to our setting, see e.g.~\cite[App.\ B]{LICS23-arxiv}.
In essence, we can conservatively bound the expected number of steps until a sink state is reached and assume that until then the maximal single-step reward is obtained.
We call this procedure $\INITTR$ and formally describe it in \ifarxivelse{\cref{app:5-stopping}}{\cite[App.~E]{GRIP-techreport}}.
\begin{restatable}[$\INITTR$ -- Proof in \ifarxivelse{\cref{app:5-stopping}}{\cite[App.~E]{GRIP-techreport}}]{lemma}{inittr}\label{lem:5-inittr}
	Let $\RMDP$ be a polytopic or \ccs RMDP.
	There exists a procedure $\INITTR$ that for $\RMDP$ and a TR objective computes functions $\lb,\ub$ such that 
	(i) all states $s$ with infinite value have $\lb(s)=\ub(s)=\infty$ and 
	(ii) all states $s$ with finite value have $\lb(s)=0$ and $\ub(s)=t$, where $t\in\Rationals$ is an upper bound on the maximum finite expected total reward.
\end{restatable}
In practice, \emph{optimistic value iteration} (OVI) \cite{DBLP:conf/cav/HartmannsK20,DBLP:conf/atva/AzeemEKSW22} has proven to be more efficient.
Instead of fixing upper bounds a-priori, we adaptively guess and verify them, using effectively the same building blocks.
The details of OVI are rather involved and we avoid discussion to not distract from the key results.
For scalability, our implementation uses this approach.

\paragraph{Algorithm.}

\begin{algorithm}[tb]
	\caption{Bounded Value Iteration for RMDP}
	\label{alg:3-main-algo} 
	\textbf{Input}: \ccs RMDP $\RMDP$ and desired precision $\varepsilon>0$\\
	\textbf{Output}: $\varepsilon$-precise lower and upper bounds $\lb$ and $\ub$ on the optimal total reward value $\val_{\RMDP}^\opt$
	\begin{algorithmic}[1] 
		
		\STATE $\RMDP' \gets \mathsf{COLLAPSE}(\RMDP)$
		\STATE $\lb_0, \ub_0 \gets \INITTR(\RMDP')$
		\STATE $i \gets 0$
		
		\WHILE{$\ub(s_i)-\lb(s_i) > \varepsilon$\label{line:while}} 
		\FORALL{$s\in S'$} 
		\STATE 	$\lb_{i+1}(s) \gets \opt_{a\in\Actions(s)} \rew(s,a)~+$
		
		$\quad\quad\quad\overline{\opt}_{\mdptransitions(s,a)\in\rmdptransitions(s,a)} \sum_{s' \in \States} \mdptransitions(s,a)(s') \cdot \lb_i(s')$
		\STATE 	$\ub_{i+1}(s) \gets \opt_{a\in\Actions(s)} \rew(s,a)~+$
		
		$\quad\quad\quad\overline{\opt}_{\mdptransitions(s,a)\in\rmdptransitions(s,a)} \sum_{s' \in \States} \mdptransitions(s,a)(s') \cdot \ub_i(s')$
		\ENDFOR
		\STATE $i \gets i+1$
		\ENDWHILE
		\STATE \textbf{return} $(\lb_i,\ub_i)$
	\end{algorithmic}
\end{algorithm}
\cref{alg:3-main-algo} shows the overall BVI algorithm for TR objectives.
First, it collapses ECs to ensure that Bellman updates have a unique fixpoint and computes the initial bounds.
Then the main loop applies implicit Bellman updates \cref{eq:3-bellman-conv}.
Since by \cref{lem:5-inittr} the initial bounds are correct and by \cref{lem:5-collapse} Bellman updates have a unique fixpoint, the lower and upper bounds eventually become arbitrarily close.

\paragraph{Long-Run Average Reward.}
For \ccs RMDPs with LRA objectives, a very similar construction is possible based on~\cite{ACD+17}.
We modify the $\COLLAPSE$ procedure as follows (see \ifarxivelse{\cref{app:5-stopping}}{\cite[App.~E]{GRIP-techreport}} for the formal description): When replacing an EC, add an action to its representative that leads to a sink state and as reward obtains the value of staying in the EC forever.
Thus, playing this action in the modified RMDP corresponds to playing optimally in the EC of the original RMDP, and thus preserves the values.
In this way, we can reduce LRA objectives to TR objectives and then apply \cref{alg:3-main-algo}.

\begin{restatable}[Implicit Anytime Algorithm with Stopping Criterion -- Proof in \ifarxivelse{\cref{app:5-stopping}}{\cite[App.~E]{GRIP-techreport}}]{theorem}{algfivestopping}\label{thm:5-alg-stopping}
	For every \ccs RMDP $\RMDP$ with TR objective and precision $\varepsilon>0$,  \cref{alg:3-main-algo} is an anytime algorithm (\cref{def:5-anytime-algo}).
	For LRA objectives, its modification (see \ifarxivelse{\cref{alg:app-anytime}}{\cite[App.~E]{GRIP-techreport}}) is an anytime algorithm.
	Both algorithms work \emph{implicitly}, i.e.\ without constructing the induced SG.
\end{restatable}

\paragraph{Beyond \AssmConstSupp.}
In RMDPs where the environment can affect the successors of an action, the solution of collapsing is not applicable anymore.
In particular, states in an EC may have different values, as the environment can \enquote{lock} the agent in subpart of the EC.
This was the key complication for developing stopping criteria for SGs, see e.g.~\cite[Sec.\ III-B]{LICS23}, necessitating additional analysis of the ECs.
Moreover, we aim to do so implicitly, adding another layer of complexity.
In \ifarxivelse{\cref{app:6-lics}}{\cite[App.~F]{GRIP-techreport}} we provide implicit anytime algorithms for SSP and maximizing TR and explain the complications for LRA and minimizing TR, for which we may need to resort to an exponential blowup by enumerating possible supports.
We conjecture that this can be avoided but leave this question for future work.

\paragraph{Optimal Policies.}
Often, we are not only interested in the value of an RMDP, but also optimal policies.
These can be derived without computational overhead for both the agent and the environment: Using the connection between RMDP and SG, we can apply~\cite[Lem. 1]{LICS23}.
Intuitively, during every Bellman update, we remember the optimal action for every state. Then, when the algorithm terminates, these actions form an optimal policy.

\section{Experimental Evaluation}\label{sec:eval}

We implemented a prototype in Java, based on \texttt{PET} \cite{DBLP:conf/cav/MeggendorferW24}.
For linear optimization, we use the (pure Java) library \texttt{oj! Algorithms}.
We ran our experiments on a machine with standard hardware (AMD Ryzen 5 CPU, 16GB RAM) running Linux Mint OS and using OpenJDK 21 as JRE.
Our tool, its source code, all models, and instructions to replicate all results can be found at \cite{meggendorfer_2024_14385450}.

\paragraph{Features.}
For uncertainty sets, our prototype supports (i)~linear constraints, i.e.\ $\mathcal{H}$-representation of polytopes, (ii)~rectangular constraints, i.e.\ lower and upper bounds for probabilities, and (iii)~norm-based constraints, i.e.\ giving a centre point $q$ together with a radius $r$ and including all probabilities with $\lone$, $\ltwo$, or $\linf$ distances at most $r$ from $q$.
(Note that both $\lone$ and $\linf$ constraints are a special case of linear constraints, but $\lone$ constraints can be exponentially more succinct.)

In terms of models, our tool supports three formats.
Firstly, a simple, explicit format in JSON (described in the artefact).
Secondly, we can consider a \emph{robust} variant of \texttt{PRISM} models \cite{prism}, obtained by adding $L^1$, $L^2$, or $L^\infty$ balls of a given radius on each action.
Finally, our tool supports interval models in \texttt{PRISM} language (i.e.\ rectangular constraints).

For properties, our tool supports maximizing and minimizing LRA, TR, and SSP. 
For LRA and minimizing TR, \emph{Constant-Support} is required (due to the discussion above).

\paragraph{Models.}

We consider several sources of models.
Firstly, we handcrafted ten models where we could manually derive the correct value to validate our tool, in particular covering many corner-cases.
Since these models are small, we do not include them for performance evaluation.
Secondly, we use models from~\cite{ijcai-krish} and their scaled up versions.
Finally, we modified several standard models from~\cite{DBLP:conf/tacas/HartmannsKPQR19} by adding rectangular constraints on selected transitions or adding $\lp$ balls around all transitions.
All considered (and further) models are included in the artefact.

\paragraph{Previous Approaches.}
For \cite{GPV23,Wang24}, neither code nor case studies are available online. 
We consider the implementation of \cite{ijcai-krish}, denoted \texttt{RPPI}, which also implements the approaches of \cite{WVAPZ23}, called \texttt{RVI} and \texttt{RRVI}.
We mention a few caveats:
Our tool is implemented in Java, while the implementation of \cite{ijcai-krish} is written in Python and uses \texttt{stormpy} (Python binding for the model checker \texttt{Storm} \cite{DBLP:journals/sttt/HenselJKQV22}), which might be a source of performance differences.
Moreover, our input formats are fundamentally different:
The approach of \cite{ijcai-krish} only supports linear constraints and assumes that the vertices of the constraint set is explicitly given, i.e.\ in $\mathcal{V}$-representation.
In contrast, we support many different representations.
Naturally, the performance of an algorithm depends on the appropriate input representation, and one could argue that choosing different input formats is giving an unfair advantage.
However, constraint sets given by $\lp$-balls or rectangular constraints are the typical use-case for RMDP.
In particular, the implementation of \cite{ijcai-krish} explicitly generates the $\mathcal{V}$-representation from $\lone$-balls or rectangular constraints.
As such, we represent our model instances in this implicit way.

As a further competitor, \texttt{PRISM}~\cite{prism}, a state-of-the-art probabilistic model checker, supports TR objectives.
They also use a value iteration based approach, however it does not provide guarantees and only works on models with rectangular constraints and \AssmConstSupp.
On the considered models our results coincided with those of \texttt{PRISM}, giving further indication for the correctness of our approach and implementation.

\paragraph{Guarantees and Runtime.}
We remark that \texttt{RVI} and \texttt{RRVI} as well as \texttt{PRISM} do not give a practical stopping criterion.
For the former two, the implementation from \cite{ijcai-krish} aids them by stopping once the iterates are sufficiently close to the correct value, while \texttt{PRISM} stops once the iterates do not changes much between steps.
Notably, \texttt{PRISM}'s heuristic can indeed lead to stopping early and wrongly, see \cite{HM18}.
In contrast, our approach produces converging lower and upper bounds and thus also provides a correct stopping criterion.
Computing both bounds until achieving precision of $\varepsilon$ ($10^{-6}$ in our experiments) naturally requires more effort than just working with one side and stopping at the correct time via an oracle or heuristics:
Firstly, BVI needs to perform twice as many operations in each step (updating lower \emph{and} upper bound).
Secondly, the lower bound may already (unknowingly) have converged to the correct value (leading to \texttt{PRISM} stopping), yet the upper bound may still require further updates until convergence.
Intuitively, \texttt{PRISM}'s approach \enquote{only} proves that a certain lower bound is achievable, while BVI additionally proves that nothing better is possible.

\subsection{Experimental Results}

\begin{table}[t]
	\centering
	\begin{tabular}{rcccc}
		              Model & Ours & \texttt{RVI} & \texttt{RRVI} &                          \texttt{RPPI}                          \\
		\midrule
		   \texttt{cont-50} & $<$1 &     223      &     $<$1      &                               30                                \\
		   \texttt{cont-75} & $<$1 &     700      &     $<$1      &                               70                                \\
		  \texttt{cont-100} & $<$1 & \multicolumn{3}{c}{ \raisebox{.6ex}{\rule{1cm}{0.5pt}} M/O \raisebox{.6ex}{\rule{1cm}{0.5pt}}} \\
		  \texttt{cont-125} &  1   & \multicolumn{3}{c}{ \raisebox{.6ex}{\rule{1cm}{0.5pt}} M/O \raisebox{.6ex}{\rule{1cm}{0.5pt}}} \\
		\midrule
		 \texttt{lake-10-U} & $<$1 &      --      &      T/O      &                                7                                \\
		 \texttt{lake-10-M} & $<$1 &      --      &      --       &                               26                                \\
		 \texttt{lake-15-U} & $<$1 &      --      &      T/O      &                               136                               \\
		 \texttt{lake-15-M} &  2   &      --      &      --       &                               T/O                               \\
		\texttt{lake-100-U} &  6   &      --      &      T/O      &                               T/O                               \\
		\texttt{lake-100-M} &  12  &      --      &      --       &                               T/O
	\end{tabular}
	\caption{
		Comparison of our approach for maximizing LRA to the approaches implemented in \cite{ijcai-krish} on the models used in that paper.
		We report solving times (excluding model building / parsing) in seconds.
		A dash indicates the approach does not support a particular model.
		T/O denotes a timeout of 15 minutes, M/O a memory-out crash.
		\texttt{lake-$n$} are their \textbf{frozenlake} models of size $n \times n$, the suffix \texttt{U} or \texttt{M} indicates the unichain or multichain variant.
		\texttt{cont-$n$} are the \textbf{contamination} models with $n$ states.
	} \label{tbl:results_ijcai}
\end{table}

\cref{tbl:results_ijcai} shows that our approach massively outperforms \texttt{RVI}, \texttt{RRVI}, and \texttt{RRPI} by several orders of magnitude.
In particular, it seems that the runtime of their approaches grow significantly faster than ours.
For example, going from \texttt{lake-10-M} to \texttt{lake-15-M} increases the runtime of \texttt{RPPI} by a factor of about 50, while ours only increases by a factor of $\approx$5.
We believe this is partly due to implementation inefficiencies, but more importantly due the exponential space requirement of constructing the induced SG explicitly.
In particular, just building the game structure for \texttt{lake-100-U} in their implementation takes 200s, and crashes with a memory-out on \texttt{cont-100}.

\begin{table}[t]
	\centering

	\begin{tabular}{rrrcc}
		               Model & \multicolumn{1}{c}{$|\States|$} & \multicolumn{1}{c}{$|\Actions|$} & Ours & \texttt{PRISM} \\
		\midrule
		   \texttt{firewire} &                        46{,}878 &                         92{,}144 &  78  &      111       \\
		 \texttt{frozenlake} &                        22{,}500 &                         90{,}000 &  52  &       25       \\
		\texttt{lake\_swarm} &                        20{,}736 &                         82{,}944 &  60  &       68       \\
		        \texttt{brp} &                       217{,}155 &                        217{,}155 &  88  &       40
	\end{tabular}
	\caption{
		Comparison of our approach to \texttt{PRISM}.
		Columns $|\States|$ and $|\Actions|$ denote the number of states and actions in the model, respectively.
		We report solving times in seconds as in~\cref{tbl:results_ijcai}.
		The models are obtained by adding rectangular constraints to existing ones.
		Moreover, all experiments use a TR objective (as \texttt{PRISM} only supports these for RMDP).
		Note that our tool gives guarantees and \texttt{PRISM} does not, thus higher runtimes are to be expected (see previous discussion).
	} \label{tbl:results_prism}
\end{table}

In \cref{tbl:results_prism}, we compare to the approach of \texttt{PRISM}.
Our approach has runtimes in the same order of magnitude, with differences likely due to implementation details.
However, recall that \emph{by design} our tool needs to work more, as it provides guarantees.
Thus, these results demonstrate that additionally obtaining guarantees via our approach does not drastically increase the runtime and scales to significantly sized models.

Finally, we also evaluated our tool on robust variants of large, established models by adding $\lone$- and $\ltwo$-norm balls.
As there are no competing tools, we delegate details to \ifarxivelse{\cref{app:6-exp}}{\cite[App.~G]{GRIP-techreport}} in the interest of space.
In brief, we observed that our approach can efficiently handle complicated uncertainty sets such as $L^2$-balls -- out of reach for state-of-the-art tools, and it can solve models with over a million states in under a minute.

\paragraph{\texttt{IntervalMDP.jl}.}
Finally, for completeness we also tried evaluating \texttt{IntervalMDP.jl} \cite{MATHIESEN20241} on the models considered in \cref{tbl:results_prism}.
However, the tool ran out of memory for each model already when loading the model (even before specifying an objective).
We conjecture that this is due to the tool working with the (large) explicit representation and not the \texttt{PRISM} language directly.
However, we again emphasize that their focus is quite different, and as such we cannot draw meaningful conclusions.

\section{Conclusion}

We have generalized the connection between RMDPs and SGs to include arbitrary uncertainty sets and total reward objectives, we have shown that and how Bellman updates can be performed implicitly and efficiently, and we have provided anytime algorithms with stopping criteria.
Together, we have presented a framework for solving RMDPs that is generic, reliable and efficient.
In the future, we aim to to investigate what form solutions can take when lifting the \AssmConstSupp Assumption.

\section*{Acknowledgements}

This project has received funding from the European Union's Horizon 2020 research and innovation programme under the Marie Sklodowska-Curie grant agreement No.\ 101034413, and by the DFG through the Cluster of Excellence EXC 2050/1 (CeTI, project ID 390696704, as part of Germany’s Excellence Strategy) and the TRR 248 (see https://perspicuous-computing.science, project ID 389792660).

\bibliography{main}

\ifarxivelse{
	\clearpage
	
	\newpage
	\appendix
	\section{Extended Preliminaries}\label{app:2-prelims}

In this appendix, we provide more details on the background, including extensive explanations and pointers to instructive sources.

\paragraph{Probability Measure over Paths.}
Throughout the proofs in the appendices, we require detailed usage of the concepts of infinite path and the probability measure over these paths induced by a Markov chain (i.e. an RMDP or SG under a pair of policies). 
For a detailed introduction, we refer to \cite[Chp. 10.1]{DBLP:books/daglib/0020348}, especially the excursus on probability spaces and the pages following it until Ex. 10.11.
Here, we briefly recall some relevant notions:
$\Infinitepaths_{X}$ denotes the set of infinite paths in the system $X$, where $X$ can be an RMDP or SG.
Formally, an infinite path is a sequence alternating between states and actions of the system $s_0a_0s_1a_1\dots \in (\States\times\Actions)^\omega$.
A Markov chain induces a unique probability measures over sets of infinite paths.

\paragraph{Objectives.}
We describe the objectives in more detail.
In particular, we include \emph{discounted} total reward, explain different semantics of TR (denoted by $\star=c$ or $\star=\inf$) and explain how SSP is a variant of TR.

\begin{compactitem}
	\item \textbf{Discounted Total Reward:} This is the case where the total reward is multiplied with a \emph{discount factor} $\gamma<1$ at every step.
	Formally, we have
	$\mathsf{DiscTR}(\rho) = {\sum}_{t=0}^\infty \gamma^t \rew(s_t,a_t)$.
	\cref{app:disc-rew} explains how this objective is included in our framework.
	
	\item \textbf{Undiscounted Total Reward and Stochastic Shortest Path:} This is the case where $\payoff = \totalReward$.
	If the goal is to minimize this objective, rewards are commonly called costs.
	The stochastic shortest path objective, see e.g.~\cite{DBLP:journals/mor/BertsekasT91}, is a special case of this objective:
	In addition to the reward (or rather cost) function, it also includes a set of target states; upon reaching these, no further reward is accumulated.
	To ensure that the Bellman update is a contraction in this case, the assumption is employed that there exists a \emph{proper} policy that reaches the target state almost surely~\cite[Assm. 1]{DBLP:journals/mor/BertsekasT91}.
	Our setting generalizes this, as we do not require this assumption.
	
	There are different semantics, depending on the payoff of paths that do not reach the target states.
	There exist three natural settings~\cite[Sec. 3]{DBLP:journals/fmsd/ChenFKPS13}, depending on the payoff a path obtains if it does not reach a target state.	
	\begin{compactitem}
		\item $\star=\infty$: A path not reaching a target obtains a payoff of $\infty$. Intuitively, when minimizing costs, this corresponds to using maximum cost as a punishment for not reaching a target.
		\item $\star=0$: A path not reaching a target obtains a payoff of $0$. Intuitively, when maximizing rewards, this corresponds to using minimum payoff as punishment for not reaching a target.
		\item $\star=c$: There is no special case distinction and every path just obtains its payoff.
		This can result in infinite payoff when a path reaches a cycle containing states with positive reward.
		Note that in this semantics, the target set can be omitted from the definition of the objective, since there is no special case and any absorbing state with a state-reward of 0 effectively acts as a target state.
	\end{compactitem} 
	In the main body, we focus on the $\star=c$ setting, as it can be defined without target states and thus is the most concise.
	The case $\star=0$ can be reduced to the case treated in the main body using~\cite[Sec. 4.3.3]{DBLP:journals/fmsd/ChenFKPS13}.
	The case $\star=\infty$ is explicitly treated in the appendices, as it is substantially different from, namely dual to, $\star=c$. 
	\item \textbf{Long-Run Average Reward:} This is the case where $\payoff = \lraReward$.
	There are two Bellman updates associated with this objective: Firstly, using \cref{eq:3-bellman} and dividing by the number of updates yields a sequence converging to the value in the limit.
	Secondly, by omitting the rewards from the equation, we obtain an equation characterizing the value; it is the least fixpoint of the following equation, assuming all cycles have the correct value.
	\begin{equation}\label{eq:3-bellman-lra}
		\val(s) = \opt_{a\in\Actions(s)} \sum_{s' \in \States} \mdptransitions(s,a)(s') \val(s'),
	\end{equation}
	We refer to the discussion in \cite[Sec. II-C]{LICS23} for more information.
	\item \textbf{Rescaling Reward Functions:} 
	For $\totalReward$, it is standard to focus on positive models, i.e.\ exclude negative rewards, since in these cases the value might not be well-defined, see~\cite[Sec. 5.2]{Puterman}.
	Rescaling rational rewards to natural numbers works by multiplying every reward with the largest denominator.
	For $\lraReward$, we can similarly rescale rewards, using any linear modification (scaling or shifting), see~\cite[App. A-A]{LICS23-arxiv}.
	\item \textbf{Other Objectives:}
	The classical reachability and safety objectives are generalized by $\lraReward$ and thus included in our framework.
	Adding other objectives is rather straightforward: It requires (i) defining the neutral reward to make the reduction to SG work (see \cref{sec:3-connect-SG}) and extending the unified framework of guaranteed value iteration to this objective, as outlined in~\cite[Sec. VI-H]{LICS23}.
\end{compactitem}

\paragraph{Uncertainty Set Variants.}
We expand on the discussion on uncertainty set variants provided in the main body. In particular, we enumerate many variants that have been used in the literature and explain how many of them satisfy the \AssmConstSupp Assumption.
\begin{itemize}
	\item \textbf{Rectangularity:} We employ the classical assumption that uncertainty sets are (s,a)-rectangular as in, e.g.,~\cite{Iyengar05,NG05,ijcai-krish,Wang24}, i.e.\ the uncertainty sets are independent for each state-action pair; and we require the uncertainty set to be closed.
	See~\cite[App. A]{Iyengar05} for a discussion of this assumption and its consequences.
	One could also consider $s$-rectangular uncertainty sets, see~\cite{GPV23} for a very recent collection of results in that case; or even parametric models where the uncertainty can be dependent between different states, see e.g.~\cite{DBLP:conf/birthday/0001JK22} for a survey.
	\item \textbf{Subsumed Uncertainty Set Representations:} 
	Polytopic RMDPs already subsume many commonly employed uncertainty set representations. Paraphrasing~\cite[Sec. 2]{ijcai-krish}: \enquote{[Polytopic RMDPs] strictly subsume RMDPs with
		$L^1$-uncertainty (or total variation) sets~[\cite{DBLP:conf/icml/HoPW18}], interval MDPs~\cite{givan2000bounded} and contamination models~[\cite[Sec. 5.3.1]{Wang24}]. 
		Furthermore, all our results are applicable to uncertainty sets [which have a polytopic convex hull].}
	\item \textbf{Non-Polytopic Uncertainty Sets:}
	In the literature, non-polytopic uncertainty sets that have been considered include the Chi-square uncertainty set \cite{Iyengar05}, the Kullback-Leibler divergence uncertainty set \cite{NG05}, and the Wasserstein distance uncertainty set \cite{Yan17}.
	In all of these cases the uncertainty set is defined as the set of points with distance at most $\zeta$ to some point-estimate, using the respective metric.
	Another natural set of metrics to use for this distance-based definition are $\lnorm{p}$ distance functions.	
	For the Kullback-Leibler divergence uncertainty set, our constant support assumption is always satisfied due to definition of the Kullback-Leibler divergence.
	In contrast, the Chi-square, Wasserstein, and $\lnorm{p}$ distance uncertainty sets allow for a change in support while maintaining a finite distance, however, for $\zeta$ small enough (smaller than the smallest component of the point-estimate), they also satisfy \AssmConstSupp.
	We emphasize that state-of-the-art methods either cannot handle these uncertainty sets or require strong assumptions on the structure of the MDP, e.g. unichain MDP \cite{Wang24}.
	
	A recent subclass of uncertainty sets introduced in~\cite{GPV23} are definable uncertainty sets.
	Intuitively, definable uncertainty sets only allow ``well behaved'' uncertainty sets composed of restrictions that are either polynomial or exponential inequalities.
	While this subsumes many uncertainty sets, in particular all polygonal uncertainty sets we discuss and the $\lnorm{p}$-metric based ones, the restriction is incomparable to our assumption since there are uncertainty sets with constant support that are not definable, e.g. the Kullback-Leibler divergence uncertainty sets. 
	\item \textbf{Other Common Restrictions.}
	For most of the paper, we assume that the uncertainty sets are closed. 
	This is a common assumption satisfied by all previously mentioned uncertainty set and necessary since optimal policies need not exist, even with \AssmConstSupp, due to similar problems as in \cref{ex:opt-pol}.
	Moreover, closed uncertainty sets are also compact since they are clearly bounded.
	
	Notice that for any closed uncertainty set, we can equivalently consider its convex hull:
	Any point $\mdptransitions$ inside the convex hull of a closed uncertainty set is a convex combination of some points $\{\mdptransitions_1,\dots,\mdptransitions_k\}$ of the uncertainty set.
	Since we optimize for linear objective functions, the value of $\mdptransitions$ cannot be larger than the value of $\max_i \mdptransitions_i$.
	Thus, extending the uncertainty set to its convex hull does not affect the optimal value and w.l.o.g. we may only consider convex uncertainty sets.
\end{itemize}

\paragraph{RMDPs Semantics.}
There are different semantics regarding the environment policy. We summarize the ones suggested in~\cite{NG05,DBLP:journals/tcs/BartDFLMT18,ijcai-krish,Wang24}.
Recall that the main questions are (i)~\enquote{Is the environment choosing the distribution allowed to use memory (time-varying) or not (stationary)?}, and (ii)~\enquote{Is the environment an ally (best-case) or an antagonist (worst-case)?}.

For (i), the policy can be memoryless (stationary/once-and-for-all), depending on the length of the path (time-varying/Markovian), or using the whole path (Interval-Markov-decision-process/history-dependent).
Moreover, in the at-every-step semantics, it can additionally aggregate or split states of the original model in the manner of probabilistic bisimulation.
In our case, all semantics coincide, since memoryless policies are sufficient to achieve the optimal value (see \Cref{cor:3-sem-polytopic} for polytopic RMDP and \Cref{cor:3-sem-general} for \ccs RMDP).
Thus, the reader can imagine the environment policy as picking a single \enquote{consistent} MDP where all transition distributions are contained in the uncertainty sets.

For (ii), our problem statement uses the pessimistic, worst-case interpretation of RMDPs, since the environment is antagonistic to the agent. 
In the optimistic, best-case interpretation, the environment uses the same optimization direction as the agent, i.e.\ $\sup$ when maximizing and $\inf$ when minimizing the payoff~\cite{Iyengar05}.
The best-case interpretation is simpler because there is no alternation of optimization direction.
Hence, in the induced SG (\cref{app:2-SG-connect}), the newly added states for the environment belong to an ally, and control over them can be given to the agent.
Thus, in the resulting system all states belong to one player, and it is not an SG, but an MDP.
Consequently, we can employ the simpler solution approaches for MDPs that do not have to take player alternation into account; and solutions for the more general SGs automatically work.
Overall, the solutions presented in the paper also work for the simpler optimistic interpretation.

\section{Discounted Total Reward Objectives.}\label{app:disc-rew}

This section walks through the contributions of the paper and on the one hand explains how they are also applicable to discounted reward objectives, and on the other hand points to papers that have already established the results.
The goal of this section is to show the generality of the framework by proving that discounted reward objectives are included.

Discounted rewards for RMDP with general uncertainty sets were considered in \cite{NG05, Iyengar05}.
The connection to stochastic games was observed by both and formalized in the polytopic setting by \cite{ijcai-krish}.
The Bellman updates (see \cref{eq:3-bellman}) are modified by including the discount factor. $\gamma<1$
\begin{equation}\label{eq:3-bellman-discount}
	\lb_{i+1}(s) = \opt_{a\in\Actions(s)} \rew(s,a) + \sum_{s' \in \States} \gamma \cdot \mdptransitions(s,a)(s') \lb_i(s'),
\end{equation}
The Bellman update is a contraction, in particular also without \AssmConstSupp~\cite[Theorem 3.2]{Iyengar05}, ensuring convergence.

The total reward in step $i$ is at most $\gamma^{i} \rew_{\max}$, where $\rew_{\max}$ is the maximum occurring reward. 
Thus, the distance between $\lb_i$ and $\val$ is bounded:
The discounted reward accumulated after step $i$ is bounded by $\zeta_i=\sum_{j=i+1}^{\infty}\gamma^j\rew_{\max}=\frac{\gamma^{i+1}\rew_{\max}}{1-\gamma}$.
Hence $\ub_i(s) = \lb_i(s) + \zeta_i$ for all $s\in\States$ is an anytime safe and converging upper bound on the value function.

Alternatively, since a path can get at most $\gamma^i\rew_{\max}$ in step $i$, the upper bound can be initialized with $\sum_{i=0}^\infty \gamma^i\rew_{\max}  = \frac{\rew_{\max}}{1-\gamma}$ and updated via the Bellman update.
For this method, anytime correctness is proven in the same way as in the undiscounted case (see proof of \Cref{thm:5-alg-stopping} in \Cref{app:5-stopping}).
Consequently, correctness does not require the additional measures taken \cref{sec:5-stopping} for this objective.

In \cite{Iyengar05} the Bellman update rule is introduced in an implicit manner, similar to how we perform it in \Cref{sec:4-implicit}.
For special uncertainty sets, concrete efficient computation methods for the Bellman updates were already known before in the literature:
For interval constraints,~\cite{givan2000bounded} introduces robust value iteration, whereas for $\lone$ uncertainty sets, \cite{StrLit04} implement an implicit update rule.

	\section{Connection between RMDPs and SGs}\label{app:2-SG-connect}

\subsection{Correctness of the Induced SG (\cref{lemma:sg-rmdp})}

\sgrmdp*
\begin{proof}
	
	Recall that for an RMDP $\RMDP$ with a pair of policies for agent and environment $(\polOne,\polTwo)$, any quantative objective is defined as the expected value of a random variable over infinite paths:
	\[
	\val_{\RMDP}^{\polOne,\polTwo}(s) = \expectation^{\polOne,\polTwo}_{\RMDP,s}[\payoff], 
	\]
	where $\payoff$ is the function mapping infinite paths to their payoff.
	The same holds for SG for a given $\opt$-player policy $\polOne'$ and $\overline{\opt}$-player policy $\polTwo'$:	
	\[
	\val_{\SG}^{\polOne',\polTwo'}(s) = \expectation^{\polOne',\polTwo'}_{\SG,s}[\payoff]
	\]
	Intuitively, our proof shows the equality of the values by showing a \enquote{correspondence} between the probability measures over infinite paths in both the SG and the RMDP. 
	Given this correspondence, the claim follows by definition of the value.
	More formally, the correspondence requires mappings between the RMDP and SG, namely between their paths and policies.
	
	Throughout the proof, we fix an RMDP $\RMDP=(\States,\Actions,\rmdptransitions,\rew)$ with an $\opt$-objective and its induced SG $\SG=(\States^{\SG},\Actions^{\SG},\mdptransitions^{\SG},\rew^{\SG})$
	
	The outline of the proof is as follows:
	\begin{enumerate}
		\item We define a surjection $\surjPath$ from infinite paths in the SG to infinite paths in the RMDP.
		\item We define a surjection $\surjPolicyA$ between $\opt$-policies in the SG and \textbf{a}gent policies in the RMDP.
		\item We define a surjection $\surjPolicyE$ from $\overline{\opt}$-policies in the SG to \textbf{e}nvironment policies in the RMDP.
		\item We show that for every set of paths $X$ in the SG, under some pair of policies $(\polOne,\polTwo)$, its probability is equal to the projected set of paths in the RMDP under policies $(\surjPolicyA(\polOne),\surjPolicyE(\polTwo))$.
		Formally, let $X' = \{\infinitepath \in \Infinitepaths_{\RMDP} \mid \exists \infinitepath' \in \Infinitepaths_{\SG}. \infinitepath = \surjPath(\infinitepath')\}$.
		Then we show for all states $s$: 
		\begin{align*}
			\probability_{\SG,s}^{\polOne,\polTwo}[X] = \probability_{\RMDP,s}^{\surjPolicyA(\polOne),\surjPolicyE(\polTwo)}[X']
		\end{align*}
		\item Using this and the definition of value, we conclude that the values in an RMDP and its induced SG are equal.
		Intuitively, from an optimal policy in one of them, we can construct an optimal policy in the other.
	\end{enumerate}
	
	There are two differences between arbitrary and polytopic RMDP in the proof: The definition of the mapping for the environment policy in Step 3, and the application of this definition to prove the correspondence in Step 4.
	
	We highlight that the proof in \cite[App. A]{CGK+23} uses a very similar structure.
	We modified it for three reasons. These are, in ascending order of importance:
	(i) clarity of presentation, (ii) the addition of $\totalReward$ objectives and non-polytopic uncertainty sets, and (iii) consequently establishing that the mapping for environment policies is only a surjection, not a bijection as in their proof.

	\paragraph{Step 1: Path Mapping.}
	
	Let $\infinitepath=s_0a_0s_1a_1\dots\in\Infinitepaths_{\SG}$ be an infinite path in the induced SG.
	Note that the states alternate between $\opt$ and $\overline{\opt}$ states by construction.
	Thus, to obtain a path containing only states and actions of the original RMDP, we remove every second state-action pair.
	Formally, let $s_j$ be the first $\opt$-state in the path (i.e.\ $j\in\{0,1\}$), and define the mapping:
	\[
	\surjPath(\infinitepath) = s_ja_js_{j+2}a_{j+2}\dots.
	\] 
	This mapping is also applicable to finite paths.
	Further, we lift it to sets of paths $X\subseteq\Infinitepaths_{\SG}$, i.e.\ $\surjPath(X) = \{\surjPath(\infinitepath) \mid \infinitepath\in X\}$.
	It is a surjection because by construction of the induced SG, for every path in the RMDP, there exists at least one path in the SG, but there can also be multiple (using different $\overline{\opt}$-actions).

	\paragraph{Step 2: Agent Policy Mapping.}
	The available actions are the same for each state in the RMDP and its induced SG.
	Hence, the output of the policies has the same \enquote{signature}.
	Thus, for the mapping the only thing required is to remove the additional components of the SG from the finite path that is used as input to the policy.
	\[
	\surjPolicyA(\polOne)(\infinitepath) = \polOne(\surjPath(\infinitepath))
	\]
	
	\paragraph{Step 3: Environment Policy Mapping.}
	For the environment policy mapping, there is a difference between the case of arbitrary and polytopic RMDP.
	In an arbitrary RMDP, the same definition as for the agent policy suffices, since every action in the SG uniquely corresponds to a distribution in the uncertainty set.
	For polytopic RMDPs and the finite-action induced SG, more work is required, since the distribution might need to be constructed from the corner points of the polytope.
	However, since a polytope is defined as the convex hull of its vertices, the distribution $\mdptransitions$ that results from combining the corner points certainly is contained in the uncertainty set.
	Denoting by this point by $\mathsf{construct}(\mdptransitions)$, we define the environment policy mapping:
	\[
	\surjPolicyE(\polTwo)(\infinitepath) = \mathsf{construct}(\polTwo(\surjPath(\infinitepath)))
	\]
	We highlight that contrary to \cite[App. A]{CGK+23}, this is not a bijection, but only a surjection, since for each point in a polytope, there may exist uncountably many combinations of corner points to construct this point.
	In their work, they define deterministic policies in RMDPs as those that pick a corner point of a polytope, whereas we define them as picking an arbitrary point within the polytope. 
	With their definition, the mapping between memoryless deterministic policies is indeed a bijection.
	The reason for this difference in definition is that for polytopes given in $\mathcal{H}$-representation, we avoid explicitly constructing all the exponentially many vertices. Having a surjection is sufficient for the proof.
	
	\paragraph{Step 4: \enquote{Correspondence} between Probability Measures}
	Our subgoal is to prove that for all states $s\in\States$ of the RMDP, all SG policy pairs $(\polOne,\polTwo)$ and all sets of paths $X\subseteq\Infinitepaths_{\SG}$, we have
	\begin{align*}
		\probability_{\SG,s}^{\polOne,\polTwo}[X] = \probability_{\RMDP,s}^{\surjPolicyA(\polOne),\surjPolicyE(\polTwo)}[\surjPath(X)]	
	\end{align*}
	(Note that as compared to the claim in the outline, we renamed $X'$ to $\surjPath(X)$)
	
	The probability measures over infinite paths are defined by means of \emph{cylinder sets}~\cite[Def. 10.9]{DBLP:books/daglib/0020348}.
	The cylinder set of of a finite path $\finitepath$ is the set of all infinite paths $\infinitepath$ that are a continuation of $\finitepath$, i.e.\ they have $\finitepath$ as prefix.
	The probability of a cylinder set is the probability of the finite path; we overload the $\probability$ symbol to also work on finite paths.
	For every finite path $\finitepath$ in the SG, we have $\probability_{\SG,s}^{\polOne,\polTwo}[\finitepath] = \probability_{\RMDP,s}^{\surjPolicyA(\polOne),\surjPolicyE(\polTwo)}[\surjPath(\finitepath)]$.
	This can be proven by a straightforward induction on the length of the path, similar to Step 2 in~\cite[App. A]{CGK+23}. 
	In the induction step, we unfold two states of the finite path in the SG and show that the definitions of our mappings yield an equality.
	From this, the subgoal follows, as we know that the probabilities of all cylinder sets in the RMDP and SG correspond, and thus the probability measures correspond.

		\paragraph{Step 5: Equality of the Values}
		Up to this step, the whole proof was objective-independent. 
		Essentially, we showed that the induced SG can model all behaviours of the RMDP, i.e.\ that their probability measures correspond.
		
		To lift this correspondence to value-equality, we first show that the value of every path $\infinitepath$ in the SG and its projection in the RMDP are equal, i.e.\ $\payoff(\infinitepath) = \payoff(\surjPath(\infinitepath))$.
		The argument is the one given in the definition of the reward function of the induced SG (\cref{def:sg-rmdp}): for $\totalReward$, the reward of 0 does not affect the payoff of a path.
		And for $\lraReward$, duplicating every reward and doubling the length of the path results in the same average payoff.
		
		Overall, we have
		\begin{align*}
			\val_{\SG}^{\polOne,\polTwo}(s) 
			&= \expectation^{\polOne,\polTwo}_{\SG,s}[\payoff]\\
			&= \expectation_{\RMDP,s}^{\surjPolicyA(\polOne),\surjPolicyE(\polTwo)}[\payoff] \tag{*}\\
			&=\val_{\RMDP}^{\surjPolicyA(\polOne),\surjPolicyE(\polTwo)}(s)
		\end{align*}
		The step marked with $(*)$ uses the fact that the probability measures and the payoffs of the paths correspond.
		
		Finally, the value of an SG or RMDP is the value under an optimal pair of policies.
		The value in the SG is at most that the RMDP, since from a pair of optimal policies $(\polOne,\polTwo)$, the mappings $\surjPolicyA$ and $\surjPolicyE$ construct policies in the RMDP with the same value.
		As the mappings are surjective, we can invert them to find for every pair of policies in the RMDP a pair of policies in the SG with the same value; thus, the value of the SG is at least that of the RMDP.
		Overall, the values are equal.
		
	\end{proof}

	\subsection{Existence of Optimal Policies in Closed Constant-Support RMDPs -- \cref{lemma:const-supp}}

	\paragraph{Recalling Relevant Notions.}
	For this section, we again recall some relevant basic definitions:
	We use $\Infinitepaths_{X}$ and $\Finitepaths_{X}$ to denote the set of infinite and finite paths in the system $X$, respectively, see \cref{app:2-prelims}.
	A \emph{maximal end component} (MEC) is a strongly connected sub-MDP that is inclusion-maximal~\cite[Chp.~3.3]{de1998formal}.
	We recall the definition as phrased in~\cite[Sec. II-E]{LICS23}
	
	\begin{definition}[End Component (EC)]\label{def:EC}
		An EC in an MDP $\MDP$ is a pair $(R,B)$ with $\emptyset \neq R \subseteq \States$ and $\emptyset \neq B \subseteq \bigcup_{s\in R} \Actions(s)$ such that (i) for all $s \in R$ and $a \in \Actions(s)\cap B$ we have $\support(\mdptransitions(s,a)) \subseteq R$ and (ii) for all $s,s' \in R$ there is a finite path $s a_0 s_1 \ldots a_n s' \in (R\times B)^\ast \times R$, i.e.\ the path stays inside $R$ and only uses actions in $B$.
		Inclusion maximal ECs are called \emph{maximal end component (MEC)}.
	\end{definition}
	
	\noindent We lift the definition to RMDPs in the same way that it is usually lifted to SGs: in the first condition, we require that there exists $\mdptransitions\in\rmdptransitions(s,a)$ such that $\support(\mdptransitions(s,a)) \subseteq R$; intuitively, there exists an environment policy that can keep the path inside the EC.
	Note that in RMDPs satisfying \AssmConstSupp, the environment policy cannot affect the graph structure, and thus is irrelevant for the computation of MECs.
	
	\paragraph{Optimal Policies Are Memoryless Deterministic.}
	Since optimal policies do not exist in general, we investigate cases in which we can prove their existence.
	We formally define this property as follows:
	
	\noindent \AssmOptPolExist\textbf{:} \textit{There exist optimal agent and environment policies: $\sup_{\polOne} \inf_{\polTwo} \val_{\RMDP}^{\polOne,\polTwo}(s) =\max_{\polOne} \min_{\polTwo} \val_{\RMDP}^{\polOne,\polTwo}(s)$, and analogously for minimization objectives.}

	We first show that memoryless deterministic policies suffice in case an optimal policy exists.
	
	\begin{restatable}{lemma}{infactmd}\label{lemma:inf-act-md}
		An RMDP satisfies \AssmOptPolExist if and only if there also exist memoryless deterministic optimal policies for both agent and environment.
	\end{restatable}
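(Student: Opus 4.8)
The backward direction is immediate: memoryless deterministic policies are in particular policies, so if optimal memoryless deterministic policies exist for both players, then the suprema and infima in \AssmOptPolExist are attained. The content lies in the forward direction, and my plan is the classical value-function-plus-greedy argument, carried out carefully for the infinite (uncountable) environment action set. Assume \AssmOptPolExist and write $V = \val_{\RMDP}^{\opt}$ for the now-attained value function. I would first argue that $V$ satisfies the Bellman optimality equation for the objective at hand -- the fixpoint form of \cref{eq:3-bellman-conv} for $\totalReward$, and the gain equation \cref{eq:3-bellman-lra} for $\lraReward$. This is a standard property of the value of these objectives and does not by itself require existence of optimal policies; attainment of the optimum is the separate point.

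The second step extracts memoryless deterministic policies by greedy selection with respect to $V$, and here the hypothesis \AssmOptPolExist does the essential work of guaranteeing that the relevant per-state optima are \emph{attained}. The agent optimizes over the finite set $\Actions(s)$, so its outer $\opt$ is always a genuine $\max$ (resp.\ $\min$), and I may define $\polOne^{\MD}(s)$ to be an optimizing action. For the environment the inner $\overline{\opt}$ ranges over the possibly uncountable uncertainty set $\rmdptransitions(s,a)$, so attainment is not automatic; but existence of an optimal environment policy forces it. Concretely, fixing a saddle pair $(\polOne^\ast,\polTwo^\ast)$ realizing $V$, along any play the distribution(s) that $\polTwo^\ast$ selects at a pair $(s,a)$ must in expectation achieve $V$, which by the Bellman equation equals the inner optimum; hence that optimum is achieved by some $\mdptransitions(s,a)\in\rmdptransitions(s,a)$, and I set $\polTwo^{\MD}(s,a)$ to be such a distribution. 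Since the environment optimizes a linear functional, a single attaining distribution suffices, so this choice is deterministic as well.

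The third step is the verification that the greedy pair $(\polOne^{\MD},\polTwo^{\MD})$ is itself a saddle point attaining $V$: that $\polOne^{\MD}$ guarantees at least $V$ against every environment policy and $\polTwo^{\MD}$ guarantees at most $V$ against every agent policy. Fixing one of the two memoryless deterministic players reduces the model to a single-player optimization problem, whose value under the fixed greedy policy is characterized as a fixpoint of the corresponding linear Bellman operator; greediness with respect to $V$ makes $V$ this fixpoint, and a standard monotonicity argument identifies it with the value. This is precisely the ``basic observations on the objectives'' referenced in the sketch of \cref{lemma:const-supp}, and it is objective-specific: for $\totalReward$ with non-negative rewards one uses that the value is the least fixpoint, and for $\lraReward$ the gain/bias optimality equations.

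I expect the main obstacle to be this final verification for $\totalReward$, where the Bellman operator of a fixed policy need not be a contraction and may admit several fixpoints (the phenomenon underlying the non-convergence \cref{ex:non-conv-ub}): one must argue that greedy selection lands on the correct fixpoint (the least one for non-negative rewards, or dually the greatest one for the $\star=\infty$ semantics) rather than a spurious one. A secondary subtlety, already handled in the second step, is that for arbitrary uncertainty sets the environment's per-state optimum need not be attained at all; the whole point of the lemma is that \AssmOptPolExist is exactly the hypothesis that rescues attainment, so no closedness or compactness is needed beyond what that hypothesis already supplies.
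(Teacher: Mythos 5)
Your backward direction and your environment-side argument are sound, and they in fact mirror the paper's: the paper likewise uses \AssmOptPolExist to get attainment of the inner optimum and then replaces the environment's (possibly randomized) choice by a single greedy distribution. The genuine gap is your third step, on the \emph{agent} side for $\totalReward$. A policy that is greedy with respect to the optimal value function is in general \emph{not} optimal for undiscounted total reward, and no argument can make an arbitrary greedy selection ``land on the correct fixpoint'': \cref{ex:non-conv-ub} itself is a counterexample. There $V(p)=V(q)=1$, and at $q$ both $\mathit{stay}$ (reward $0$ plus $V(p)=1$) and $\mathit{exit}$ (reward $1$ plus the sink's value $0$) are greedy, yet the greedy policy choosing $\mathit{stay}$ has value $0$. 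The monotonicity/least-fixpoint argument you invoke runs in the wrong direction for the maximizing agent: fixing the greedy agent policy $\polOne^{\MD}$, the quantity $\inf_{\polTwo} \val_{\RMDP}^{\polOne^{\MD},\polTwo}$ is the \emph{least} fixpoint of the induced operator, of which $V$ is merely \emph{a} fixpoint, so you only obtain $\inf_{\polTwo} \val_{\RMDP}^{\polOne^{\MD},\polTwo} \leq V$, whereas optimality of $\polOne^{\MD}$ requires the reverse inequality. (For the environment's half of your saddle-point verification the inequality happens to point the favorable way, which is why that direction does go through.) Repairing this needs more than you allow for: either a refined tie-breaking rule among greedy actions, or a different argument altogether; flagging it as ``the main obstacle'' does not discharge it.

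The paper's proof is structured precisely to avoid this. It first establishes a memoryless deterministic optimal \emph{environment} policy: memorylessness follows from the Bellman equations for $\totalReward$ and from prefix-independence for $\lraReward$, and determinism follows by the same greedy-attainment argument you use, with attainment supplied by \AssmOptPolExist. It then \emph{fixes} this environment policy, observes that the induced model $\RMDP^{\cdot,\polTwo^{*}}$ is a finite MDP (finite state and action spaces), and invokes the standard result that finite MDPs admit memoryless deterministic optimal policies, citing Puterman. That classical result already handles the multiple-fixpoint subtleties you ran into, so no greedy extraction for the agent is ever needed. If you wanted to salvage your symmetric greedy construction, you would essentially have to re-prove that MDP result including the tie-breaking analysis; reducing to it, as the paper does, is the correct and far shorter path.
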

	
	\begin{proof}
		The backwards direction is straightforward: If memoryless deterministic optimal policies exist then \AssmOptPolExist obviously holds.
		
		We now consider the forward direction.
		For clarity of presentation, we focus this proof on maximizing objectives; the proof for minimizing is analogous.
		By assumption we have
		\[\valmax_{\RMDP}(s) = \max_{\polOne} \min_\polTwo \val_{\RMDP}^{\polOne,\polTwo}(s)\]
		In the following, let $\polOne^{*}$ and $\polTwo^{*}$ be the witnessing optimizing policies.
		We start by showing there are deterministic and memoryless optimal environment policies.
		
		\noindent\textbf{Total reward.}
		For $\totalReward$ memorylessness follows directly from the Bellman equations:
		Let $\finitepath_1,\finitepath_2\in\Finitepaths_{\RMDP}$ such that $\finitepath_1$ and $\finitepath_2$ end in $s\in S$.
		Then, for all $\polOne$, we have
		\begin{align*}
			\polTwo^{*}(\finitepath_1) & = \arg\min_{\polTwo} \payoff(\finitepath_1) + \val_{\RMDP} ^{\polOne,\polTwo}(s) \\
			& = \arg\min_{\polTwo} \val_{\RMDP} ^{\polOne,\polTwo}(s)\\
			& = \arg\min_{\polTwo} \payoff(\finitepath_2) + \val_{\RMDP} ^{\polOne,\polTwo}(s)\\
			& = \polTwo^{*}(\finitepath_2)
		\end{align*}
		Now let $\polTwo^{*}$ be a memoryless optimal policy and let $s\in S$ and $a\in\Actions(s)$.
		Define $\mdptransitions^{*}=\arg\min \sum_{s'\in S}\mdptransitions(s,a,s')\valmax_{\RMDP}(s')$.
		Note that, despite $\rmdptransitions$ being uncountably infinite, from \AssmOptPolExist it follows that there is a minimizing policy for the Bellman equations which implies this minimum $\mdptransitions^{*}$ exists.
		Let $\polTwo^{*}_\mathit{det}$ be the policy with $\polTwo^{*}_\mathit{det}(s,a)(P^{*})=1$ and $0$ else.
		Then, by the Bellman equation,
		\begin{align*}
			&\phantom{{}={}}\sum_{\mdptransitions\in\rmdptransitions} \left( \polTwo^{*}_\mathit{det}(s)(\mdptransitions)\sum_{s'\in S} \mdptransitions(s,a,s')\valmax_{\RMDP}(s')\right) \\
			&=\min_{\polTwo} \sum_{\mdptransitions\in\rmdptransitions} \left(\polTwo(s)(\mdptransitions)\sum_{s'\in S} \mdptransitions(s,a,s')\valmax_{\RMDP}(s') \right) \\
			&=\sum_{\mdptransitions\in\rmdptransitions} \left( \polTwo^{*}(s,a)(\mdptransitions)\sum_{s'\in S} \mdptransitions(s,a,s')\valmax_{\RMDP}(s')\right)
		\end{align*}
		which, for all $s\in S$, implies $\val^{\cdot,\polTwo^{*}}(s)=\val^{\cdot,\polTwo^{*}_\mathit{det}}(s)$, showing that $\polTwo^{*}_\mathit{det}(s)$ is a memoryless deterministic optimal policy.
		
		\noindent\textbf{Long-run average reward.}
		Memorylessness follows from the fact that $\lraReward$ is a \emph{prefix-independent} objective,:
		For $\finitepath\in\Finitepaths$ and $\infinitepath\in\Infinitepaths$ we have $\lraReward(\finitepath\infinitepath)=\lraReward(\infinitepath)$ and thus choosing the minimizing action in a state does not depend on the finite history leading up to the state.
		
		The proof for determinism is the same as for $\totalReward$.
		
		\noindent\textbf{Agent policy.}
		We have shown that the optimal environment policy is memoryless and deterministic.
		Fixing this optimal policy $\polTwo^{*}$ leads to an induced MDP $\RMDP^{\cdot,\polTwo^{*}}$ that is finite, both in action- and state-space.
		In this case it is well known that memoryless deterministic policies are optimal \cite[Chp. 8]{Puterman}.
		
	\end{proof}

	\begin{restatable}{lemma}{vcont}\label{lemma:v-cont}
		For every \ccs RMDP $\RMDP=(\States,\Actions,\rmdptransitions)$ and pair $\polOne,\polTwo$ of agent and environment policy, the value function $\val^{\polOne,\polTwo}$ is continuous w.r.t $\polTwo$.
	\end{restatable}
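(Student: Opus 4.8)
The plan is to fix the agent policy $\polOne$ and view the value as a function of the distributions that the environment selects, showing that under the \AssmConstSupp Assumption only the numerical transition probabilities — never \emph{which} transitions exist — vary as $\polTwo$ changes. Concretely, I would identify a (memoryless) environment policy with a point $P \in \prod_{(s,a)} \rmdptransitions(s,a) \subseteq \prod_{(s,a)} \Distributions(\States)$ equipped with the product topology, so that ``perturbing $\polTwo$'' means perturbing $P$; fixing $\polOne$ and $P$ induces a finite Markov chain on $\States$ whose one-step probabilities are affine functions of the entries of $P$ and hence depend continuously on $\polTwo$. The decisive consequence of constant support is that $\support(\mdptransitions)$ is identical for all $\mdptransitions \in \rmdptransitions(s,a)$, so the underlying directed graph of this chain — and therefore its reachability relation, its SCC/BSCC decomposition, and the partition of states into finite- and infinite-value states — is the \emph{same} for every $\polTwo$. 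I would make this invariance the backbone of the argument.

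For a $\totalReward$ objective I would then read off continuity from the standard closed form. On the (fixed) set of finite-value states every run almost surely reaches a rewardless sink, so $\val^{\polOne,\polTwo}$ restricted to this set is the unique solution of the linear system $(I - M)\,\val = \rew$, where $M$ is the substochastic matrix on the transient part. Since the graph is fixed, $I - M$ is invertible for every $\polTwo$; by Cramer's rule the solution is a rational function of the entries of $M$ with non-vanishing denominator, and composing with the continuous map $\polTwo \mapsto M$ yields continuity on the finite-value states (the infinite-value states are constantly $+\infty$, hence trivially continuous in the extended-reals sense). For an $\lraReward$ objective I would decompose $\val^{\polOne,\polTwo}(s) = \sum_{C} \Probability[\reach C]\cdot g_C$, summing over the (fixed) BSCCs $C$. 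Each reachability probability $\Probability[\reach C]$ solves a linear system of the same fixed-structure kind and is therefore continuous in $\polTwo$ by the argument above, while the gain $g_C$ of a fixed irreducible chain equals $\pi_C\cdot\rew$ for its stationary distribution $\pi_C$, which is itself a rational — hence continuous — function of the transition probabilities inside $C$. A finite sum of products of continuous functions is continuous.

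The crux of the whole proof, and the one place the assumption is indispensable, is the invariance of the graph structure: this is exactly what breaks in \Cref{ex:opt-pol}, where a probability dropping to $0$ deletes a transition, alters which states are reachable, and makes the value jump. I therefore expect the main obstacle to be the careful justification that the finite/infinite-value partition and the invertibility of $I-M$ persist \emph{uniformly} across all $\polTwo$, which I would derive purely from the fixed-graph property rather than from any quantitative estimate. If the statement is needed for history-dependent $\polTwo$ as well, I would additionally observe that constant support fixes the entire reachable history-tree, reducing the general case to a uniform-convergence argument over this tree built on the memoryless analysis above.
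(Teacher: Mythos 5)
Your proposal is correct, and it shares the paper's skeleton: fix the policies, use \AssmConstSupp to conclude that the induced chain's graph --- hence its BSCC decomposition and the set of infinite-value states --- is the same for every $\polTwo$, then reduce $\totalReward$ to an absorbing-chain linear system on the transient states and $\lraReward$ to gains of the (fixed) recurrent parts plus reachability. Where you genuinely diverge is in how continuity is extracted from that fixed structure. The paper argues \emph{quantitatively}: it writes the perturbation identity $\mathbf{v}_1-\mathbf{v}_2=(I-Q_1)^{-1}(Q_1-Q_2)\mathbf{v}_2$, bounds every entry of the fundamental matrix $(I-Q_1)^{-1}$ by $1/(1-\pmin^{\abs{\States}})$ using the minimum transition probability $\pmin$ (which exists by closedness plus \AssmConstSupp), and applies H\"older's inequality to obtain an explicit modulus of continuity, i.e.\ a concrete $\varepsilon(\delta)$. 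You argue \emph{qualitatively}: $I-M$ is invertible for every admissible $\polTwo$ because invertibility is a property of the fixed graph, so by Cramer's rule the solution of $(I-M)\val=\rew$ is a rational function of the transition probabilities with non-vanishing denominator, hence continuous; the same argument handles reachability probabilities and stationary distributions for $\lraReward$, making your LRA case self-contained where the paper instead reduces the transient part to $\totalReward$ and cites Puterman's fundamental-matrix condition for continuity inside MECs. The trade-off: the paper's route yields effective, Lipschitz-type bounds (explicit in $\pmin$, $\abs{\States}$, $\norm{\rew}_2$), which could be reused for quantitative perturbation statements, but needs $\pmin>0$ and hence closedness already at this stage; your route is more elementary (no H\"older, no entry-wise fundamental-matrix bounds) and shows that continuity itself needs only constant support, with closedness entering only later when compactness is invoked for existence of optimal policies. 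One caveat applies equally to both: the perturbation analysis is really carried out for memoryless $\polTwo$ (the paper's neighbourhood $T(\polTwo,\varepsilon)$ is indexed state-action-wise), and you are the one who at least flags the extension to history-dependent policies rather than passing over it silently.
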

	
	\begin{proof}
		We denote the Markov Chain induced by $\polOne,\polTwo$ on $\RMDP$ as $\RMDP^{\polOne,\polTwo}$.
		Notice that due to the closedness of the confidence region together with \AssmConstSupp we can give an a priori bound on the minimum transition probability in $\RMDP^{\polOne,\polTwo}$.
		We denote this as $p_{\mathit{min}}$.
		
		By definition, $\val$ is continuous if small perturbations of $\polTwo$ in $\RMDP^{\polOne,\polTwo}$ only lead to small changes in $\val(s)$ for all $s\in\States$.
		Formally, we define the $\varepsilon$-neighbourhood of $\polTwo$ as
		\begin{align*}
		T(\polTwo,\varepsilon) = \{ \polTwo'& \in\rmdptransitions \mid \\
		&\forall s\in\States , a\in\Actions(s)\colon\norm{\polTwo(s),\polTwo'(s,a)}_1\leq \varepsilon\}
		\end{align*}
		where $\norm{\cdot,\cdot}_1$ denotes the $L^1$-distance.
		To prove the lemma, it remains to show that for every $\delta>0$ there is a $\varepsilon>0$ such that for all agent policies $\polOne$, environment policies $\polTwo$ and $\varepsilon$-perturbations $\polTwo'\in T(\polTwo,\varepsilon)$ and $s\in\States$ we have
		\[
		\val_{\RMDP^{\polOne,\polTwo}}(s) \in \val_{\RMDP^{\polOne,\polTwo'}}(s) \pm \delta
		\]
		We now show this separately for all objective functions we consider.
		
		\noindent\textbf{Total-reward objectives.}
		Let $\RMDP$ be an RMDP and $\polOne$ and $\polTwo$ agent and environment policies in $\RMDP$, respectively.
		Further, let $\polTwo'\in T(\polTwo,\varepsilon)$.
		Then $\RMDP^{\polOne,\polTwo'}$ is an $\varepsilon$-perturbation of $\RMDP^{\polOne,\polTwo}$.
		
		First, notice that $V_{\RMDP^{\polOne,\polTwo}}(s)=\infty$ iff there is a set of paths $R\subseteq\Infinitepaths$ with positive probability measure where all $\infinitepath\in R$ start in $s$ and have $\payoff(\infinitepath)=\infty$.
		Due to \AssmConstSupp $R$ also has positive probability in $\RMDP^{\polOne,\polTwo'}$ and thus $V_{\RMDP^{\polOne,\polTwo'}}(s)=\infty$.
		
		Note that we may remove all states $s$ that have $V_{\RMDP^{\polOne,\polTwo}}(s)=\infty$ from $\RMDP$ and analyse the remaining RMDP separately, as states with finite value may not reach states with infinite value for total-reward objectives.
		
		We now consider RMDP where no such paths exists.
		In this case, we can construct Markov Chains $\MDP_1$ and $\MDP_2$ with the same value as $\RMDP^{\polOne,\polTwo}$ and $\RMDP^{\polOne,\polTwo'}$ by collapsing each \emph{bottom strongly connected component (BSCC)} \cite{DBLP:books/daglib/0020348} into a single absorbing state with a deterministic self-loop and reward 0.
		Note that for all states $s$ within a zero-reward BSCC we have $V_{\RMDP^{\polOne,\polTwo}}(s)=0=V_{\RMDP^{\polOne,\polTwo'}}(s)$.
		We now show that $\val_{\RMDP^{\polOne,\polTwo}} \in \val_{\RMDP^{\polOne,\polTwo'}} \pm \delta$ for all transient states.
		As $\MDP_1$ and $\MDP_2$ are absorbing, we can represent their transition matrices in the canonical form \cite[App. A]{Puterman}
		\[
		\mdptransitions_1 = \begin{pmatrix}
			I & 0 \\
			R_1 & Q_1
		\end{pmatrix}
		\]
		where:
		\begin{compactitem}
			\item \(Q_1\) is the transition matrix among the transient states in $\MDP_1$.
			\item \(R_1\) is the transition matrix from transient states to absorbing states in $\MDP_1$.
			\item \(0\) is the zero matrix
			\item \(I\) is the identity matrix
		\end{compactitem}
		and dimensions of the matrices are determined by the number of transient and absorbing states.
		For the transient states the value function as a vector $\mathbf{v}$ is characterized by
		\[
		\mathbf{v}_1 = \rew + Q_1\mathbf{v}_1 
		\]
		The analogous statement holds for $\MDP_2$.
		Therefore we can write
		\begin{align*}
			\mathbf{v}_1  - \mathbf{v}_2 & = \rew + Q_1\mathbf{v}_1  - \rew - Q_2\mathbf{v}_2  \\
			&=  Q_1\mathbf{v}_1 - Q_2\mathbf{v}_2 +Q_1\mathbf{v}_2 - Q_1\mathbf{v}_2  \\
			&=  Q_1(\mathbf{v}_1 -\mathbf{v}_2) + (Q_1-Q_2)\mathbf{v}_2 \\
			&=  Q_1(\mathbf{v}_1 -\mathbf{v}_2) + (Q_1-Q_2)\mathbf{v}_2 \\
			&=  (I-Q_1)^{-1} (Q_1-Q_2)\mathbf{v}_2
		\end{align*}
		Notice that $(I-Q_1)^{-1}$ is the \emph{fundamental} matrix of a Markov Chain \cite[App. A]{Puterman}.
		In particular the inverse is always well defined for absorbing Markov Chains and only has positive entries that intuitively give the expected number of times a given state is visited when starting in a certain state.
		As $\MDP_1$ is absorbing, we can compute an upper bound on all the entries in $(I-Q_1)^{-1}$ \cite{ensuring-BKLPW17}:
		\[
		(I-Q_1)^{-1}_{ij} \leq \frac{1}{1-p_{\mathit{min}}^{|S|}}
		\]
		
		Let $Q^{i}$ denote the $i$-th row of $Q$. By H{\"o}lder's inequality \cite{hoelder} we have for all $i \leq \mathit{dim}(Q_1)$ that
		\[
		\abs{(Q^{i}_1-Q^{i}_2)\mathbf{v}_2} \leq \norm{Q^i_1,Q^i_2}_1 \norm{\mathbf{v}_2 }_\infty \leq \varepsilon \frac{\norm{\rew}_2}{1-p_{\mathit{min}}^{|S|}}
		\]
		where the bound on $\norm{\mathbf{v}_2 }_\infty$ follows from bounds on the expected number of visits to each state.
		
		Together with the positivity of the entries in $(I-Q_1)^{-1}$ and the fact all matrices have at most dimension $|\States|$ this yields
		\[
		\abs{\mathbf{v}_1  - \mathbf{v}_2} \leq \varepsilon |\States|\frac{\norm{\rew}_2}{(1-p_{\mathit{min}}^{|S|})^2}
		\]
		Hence, choosing $\varepsilon = \frac{\delta (1-p_{\mathit{min}}^{|S|})^2}{ |\States| \norm{\rew}_2}$ shows the claim.

		\noindent\textbf{Mean-payoff objectives.}
		Recall that we only need to show continuity of $\val(s)$ with respect to changes in environment policy $\polTwo$.
Then the induced system $\RMDP^{\polOne,\cdot}$ is an MDP with compact action sets \cite[Sec. 8.4.4]{Puterman}.

		Due to \AssmConstSupp the environment policy $\polTwo$ cannot influence which end components exist in the induced Markov Chain $\RMDP^{\polOne,\polTwo}$.
Hence $\RMDP^{\polOne,\cdot}$ only contains MECs $\mathcal{E}=(T,R)$ such that $R=\bigcup_{s\in T} \stateactions(s)$.
In particular this means any run in $\RMDP^{\polOne,\cdot}$ entering $\mathcal{E}=(T,R)$ cannot leave $T$.
Consequently, if we denote the RMDP with states $T$ and actions $R$ and transitions as in $\RMDP^{\polOne,\cdot}$ as $\RMDP^{\mathcal{E}}$, for all $s\in T$ we have $\val_{\RMDP^{\polOne,\cdot}}(s) = \val_{\RMDP^{\mathcal{E}}}(s)$.

By definition, the underlying graph of $\mathcal{E}$ is strongly connected, i.e. $\RMDP^{\mathcal{E}}$ is irreducible.
Further, we may assume $\RMDP^{\mathcal{E}}$ to be aperiodic: If $\RMDP^{\mathcal{E}}$ was periodic we can add self-loops to all actions without altering the value function.
Thus $\RMDP^{\mathcal{E}}$ is communicating, which implies it has a fundamental matrix \cite[Chp. 8]{Puterman} which is a sufficient condition for continuity of the value function \cite[App. C]{Puterman}.

For all states $s$ not contained in a MEC their value $V_{\RMDP^{\polOne,\cdot}}(s)$ is a weighted sum of all $\val_{\RMDP^{\mathcal{E}_i}}(s)$, where the weight for a MEC $\mathcal{E}_i$ is its reachability probability from $s$, which may depend on $\polTwo$.
The task to maximize or minimize this weighted sum can be reformulated as a TR objective and thus $V_{\RMDP^{\polOne,\polTwo}}(s)$ is also continuous w.r.t. $\polTwo$ for transient $s$.
	\end{proof}
	
	\constsupp*
	
	\begin{proof}
		Let $\RMDP=(\States,\Actions,\rmdptransitions)$ be an RMDP.
		By \Cref{lemma:v-cont} the value function $\val(s)$ is continuous for all $s\in\States$ with respect to changes in the environment policy $\polTwo$, assuming \AssmConstSupp.
		The existence of optimal policy then immediately follows from the closedness of the confidence region.
		Finally, by \Cref{lemma:inf-act-md}, there is also a memoryless deterministic optimal policy.
	\end{proof}

	\subsection{Assumption on the Minimum Transition Probability}
	\noindent\textbf{\AssmPmin Assumption:}
	\textit{There exists a minimum positive transition probability $\pmin>0$ such that for all state-action pairs $(s,a)$, for all distributions $\mdptransitions \in \rmdptransitions(s,a)$, we have that for all successor states $s'$ either $\mdptransitions(s')=0$ or $\mdptransitions(s')\geq\pmin$.}
	\smallskip
	
	This assumption was introduced in~\cite{DBLP:journals/tocl/DacaHKP17} and also used in, e.g.,~\cite{AKW19}.
	We can reduce the problem of solving an RMDP with closed uncertainty sets satisfying \AssmPmin to solving finitely many \ccs RMDPs as follows:
	For every uncertainty set there are finitely many possible supports of the contained probability distributions. For each of these (potentially exponentially many) supports, construct an RMDP with only this support, that is thus closed and satisfies the \AssmConstSupp Assumption.
	Note that $\pmin$ is crucial here since otherwise the uncertainty set may be open when restricted to certain supports, leading to similar problems as in \cref{ex:opt-pol}.
	The optimum over all of the \ccs RMDPs is the solution to the original RMDP.

	\section{Details for \Cref{sec:4-implicit}}\label{app:4-implicit}

We suggest to read the description of objectives in \cref{app:2-prelims} before proceeding with this appendix.

\subsection{Converging VI-algorithm -- \cref{lem:3-VI-conv}}

\VIconv*

\begin{proof}

	\textbf{Undiscounted Total Reward $\star=c$.}
	Every Bellman update according to \cref{eq:3-bellman-conv} in the RMDP corresponds to two Bellman updates according to \cref{eq:3-bellman} in the induced SG, because $\mdptransitions(s,a)(s^a)=1$ in the SG. 
	Thus, denoting the estimates in the SG by $\lb^{\SG}$ and using \cref{lemma:sg-rmdp}, we have that $\lb_{2i}^{\SG}(s) = \lb_i(s)$.
	
	Since Bellman updates converge in SGs for undiscounted reward~\cite[App. B.1]{DBLP:journals/fmsd/ChenFKPS13}, the claim for undiscounted total reward follows.
	Note that that paper defines the objective with a target set, unlike our definition.
	However, as argued in \cref{app:2-prelims}, this target set can be omitted.
	In particular, this is exactly what the proof of VI-convergence in~\cite[App. B.1]{DBLP:journals/fmsd/ChenFKPS13} does: It reduces to the case without target states.
	
	It is interesting that in this setting we do not need to treat states with infinite reward separately because VI converges from below, and in states with infinite value, the limit of the lower bound is indeed infinity.
	Nonetheless, in practice it is preferable to first find states with infinite value and remove them from the computation.
	This can be done using graph analysis, see \cref{lem:5-inittr}.
	
	\textbf{Undiscounted Total Reward $\star=\infty$.}
	In this setting, the target set is required to describe the objective, and the converging VI-algorithm is slightly different.
	The value is the greatest fixpoint of \cref{eq:3-bellman-conv}, not the least as in the $\star=c$ setting~\cite[Sec. 4.3.2]{DBLP:journals/fmsd/ChenFKPS13}.
	Thus, VI needs to be run from above, starting from an upper bound that is greater than the value in all states.
	In particular, this also requires identifying states with infinite value.
	This can be done using $\INITTR$, see \cref{lem:5-inittr} and the next subsection of this appendix. 
	
	Setting $\lb_0$ to this upper bound and then applying \cref{eq:3-bellman-conv} converges by the same argument as in the $\star=c$ case, using the proof of correctness for this setting from~\cite[App. B.2]{DBLP:journals/fmsd/ChenFKPS13}.
	
	\textbf{Long-Run Average Reward.}
	Let $\lb$ be the sequence in the RMDP computed according to \cref{eq:3-bellman-conv}, and $\lb^{\SG}$ be the sequence of bounds in the induced SG computed according to \cref{eq:3-bellman}.
	Intuitively, collecting the undiscounted total reward for a number of steps $i$ that is \enquote{large enough} and then dividing by this number of steps $i$ results in a good approximation of the long-run average reward.
	Formally, this is proven in~\cite[Lem. 8]{LICS23-arxiv}, i.e.\ for we have 
	$\lim_{i\to\infty} \frac{\lb^{\SG}_i(s)}{i} = \val_{\SG}(s)$, where $\val_{\SG}$ is the value in the SG with the LRA objective.
	
	In the induced SG, the reward of every newly added state $s^a$ is the same as that of the original state $s$.
	Thus, we have that $\lb_{i}^{\SG}(s) = 2 \cdot \lb_{i/2}(s)$.
	
	Combining these statements, we get that 
	\begin{align*}
		\val_{\RMDP}(s) &= \val_{\SG}(s) \tag{By \cref{lemma:sg-rmdp}.}\\
		&= \lim_{i\to\infty} \frac{\lb^{\SG}_i(s)}{i} \tag{By the first statement.}\\
		&= \lim_{i\to\infty} \frac{2 \cdot \lb_{i/2}(s)}{i} \tag{By the second statement.}\\
		&= \lim_{i\to\infty} \frac{\lb_{i/2}(s)}{i/2} \tag{Rewriting.}\\
		&= \lim_{j\to\infty} \frac{\lb_j(s)}{j} \tag{Rewriting.}
	\end{align*}
	This proves that VI converges in the RMDP, i.e.\ using \cref{eq:3-bellman-conv} converges to the LRA value.
	We highlight that this works for arbitrary RMDPs, without restrictions to unichain as in, e.g.,~\cite{Wang24}.

\end{proof}

\subsection{Efficiency of the Implicit Update -- \cref{lemma:cov-opt-poly}}

\effimplupdate*

Note that this list is not exhaustive and methods for efficiently optimizing a linear function over the mentioned uncertainty sets are mostly known in the literature.
The lemma is intended to show how our framework can efficiently be applied for many standard uncertainty sets.
We want to especially emphasize that our approach is the first to be able to handle general $\lnorm{p}$-distance based uncertainty sets, including the practically relevant $\ltwo$-norm.

\begin{proof} ~\\	
	\noindent\textbf{$\mathcal{H}$-representation.} The $\mathcal{H}$-representation gives a set of half-spaces by linear inequalities.
	This is already the constraint set of an LP. The size of the objective function ${\sum}_{s' \in \States} \mdptransitions(s,a)(s') \cdot \lb_i(s')$ is linear in $|\States|$.
	Lastly, solving the LP is doable in polynomial time~\cite{DBLP:journals/combinatorica/Karmarkar84}.
	
	\noindent\textbf{$\mathcal{V}$-representation} As a linear function is maximized over a polygon at a corner point, it suffices to compute ${\sum}_{s' \in \States} \mdptransitions(s,a)(s') \cdot \lb_i(s')$ for all corner points $\mdptransitions(s,a)$ and choosing the maximum.
	For each corner point this requires at most $|\States|$ multiplications and additions, and the number of corner points is clearly polynomial in the size of the $\mathcal{V}$-representation of $\rmdptransitions(s,a)$.
	
	\noindent\textbf{$\linf$-ball.} Note that the weighted $\linf$-ball constraint with respect to a (non-negative) weight vector $w$, i.e.\ $\norm{\mdptransitions(s,a)-\mdptransitions^{*}(s,a)}_\infty^w \leq \zeta$, is equivalent to interval constraints $\mdptransitions^{*}(s,a)(s')-\frac{\zeta}{w_{s'}}\leq\mdptransitions(s,a)(s')\leq\mdptransitions^{*}(s,a)(s')+\frac{\zeta}{w_{s'}}$ for all $s'\in\States$.
	
	For the other way around, we can represent each set of interval constraints as a weighted $\linf$-ball by choosing $\mdptransitions^{*}(s,a)(s')$ as the midpoint of the respective interval, and weights inversely proportional to the interval width.
	
	An easy way to show the statement is to reduce this to an LP again.
	However, it is practically more efficient to greedily assign transition probabilities~\cite{givan2000bounded}:
	First, order all states according to $\lb_i(s)$.
	Then, starting from $\mdptransitions(s,a)$ increase (resp. decrease) the probability of all states with large $\lb_i(s')$ by $\frac{\zeta}{w_{s'}}$ while decreasing (resp. increasing) the probability of all states with small $\lb_i(s'')$ by $\frac{\zeta}{w_{s''}}$.
	As we sort and then iterate over all successor states once, this requires polynomially many arithmetic operations in $|\States|$.
	
	\noindent\textbf{$\lone$-ball.} A similar idea can be applied for $\lone$-balls \cite{StrLit04} and weighted $\lone$-balls \cite{DBLP:conf/icml/HoPW18} given as $\norm{\mdptransitions(s,a)-\mdptransitions^{*}(s,a)}_1^w \leq \zeta$:
	Again, start with $\mdptransitions^{*}(s,a)$.
	Sort all states by $\frac{L_i(s)}{w_s}$, then add (resp. remove) a \emph{total} probability mass of $\zeta/2$ from the highest value states while removing (resp. adding) probability mass of $\zeta/2$ from the lowest value states.
	Again, we sort and iterate over all successors,requiring polynomially many arithmetic operations in $|\States|$.
	
	\noindent\textbf{$\lnorm{p}$-ball.} Assume an uncertainty set $\norm{\mdptransitions(s,a)-\mdptransitions^{*}(s,a)}_p^w \leq \zeta$ such that \AssmConstSupp is satisfied for some support of size $k$ and $f(\mdptransitions(s,a))={\sum}_{s' \in \States} \mdptransitions(s,a)(s') \cdot \lb_i(s')$ ought to be maximized.
	For an arbitrary state $s^{*}$ we can rewrite $f$ as 
	\begin{align*}
	&{\sum}_{s' \in \States \setminus \{s^{*}\}} \mdptransitions(s,a)(s') \cdot \lb_i(s') \\
	+ &\left(1-{\sum}_{s' \in \States \setminus \{s^{*}\}} \mdptransitions(s,a)(s')\right) \lb_i(s^{*})
	\end{align*}
	We claim that the point defined by 
	\[\mdptransitions^{\opt}(s,a)=\mdptransitions^{*}(s,a)-\zeta\frac{\mathbf{x}}{\norm{\mathbf{x}}_p^w}
	\]
	where $\mathbf{x}=\lb_i-\frac{\norm{\lb_i}_1}{k}$, maximizes the objective function and is in $\rmdptransitions(s,a)$.
	
	For containment, i.e.\ $\mdptransitions^{\opt}(s,a)\in\rmdptransitions(s,a)$, notice that $\mdptransitions^{\opt}(s,a)-\mdptransitions^{*}(s,a)=\zeta\frac{\mathbf{x}}{\norm{\mathbf{x}}_p^w}$ and thus $\norm{\mdptransitions^{\opt}(s,a)-\mdptransitions^{*}(s,a)}_p^w=\zeta$. Further, due to \AssmConstSupp we have that all entries in $\mdptransitions^{\opt}(s,a)$ are positive and sum to $1$, yielding a probability distribution.
	
	To see $\mdptransitions^{\opt}(s,a)$ is maximizing, consider the gradient of the objective function $f$ w.r.t. $\mdptransitions(s,a)$ which is simply $\lb_i$.
	The normal vector at $\mdptransitions^{\opt}(s,a)$ w.r.t. the surface of the $\ltwo$-ball has the same direction as the vector from $\mdptransitions^{*}(s,a)$ to $\mdptransitions^{\opt}(s,a)$.
	This is again $\mdptransitions^{\opt}(s,a)-\mdptransitions^{*}(s,a)=\zeta\frac{\mathbf{x}}{\norm{\mathbf{x}}_p^w}$, and thus parallel to the gradient.
	This means $\mdptransitions^{\opt}(s,a)$ is locally maximizing $f$, and since $f$ has constant gradient, indeed also globally maximizing within $\rmdptransitions$.
	
	In the case of minimizing objectives the proof is analogous after changing the sign of $-\zeta\frac{\mathbf{x}}{\norm{\mathbf{x}}_p^w}$.
	
	For the general case, i.e.\ without assuming \AssmConstSupp, we can apply the same method but if $\mdptransitions^{\opt}(s,a)$ contains negative entries $S^{-}$, we clamp them to $0$ and redo the process on the $\ltwo$-ball obtained by projecting the original $\ltwo$-ball onto $\States\setminus S^{-}$ and reducing the radius to $\sqrt[p]{\zeta^p-\sum_{s^{-}\in S^{-}}\mdptransitions(s,a)(s^{-})}$.
	It is straightforward to show that iteratively applying this procedure results in $\mdptransitions^{\opt}(s,a)$ such that its distance to $\mdptransitions^{*}(s,a)$ is bounded by $\zeta$.
	Optimality of the zero-entries of $\mdptransitions^{\opt}(s,a)$ follows from the constant gradient of the objective function, and for the non-zero entries from the \AssmConstSupp case.
	
	We only require polynomially many operations to directly compute $\mdptransitions^{\opt}(s,a)$.
	In case negative entries appear, we potentially need to do the same computation up to $|\States|$ times, leaving the overall number of operations polynomial still.
\end{proof}

	\section{Details for \Cref{sec:5-stopping}}\label{app:5-stopping}

\subsection{Definition of Procedure: $\COLLAPSE$}

Recall the following intuition for the $\COLLAPSE$ procedure from the main body:
The core problem leading to non-convergence of the upper bounds (\cref{ex:non-conv-ub}) are so-called \emph{end components (ECs)}, e.g.~\cite[Chp.~3.3]{de1998formal}, which are cyclic parts of the state space where a path can remain forever under some policy.
In other words, there exists an \enquote{improper} policy that keeps the path inside the EC without making progress.
The formal definition of ECs is given in \cref{app:2-SG-connect}, \cref{def:EC}.

\paragraph{Solution Idea.}
The core of the problem is that the Bellman update picks a staying action in an EC, believing it to yield an upper bound that is too high.
Thus, we need to remove these staying actions so that the Bellman updates have a unique fixpoint.
The solution introduced in~\cite{atva14,HM18} is to \emph{collapse} ECs, i.e.\ to replace them with a single representative that has all the exiting actions, but none of the staying actions.

One important observation is that ECs can only lead to spurious fixpoints when all states in them have a reward of 0 (cf. \cite[Cor. 1]{LICS23}).
We provide an intuition:
Consider an EC with a positive state-reward: If the maximizing player can keep the play inside it, its value is infinite and this is detected by $\INITTR$.
If the minimizing player has the possibility to leave the EC, then Bellman updates for that player prefer leaving actions, and there cannot be a cyclic dependency.
Based on this insight, we say an EC $\mathcal{E}=(T,R)$ is a \emph{zero-reward} end component if $\rew(s,a)=0$ for all $s\in T,a\in R(s)$.

Intuitively, collapsing a zero-reward ECs consists of the following steps:
\begin{compactitem}
	\item replace each EC $\mathcal{E}_i$ by a new state $s_{\mathcal{E}_i}$
	\item add an action to $s_{\mathcal{E}_i}$ for each action leaving $\mathcal{E}_i$ in the original RMDP, copying transitions and rewards
	\item redefine all transitions into an EC by summing all transition probabilities
	\item add a \emph{staying} action to each $s_{\mathcal{E}_i}$, leading to a dedicated sink state
\end{compactitem}

\noindent The last condition is necessary to simulate staying inside the EC forever, in case this policy would be optimal for the agent (e.g. for minimizing TR objectives with $\star=c$).
Formally, we define it as follows:

\begin{definition}\label{def:5-collapse}
	Let $\RMDP=(\States,\Actions,\rmdptransitions,\rew)$ with zero-reward MECs $(\mathcal{E}_i)_{1\leq i\leq m}$.
	Then $\mathsf{COLLAPSE}(\RMDP)=(\States',\Actions',\rmdptransitions',\rew')$ where
	\begin{compactitem}
		\item $\States' = \left(S\setminus \left(\bigcup_{i=1}^m T_i\right)\right) \cup \left(\bigcup_{i=1}^m s_{\mathcal{E}_i}\right) \cup \{ s_{\mathit{sink}}\}$
		\item $\Actions'(s)=\stateactions(s)$ for all $s\in\States\cap\States'$
		\item $\Actions'(s_{\mathcal{E}_i}) = \{ \alpha_{s,a} \mid s\in T_i,a\in \stateactions(s)\setminus R_i(s) \} \cup \{ \mathit{stay} \}$ for all $1\leq i\leq m$
		\item $\rmdptransitions'(s,a)=\{\mdptransitions'(s,a) \mid \mdptransitions\in\rmdptransitions\}$ where 
		\begin{compactitem}
			\item $\mdptransitions'(s,a)(s')=\mdptransitions(s,a)(s')$ for $s'\in\States\cap\States'$
			\item $\mdptransitions'(s,a)(s_{\mathcal{E}_i})=\sum_{s'\in T_i}\mdptransitions(s,a)(s')$ for all $1\leq i\leq m$
		\end{compactitem}
		for all $s\in\States,a\in\stateactions(s)$
		\item $\rmdptransitions'(s_{\mathcal{E}_i},\alpha_{s,a})=\rmdptransitions'(s,a)$ for all $1\leq i\leq m$
		\item $\rmdptransitions'(s_{\mathcal{E}_i},\mathit{stay})=\{ \mdptransitions(s_{\mathcal{E}_i},\mathit{stay})(s_{\mathit{sink}}) = 1 \}$ for all $1\leq i\leq m$
		\item $\rmdptransitions'(s_{\mathit{sink}},\mathit{stay})=\{ \mdptransitions(s_{\mathcal{E}_i},\mathit{stay})(s_{\mathit{sink}}) = 1 \}$
		\item $\rew'(s,a)=\rew(s,a)$ for all $s\in\States\cap\States',a\in\Actions(s)$
		\item $\rew'(s_{\mathcal{E}_i},\alpha_{s,a})=\rew(s,a)$ for all $1\leq i\leq m$
		\item $\rew'(s_{\mathcal{E}_i},\mathit{stay})=0$ for all $1\leq i\leq m$
		\item $\rew'(s_{\mathit{sink}},\mathit{stay})=0$
	\end{compactitem}
\end{definition}

\subsection{Correctness of Collapsing -- \cref{lem:5-collapse}}

\collapse*

\begin{proof}
	For this proof we denote the components of the RMDPs as $\RMDP=(\States,\Actions,\rmdptransitions,\rew)$ and $\RMDP'=(\States',\Actions',\mdptransitions',\rew')$.
	
	Note that for ease of presentation, the statement of the lemma contains the slightly informal claim $\val_{\RMDP}^\opt = \val_{\RMDP'}^\opt$; it is informal as $\RMDP$ and $\RMDP'$ do not have the same state space.
	More formally, we require that for all unchanged states $s\in\States\cap\States'$ we have $\val_{\RMDP}^\opt(s) = \val_{\RMDP'}^\opt(s)$, and for every state in a collapsed MEC, its value is equal to that of its representative:
	Every $s\in\States\setminus \States'$ is part of some MEC $\mathcal{E}_i$ and we have $\val_{\RMDP}^\opt(s) = \val_{\RMDP'}^\opt(s_{\mathcal{E}_i})$.

	\noindent\textbf{Linear Size.} As $\mathsf{COLLAPSE}$ replaces end components in $\RMDP$ by single representative states we have $|\States|\geq|\States'|$ and $|\Actions|\geq|\Actions'|$.
	Thus, the size of $\RMDP$ is clearly linear in the size of $\RMDP$.
	
	\noindent\textbf{Polynomial Time.} Due to \AssmConstSupp the environment policy $\polTwo$ does not have any influence on the underlying graph structure, in particular on the set of ECs.
	Thus finding the zero-reward MECs to collapse can be done by considering the induced MDP $\RMDP^{\cdot,\polTwo}$ for any $\polTwo$.
	The set of MECs in an MDP can be computed in polynomial time \cite{de1998formal}.
	
	\noindent\textbf{Preservation of Values.} We can see that $\val_{\RMDP}=\val_{\RMDP'}$ as we can map policies in $\RMDP'$ to policies in $\RMDP$ such that values are preserved. 
	Outside of zero-reward ECs the mapping is straightforward, simply copying the policy.
	
	Inside of a zero-reward EC, $\mathcal{E}=(T,R)$ we can associate each pair of policies $\polOne',\polTwo'$ in $\RMDP'$ with a probability distribution over actions exiting the EC and staying within the EC.
	That is, in the induced Markov chain $\RMDP'^{\polOne',\polTwo'}$ there is a unique probability $p^{s,a}_{\mathcal{E}}$ that a run of $\RMDP'^{\polOne',\polTwo'}$ leaves $T$ via state $s\in T$ and action $a \in \Actions(s)$.
	Additionally, there is a unique probability $p^{\mathit{stay}_{\mathcal{E}}}$ (potentially $0$) that $T$ is never left.
	Constructing $\polOne,\polTwo$ such that $\polOne(s_{\mathcal{E}})(\alpha_{s,a})=p^{s,a}_{\mathcal{E}}$ ensures that all paths leaving $\mathcal{E}$ have the same probability mass (up to stutter-equivalence).
	As the rewards inside the EC are $0$, all paths also have the same $\payoff$.
	
	Further, letting $\polOne(s_{\mathcal{E}})(\mathit{stay})=p^{\mathit{stay}}_{\mathcal{E}}$ ensures that the set of paths staying within $\mathcal{E}$ in $\RMDP'$ and the path going to $s_\mathit{sink}$ in $\RMDP$ have the same probability mass.
	As both yield a total reward of $0$, the value functions must be preserved, i.e. $\val_{\RMDP}=\val_{\RMDP'}$.
	
	\noindent\textbf{Unique Fixpoint.}
	We formalize a Bellman update as a function $\bellmanupdate(\val)$ that outputs an updated value function as in \Cref{alg:3-main-algo}, i.e. for all $s\in S'$:
	\begin{align*}
	\bellmanupdate(\val)(s) = &\opt_{a\in\stateactions(s)} \\
	&\overline{\opt}_{\mdptransitions'\in\rmdptransitions} \left\{ \rew(s) + \sum_{s'\in S} \mdptransitions(s,a)(s')\val(s') \right\}.
	\end{align*}
	Further, we define a metric distance between two value functions $\mathsf{W}$ and $\val$ with respect to an $|S|$-dimensional weight vector $w$ as
	\[ d_w(\mathsf{W},\val) = \max_{s\in S}\left\{\frac{ |\mathsf{W}(s)-\val(s)|}{w_s}\right\}. \]
	To prove uniqueness of the fixpoint, we can construct a weight vector $w$ from $\RMDP$ such that $\bellmanupdate$ is a contraction mapping on all non-terminal and non-infinite-value states with respect to $d_w$.
	
	Let $S_\infty\subseteq S$ be the set of states with $\val(s)=\infty$.
	Let $S_1\subseteq S$ be the set of terminal states in $\RMDP'$, i.e. states with only deterministic zero-reward self-loops.
	This is non-empty since $\RMDP$ by construction since in particular $s_{\mathit{sink}} \in S_1$.
	
	We partition the set of agent policies into the set of \emph{proper} and \emph{improper} policies with respect to $s\in\States\setminus S_{\infty}$.
	We call a policy proper w.r.t $\States\setminus S_{\infty}$ iff for all $s\in\States\setminus S_{\infty}$, from $s$ it eventually reaches $S_1$ with probability 1.
	Again, due to \AssmConstSupp the agent policy is sufficient to determine this qualitative property.
	Note that, for maximizing objectives, all policies are proper: If an improper policy existed w.r.t some $s\in\States\setminus S_{\infty}$ we would have a witness for $\val(s)=\infty$ and thus $s\in S_{\infty}$.
	For minimizing objectives, improper policies may exist.
	However, improper policies are never a witness for a fixpoint: If a policy is improper w.r.t some $s\in\States\setminus S_{\infty}$ we have $\val(s)<\infty$ since $s\not\in S_{\infty}$.
	However, the agent picking an improper policy can only lead to a fixpoint $\val(s)=\infty$ (since we collapsed all zero-reward ECs where an improper policy would yield value 0).
	Hence, improper policies cannot be a fixpoint w.r.t the Bellman update and we only consider proper policies going forward.
	
	For each proper policy we can partition $S\setminus (S_1 \cup S_\infty)$ into sets $S_2, \dots S_m$ s.t. for each $s \in S_k$ where $k\geq 2$ and $a\in\stateactions(s)$ there is a state $s' \in \bigcup_{i=1}^{k-1} S_i$ with $P(s,a,s')>0$.
	
	Let $p_\mathit{min}$ be the minimal transition probability in $\RMDP$.
	For all $1 \leq k \leq m$ and $s \in S_k$ define $w_s = 1-p_{min}^{2k}$.
	Notice that this implies $w_s < 1$.
	Further, let $\gamma = \frac{1-p_{min}^{2m-1}}{1-p_{min}^{2m}}$.
	
	Pick some non-terminal $s\in S_k$ with $k\geq 2$ and $a\in\stateactions(s)$.
	Choose some $t \in \bigcup_{i=1}^{k-1} S_i$ with $P(s,a,t)>0$ (this exists by construction of the partitions).
	Notice that this implies $w_s = 1-p_{min}^{2k}$ and $w_t \leq 1-p_{min}^{2(k-1)}$.
	Then the following inequalities hold:
	\begin{align*}
		\frac{\sum_{s'\in S} \mdptransitions(s,a,s') w_{s'}}{w_s} & \leq \frac{\mdptransitions(s,a,t)w_t + \sum_{s'\in S, s'\neq t} \mdptransitions(s,a,s')}{w_s} \\
		& = \frac{\mdptransitions(s,a,t)w_t + 1-\mdptransitions(s,a,t)}{w_s} \\
		& = \frac{1 - \mdptransitions(s,a,t)(1-w_t)}{w_s} \\
		& \leq \frac{1 - p_{min}(1-w_t)}{w_s} \\
		& \leq \frac{1 - p_{min}(p_{min}^{2(k-1)})}{1-p_{min}^{2k}} \\
		& = \frac{1 - p_{min}^{2k-1}}{1-p_{min}^{2k}} \\
		& = \gamma
	\end{align*}
	
	Now pick some non-terminal state $s\in S\setminus S_1$.
	Let $\mathsf{W}$ and $\val$ be value functions on $S$ and assume w.l.o.g that $\bellmanupdate(\mathsf{W})(s) \geq \bellmanupdate(\val)(s)$.
	Let $a^{*}=\arg\opt_{a\in\stateactions(s)} \overline{\opt}_{\mdptransitions\in\rmdptransitions}  \sum_{s'\in S} \mdptransitions(s,a,s')\mathsf{W}(s')$ and $\mdptransitions^{*}=\arg\overline{\opt}_{\mdptransitions\in\rmdptransitions}  \sum_{s'\in S} \mdptransitions(s,a^{*},s')\val(s')$.
	Then
	\begin{align*}
		& \phantom{{}={}} |\bellmanupdate(\mathsf{W})(s) - \bellmanupdate(\val)(s) |\\
		& = \bigg|\opt_{a\in\stateactions(s)} \overline{\opt}_{\mdptransitions\in\rmdptransitions}  \sum_{s'\in S} \mdptransitions(s,a,s')\mathsf{W}(s') \\
		&\phantom{{}={}}- \opt_{a\in\stateactions(s)}\overline{\opt}_{\mdptransitions\in\rmdptransitions} \sum_{s'\in S} \mdptransitions(s,a,s')\val(s') \bigg|\\
		& \leq \left| \sum_{s'\in S} \mdptransitions^{*}(s,a^{*},s')\mathsf{W}(s') - \sum_{s'\in S} \mdptransitions^{*}(s,a^{*},s')\val(s') \right| \\
		& = \sum_{s'\in S} \mdptransitions^{*}(s,a^{*},s')\left|\mathsf{W}(s') - \val(s')\right| \\
		& = \sum_{s'\in S\setminus S_1} \mdptransitions^{*}(s,a^{*},s')\left|\mathsf{W}(s') - \val(s')\right| \\
		& = \sum_{s'\in S\setminus S_1} \mdptransitions^{*}(s,a^{*},s') w_{s'} \frac{\left|\mathsf{W}(s') - \val(s')\right|}{w_{s'}}
	\end{align*}
\begin{align*}
		& \leq \sum_{s'\in S\setminus S_1} \mdptransitions^{*}(s,a^{*},s') w_{s'} \max_{t\in S}\frac{\left|\mathsf{W}(t) - \val(t)\right|}{w_t} \\
		& \leq \gamma w_s \max_{t\in S}\frac{\left|\mathsf{W}(t) - \val(t)\right|}{w_t} \\
	\end{align*}
	Thus it follows that
	\[ \frac{\left|\bellmanupdate(\mathsf{W})(s) - \bellmanupdate(\val)(s)\right|}{w_s} \leq \gamma \max_{t\in S}\frac{\left|\mathsf{W}(t) - \val(t)\right|}{w_t}.\]
	In particular, since the choice of $s$ was arbitrary among the non-terminal states:
	\[ \max_{s\in S\setminus S_1} \frac{\left|\bellmanupdate(\mathsf{W})(s) - \bellmanupdate(\val)(s)\right|}{w_s} \leq \gamma \max_{t\in S}\frac{\left|\mathsf{W}(t) - \val(t)\right|}{w_t}\]
	Thus, $\bellmanupdate$ is a contraction mapping w.r.t. the weighting function $d_w$.
	
	We can easily check that $d_w$ is a metric, and thus, by the Banach fixed point theorem, $\bellmanupdate$ has a unique fixpoint.
\end{proof}

\subsection{Definition of Procedure: $\INITTR$}

In several places throughout the paper, we need to (i) detect states with infinite value and (ii) find an upper bound on the maximum finite value.
We aggregate both of these in the $\INITTR$-procedure.

In~\cite{DBLP:conf/ictai/Buffet05}, a key contribution is the analysis whether some states have infinite value. 
We provide a simpler algorithm, based on existing procedures in SGs; in the following, we briefly recall them, provide instructive pointers, and explain how they can be applied in RMDPs without explicitly constructing the induced SG.
We highlight that $\INITTR$ even works in polytopic RMDPs with non-constant support, unlike $\COLLAPSE$, which crucially depends on the \AssmConstSupp Assumption.

Note that a procedure such as $\INITTR$ is trivial for LRA: no state can have infinite value and the maximum occurring state reward $\rew_{\max}$ is an upper bound on the value because an average cannot be greater than the largest summand.

\paragraph{(i) Detecting States with Infinite Value.}
The overall idea is to reduce the problem to questions of (almost) sure reachability, and then use a graph predecessor operator. 
In~\cite[Sec. 4.3]{DBLP:journals/fmsd/ChenFKPS13}, the authors only state that the infinite value states can be found by applying methods from~\cite{DBLP:conf/lics/AlfaroH00}.
We provide more details because we additionally prove that the computation can be done implicitly in an RMDP.

\textit{Operators and Their Implicit Computation.}
The computation utilizes the operators $\SPre$ and $\APre$, see~\cite[Tab. 2]{DBLP:conf/lics/AlfaroH00}.
Intuitively, these compute the sure and almost sure predecessors of a set of states, respectively.
We summarize the description of~\cite{DBLP:conf/lics/AlfaroH00} as far as it is relevant, and we simplify it, since in our setting we only consider turn-based games, not concurrent ones.
Thus, a state in the induced SG is always owned by a single player.
Depending on the optimization direction $\opt$, a state can be added if there exists a policy making it a predecessor, or if all policies make it a predecessor.
As the cases are analogous, we write exists (for all) to speak about both simultaneously.

A state $s$ is in $\SPre(X)$ if and only if there exists a policy (for all policies) such that under this policy $X$ is surely reached from $s$.
In other words, for all actions the policy chooses, their support is a subset of $X$.
For agent-states, every action surely leads to the newly added state in the induced SG; thus, the check amounts to checking the newly added environment-state.
This check can be done implicitly. 
First, for simplicity consider an uncertainty set given as a set of linear constraints $A \vec{x} + \vec{b} \leq 0$, where $\vec{x}$ is the vector of probabilities assigned to states (i.e.\ $\vec{x}_s$ is the probability of state $s$).
Then, solve the following linear program: 
\begin{align*}
	&\max \sum_{s\in X} \vec{x}_s\\
	&~~\text{s.t. }~ A\vec{x} + \vec{b} \leq 0,
\end{align*}
A state $s$ is in $\SPre(X)$ if and only if the solution is 1, because only then is there a way for the environment to surely reach $X$.
If we require all policies to reach $X$, then replace $\max$ with $\min$ in the objective function.
To generalize this to arbitrary uncertainty sets, note that we only require to optimize the sum of probabilities $\sum_{s\in X} \vec{x}_s$, i.e.\ a linear function, over the uncertainty set.

The $\APre$-operator is slightly more involved, as it uses two sets $X$ and $Y$, and then checks whether one is reached surely and the other with positive probability. 
Nonetheless, the same idea applies: Encoding the constraints of the uncertainty set and checking whether the sum of probabilities of states in $X$ is equal to 1 or positive for some (all) policies suffices to check whether the successor set is reached surely or with positive probability, respectively.
Thus, the $\APre$-operator can also be computed implicitly. 
For all the uncertainty variants listed in \Cref{lemma:cov-opt-poly}, this update takes polynomial time.

\textit{Using the Operators.}
In the $\star=c$ setting, a state can have infinite value for two reasons:
It is in the set $A$ of states that can remain in a cycle where some state has positive reward; or it can reach the set $A$ with positive probability.
The latter states are a simple positive reachability condition.
Intuitively, we find them by starting from the target set $A$ and iteratively adding states that can reach our current working set with positive probability.
Formally, by~\cite[Thm. 2 and 3]{DBLP:conf/lics/AlfaroH00} they can be computed using only the sure predecessor operator $\SPre$.

Finding the set $A$ is slightly more difficult: It corresponds to a B\"uchi condition where states with positive reward are the target set.
Then, a state that is surely winning the B\"uchi condition infinitely often obtains a positive reward, and thus has infinite value.
By the argument in~\cite[Sec. 4.1]{DBLP:conf/lics/AlfaroH00}, this again only requires the sure predecessor operator $\SPre$.

Finally, in the $\star=\infty$ setting, a state has infinite value if it does not reach the target set almost surely. For this, we use the $\APre$ operator as described in~\cite[Sec. 3.2]{DBLP:conf/lics/AlfaroH00}.

By the results in~\cite[Tab. 1]{DBLP:conf/lics/AlfaroH00}, all these computations take a number of operations that is at most quadratic. 

\textit{Algorithm.}
We summarize the steps to compute states with infinite reward.
For $\star=c$:
\begin{compactitem}
	\item Compute the set $A$ of states that can reach a state with positive reward infinitely often (B\"uchi condition with target set $\{s\in\States \mid \rew(s) > 0\})$.
	\item Compute the states $A'$ that can reach $A$ with positive probability (positive reachability condition, equivalent to the complement of all states that surely do not reach $A$).
	\item For all $s\in A'$, set $\lb(s)=\ub(s)=\infty$.	
\end{compactitem}
For $\star=\infty$, compute the set $B$ of states that almost surely reach the target set (almost sure reachability condition). Then, for all $s\in\States\setminus B$, set $\lb(s)=\ub(s)=\infty$.	

\paragraph{(ii) Computing an Upper Bound on the Value.}

Intuitively, in all states that have finite value, every $\abs{\States}$ many steps, some probability mass has to reach a sink state where no further reward is accumulated. 
Denoting the minimum occurring transition probability under some optimal policies by $p_{\min}$, we know that this probability mass is at least $p_{\min}^{\abs{\States}}$.
The reward of this path reaching a sink state is at most $\rew_{\max} = \abs{\States} \cdot \max_{s\in\States} \rew(s)$.
Using this, we get the following upper bound on the value~\cite[App. B]{LICS23-arxiv}:
$\sum_{i=0}^\infty (1-p_{\min}^{\abs{\States}})^i \cdot \rew_{\max}$.

In a polytopic RMDP, $p_{\min}$ can be computed by finding the minimum transition probability occurring in any vertex of the polytope (combinations of vertices need not be checked, as $p_{\min}$ is given with respect to some optimal pair of policies, and memoryless deterministic policies which pick only vertices of the polytope are sufficient for optimal play). 
We are not aware of a way to do this implicitly when given a polytopes in $\mathcal{H}$-representation.
Similarly, under the \AssmConstSupp, while $p_{\min}$ is guaranteed to exist, it might be non-trivial to find.

However, this is not an issue as we use this finite upper bound mainly for presentation purposes. 
In practice, it is usually very large, cf.~\cite[App. B]{DBLP:journals/corr/abs-2405-03885}, and thus we employ a more involved, but more efficient method called \emph{optimistic value iteration (OVI)}~\cite{DBLP:conf/cav/HartmannsK20,DBLP:conf/atva/AzeemEKSW22}.
Intuitively, it only performs value iteration from below until it seems likely that it has converged. Then, it guesses an upper bound by adding small constant to the lower bound of all states, and verifies that this guess is indeed correct. To this end, it checks whether applying value iteration decreases the bound in all states.
As there are several technicalities in this approach, we refrain from phrasing our algorithm in this form, and rather use the more accessible phrasing with an a priori upper bound.
In our implementation, we utilize OVI.

\subsection{Correctness of $\INITTR$ -- \cref{lem:5-inittr}}

\inittr*
\begin{proof}
	The desired properties follow from the description of the algorithm above.
\end{proof}

\subsection{Modification of $\COLLAPSE$ for Long-Run Average Reward.}\label{sec:lra-collapse}
\paragraph{Intuition.}
Recall the following intuition from the main body: 
For \ccs RMDPs with LRA objectives, a very similar construction is possible based on~\cite{ACD+17}.
We modify the $\COLLAPSE$ procedure as follows: When replacing an EC, add an action to its representative that leads to a sink state and as reward obtains the value of staying in the EC forever.
Thus, playing this action in the modified RMDP corresponds to playing optimally in the EC of the original RMDP, and thus preserves the values.
In this way, we can reduce LRA objectives to TR objectives and then apply \cref{alg:3-main-algo}.

\paragraph{Changes to \cref{def:5-collapse}}
The only change is in the definition of the reward function of the $\mathit{stay}$-action added to the representative states $s_{\mathcal{E}_i}$.
All states in a MEC have the same value under the \AssmConstSupp Assumption, since from any state in the MEC, the agent can almost surely reach the part of the MEC that is optimal for the LRA objective.
Assume we know the optimal value of staying in a MEC forever $\val_{\RMDP}^\opt(\mathcal{E}_i)$; we explain below how to obtain this value.
Then, set $\rew'(s_{\mathcal{E}_i},\mathit{stay})=\val_{\RMDP}^\opt(\mathcal{E}_i)$ for all $1\leq i\leq m$.

\paragraph{Correctness of the Modified Collapsing.}
Intuitively, the modified collapsing allows the agent to choose to remain in a MEC and obtain its optimal value as the reward of the $\mathit{stay}$-action. 
Thus, analyzing the collapsed RMDP $\RMDP'$ with a TR objective yields the same value as the original RMDP $\RMDP$ with a LRA objective, cf.~\cite[Thm. 2]{ACD+17}.
Hence, we can reduce the problem of computing the LRA of an RMDP to computing the TR of its collapsed version.

\paragraph{Computing the Staying Value.}
For RMDPs inducing finite-action SGs, naturally an explicit construction of the SG and an exact computation of the staying value is possible.
However, such an exact computation is impractical already in non-robust MDPs and SGs, see~\cite[Sec. 3.3]{ACD+17} and~\cite[App. E-B]{LICS23-arxiv}.
We summarize the approach these sources suggest to use instead: \emph{on-demand VI}:
\begin{compactitem}
	\item For every MEC $\mathcal{E} = (R,B)$, run a total reward VI inside it until the long-run average reward can be approximated with precision $\varepsilon_{\text{inner}}$. 
	This approximation works using the common stopping criterion for communicating MDPs and SGs (where all states have the same value): In every iteration $i$, we can bound the value using the difference for every state $\Delta_i(s) = x_{i} - x_{i-1}$ as follows:
	$\min_{s\in R} \Delta_i(s) \leq \val_{\RMDP}^\opt(\mathcal{E}) \leq \max_{s\in R} \Delta_i(s)$.
	Thus, we have a lower and upper bound on $\val_{\RMDP}^\opt(\mathcal{E})$.
	\item Construct two collapsed RMDPs, once using the lower bound on the staying value, once using the upper bounds on the staying value. 
	The value of the original RMDP certainly is between these two, cf.~\cite[Cor. 1]{ACD+17}.
	Solve the lower bound RMDP using VI from below and the upper bound RMDP using VI from above.
	\item Sporadically continue the VI inside the MECs to obtain more precise estimates of the staying values.
	Provided that the updates would happen infinitely often if the algorithm ran forever, eventually the precision inside the MECs $\varepsilon_\text{inner}$ becomes small enough such that the values of the lower bound RMDP and the upper bound RMDP are $\varepsilon$-close, and the overall algorithm can terminate.
\end{compactitem}

\algfivestopping*

\begin{proof}
This proof is very similar to the anytime algorithm proofs in the non-robust setting. Nonetheless, we include it for completeness.
For TR objectives \Cref{lem:5-collapse} shows that $\mathsf{COLLAPSE}$ does not affect the value function and outputs an RMDP on which $\bellmanupdate$ is a contraction mapping.
By using the modified $\mathsf{COLLAPSE}$ function for LRA objectives (see the previous subsection) on an RMDP $\RMDP$ we obtain an RMDP $\RMDP'$ such that for each state the long-run average reward in $\RMDP$ is equal to the total reward in $\RMDP$.
Further, $\bellmanupdate$ has a unique fixpoint on $\RMDP'$: The proof is equivalent to \Cref{lem:5-collapse} since the only difference of the modified $\mathsf{COLLAPSE}$ is a change in the reward function on a non-terminal state.
Note that for this proof we assume staying values are computed exactly in the modified $\mathsf{COLLAPSE}$ transformation.
For a correctness and termination proof of the on-demand VI, note that we get two RMDPs with their values under- and over-approximating the value of the original RMDP. Solving these, we get lower and upper bounds on the original value. When the distance between these two RMDPs becomes small enough, we terminate. See \cite{ACD+17} for a more detailed proof.
Going forward, we consider an RMDP $\RMDP$ with a TR objective such that $\bellmanupdate$ has a unique fixpoint.

\noindent\textbf{Correctness.} We first show that after termination both conditions hold, i.e. that the algorithm is correct.
Note that Line \ref{line:while} in \Cref{alg:3-main-algo} ensures $\ub(s)-\lb(s) \leq \varepsilon$ holds once the while-loop terminates.
The second condition for an anytime algorithm, namely $\lim_{i\to\infty} \ub_i(s)-\lb_i(s)$, follows from termination of the algorithm for any $\varepsilon$ which we prove below.

For the first condition of \Cref{def:5-anytime-algo}, we show the statement via induction on the number of iterations on the while-loop in Line \ref{line:while}.
For all target states $s$ we trivially have $\val(s)=\ub_0(s)=\lb_0(s)$.
As without loss of generality these are absorbing, we have $\lb_i(s) \leq \val_{\RMDP}^{\opt} \leq \ub_i(s)$ for all $i\in\Naturals$.
Similarly, for all states $s$ with $\val(s)=\infty$ we set $\ub_i(s)=\lb_i(s)=\infty=\val(s)$ for all $i\in\Naturals$ due to $\INITTR$.
It remains to consider non-target states with a finite value.
The base case holds by the correctness of the initialization, see \cref{lem:5-inittr}.
The induction hypothesis is $\lb_i(s) \leq \val_{\RMDP}(s) \leq \ub_i(s)$.
Then we show the induction step:
\begin{align*}
	\lb_{i+1}(s) &=  \opt_{ a \in \stateactions(s)} \overline{\opt}_{ \mdptransitions \in \rmdptransitions}\left\{ R(s) + \sum_{s' \in S} \mdptransitions(s,a)(s') \lb_i(s')\right\} \\
	&\leq  \opt_{ a \in \stateactions(s)} \overline{\opt}_{ \mdptransitions \in \rmdptransitions}\left\{ R(s) + \sum_{s' \in S} \mdptransitions(s,a)(s') \val_{\RMDP}^{\opt}\right\} \\
	& = \val_{\RMDP}(s)
\end{align*}
The analogous statement for $\ub(s)$ can be proved analogously:
\begin{align*}
	\ub_{i+1}(s) &=  \opt_{ a \in \stateactions(s)} \overline{\opt}_{ \mdptransitions \in \rmdptransitions}\left\{ R(s) + \sum_{s' \in S} \mdptransitions(s,a)(s') \ub_i(s')\right\} \\
	&\geq  \opt_{ a \in \stateactions(s)} \overline{\opt}_{ \mdptransitions \in \rmdptransitions}\left\{ R(s) + \sum_{s' \in S} \mdptransitions(s,a)(s') \val_{\RMDP}^{\opt}\right\} \\
	& = \val_{\RMDP}(s)
\end{align*}
Thus, \Cref{alg:3-main-algo} not only returns a correct result, but also all intermediate value functions $\lb_i$ and $\ub_i$ are sound, making the algorithm anytime-correct.
\\

\noindent\textbf{Termination.}
For termination we need to show that eventually $\ub(s)-\lb(s) \leq \varepsilon$ holds for all $s$.
First, for states with $\val(s)=\infty$ we have by \Cref{lem:5-inittr} that $\lb(s)=\ub(s)=\infty$ (and we assume that $\infty-\infty=0$).
Second, for target states we can set $\lb(s)=\ub(s)=0$ in $\INITTR$ and these are never updated during the algorithm.
Lastly, for all non-target and non-infinite-value states we have by \Cref{lem:5-collapse} that the Bellman operator is a contraction mapping on those states.
Thus, by the Banach fixed-point theorem, repeatedly applying it to any starting vector lets it converge to its fixpoint.
That is, $\lim_{i\to\infty}\lb_i=\val^{*}=\lim_{i\to\infty}\ub_i$ where $\val^{*}$ is the unique fixpoint of the Bellman operator.
In particular this means there is some $n$ for which $|\ub_n(s)-\lb_n(s)|$, and as $\ub_i(s)\geq \lb_i(s)$ (see Correctness paragraph) this implies $\ub_n(s)-\lb_n(s)$,

\end{proof}

	\section{Implicit Anytime Algorithm for RMDPs without Constant-Support}\label{app:6-lics}

In \cref{sec:5-stopping}, we have provided anytime algorithms for RMDPs with TR or LRA objectives satisfying \AssmConstSupp.
Here, we investigate the situation without this assumption.
For accessibility, we first focus on the case of maximizing TR with the $\star=c$ semantics, before explaining how the case of minimizing TR with $\star=\inf$ (the generalization of SSP, see \cref{app:2-prelims}) is dual.
Then we provide the pseudocode for the algorithm and prove its correctness.
Finally, we explain the complications arising in the remaining cases, i.e.\ minimizing TR with $\star=c$, maximizing TR with $\star=\inf$, or LRA.

\subsection{Recalling the Solution for SG}

Our goal is to have an anytime algorithm, namely BVI which computes not only a sequence under-approximating the value, but also a sequence of upper bounds.
Already in MDPs, in ECs (see \cref{def:EC}) these upper bounds can be stuck at spurious fixpoints and thus not converge, see \cref{ex:non-conv-ub} in the main body.
Intuitively, Bellman updates are under the illusion that staying yields the upper bound, while in fact staying in an EC forever yields value 0.
The solution in MDPs which also works for RMDPs satisfying \AssmConstSupp is to replace ECs and thus eliminate cyclic behaviour. 
This is correct, since in MDPs (and RMDPs with \AssmConstSupp), all states in an EC have the same value.
In SGs, this is not the case, as we demonstrate with the following example based on~\cite[Sec. 3]{DBLP:journals/iandc/EisentrautKKW22}.

\begin{figure}
	\centering
		\begin{tikzpicture}[>=stealth', shorten >=1pt, auto, node distance=1.95cm, semithick]
			
			\node[state] (p) {$p$};
			\node[state, right of=p, above of=p] (a) {$p^\text{stay}$};
			\node[state, right of=a, below of=a] (q) {$q$};
			\node[state, right of=p, below of=p] (b) {$q^\text{stay}$};
			
			\node[left of=p] (alpha) {$\alpha$};
			\node[right of=q] (beta) {$\beta$};
			
			\path[->] (p) edge[bend left, above left] node {stay} (a);
			\path[->] (a) edge[bend left, above] node {} (p);
			\path[->] (q) edge[bend left, below right] node {stay} (b);
			\path[->] (b) edge[bend left, above] node {} (q);
			
			\path[->] (p) edge[above] node {exit} (alpha);
			\path[->] (q) edge[above] node {exit} (beta);
			\path[->] (a) edge[above] node {} (q);
			\path[->] (b) edge[above] node {} (p);
			
		\end{tikzpicture}
	\caption{MEC in an SG where not all states have the same value, see \cref{ex:app-secs}}
	\label{fig:app-secs}
\end{figure}
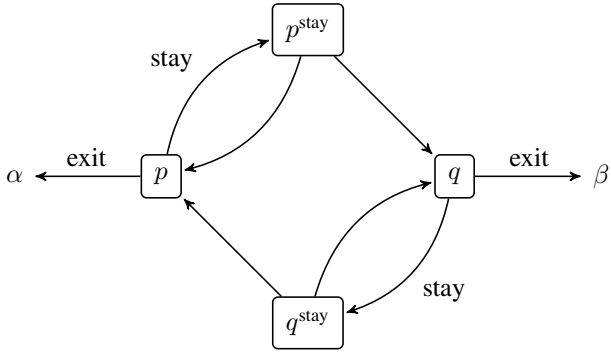

\begin{example}[Different Values in MECs in SGs]\label{ex:app-secs}
	Consider the SG in \cref{fig:app-secs}. 
	It results from an RMDP with states $p$ and $q$, both of which have an action stay and exit.
	The stay actions have an uncertainty set that can freely choose between going to $p$ or $q$.
	The exit actions lead to the outside of the MEC $\{p,q\}$, and for simplicity, we just assume that they obtain a value of $\alpha$ or $\beta$, respectively.
	
	If $\alpha<\beta$, then the value of $q$ is $\beta$, as its best policy is to exit and obtain the higher value (recall the agent is maximizing TR).
	For $p$, the optimal policy also is to exit and obtain $\alpha$.
	Playing stay gives the environment the choice to return to $p$ or go to $q$ which gives a higher value. Thus, the environment will always go towards $p$, and playing stay results in remaining in the MEC and obtaining a payoff of 0.
	Thus, $p$ and $q$ have different values, because the environment can prevent $p$ from reaching the best exit of the MEC.
	This is only possible because the RMDP does not satisfy \AssmConstSupp, allowing the environment to affect the graph structure.

\end{example}

The example shows that states in a MEC can have different values in SGs, and in RMDPs that do not satisfy \AssmConstSupp.
Thus, collapsing MECs and replacing them with a single representative cannot preserve the values.
We need to find the sub-part of the MEC that is most problematic.

\begin{example}[Simple End Components]
	Consider again \cref{fig:app-secs} and assume $\alpha<\beta$. 
	Collapsing $\{p, p^\text{stay}\}$ is correct, as both states have the same value of $\alpha$.
	This problematic part of the graph where all states actually have the same value is called \emph{simple end component (SEC)}~\cite[Def. 3]{LICS23}.
	
	However, finding SECs requires knowing that indeed $\{p, p^\text{stay}\}$ is the problematic part. 
	For this, we utilized the fact that $\alpha<\beta$.
	But the we do not know the values $\alpha$ and $\beta$, as they are what we are currently computing. 
	While we do have approximations, these approximations can switch their relative order during the algorithm.
	Thus, a correct algorithm cannot commit to collapsing a part of the graph, but has to be more conservative.
	
	Intuitively, the solution is to guess a SEC-candidate based on the current estimates, then collapse it, perform one Bellman update, and \enquote{uncollapse} it.
	A different intuition, introduced in~\cite{DBLP:journals/iandc/EisentrautKKW22} is that the estimates in the SEC-candidate are \emph{deflated}, releasing the \enquote{pressure of the too high upper bound} and setting the estimates to that of the agent's best exit.
	This is correct (cf.~\cite[Lem. 5]{LICS23}) and it converges~\cite[Thm. 3]{LICS23} -- using the fact that the estimates are eventually close enough so that the correct SEC-candidates are guessed.
\end{example}

In summary, an anytime algorithm for SGs has to perform the following steps:
Firstly, infer the policy of of the antagonist in order to guess SEC-candidates.
In the example, based on $\alpha<\beta$, we inferred that $\{p, p^\text{stay}\}$ is the SEC-candidate.
Secondly, adjust the value of all states in the SEC-candidate to be that of the best exit of the agent.
In the example, we updated $\{p, p^\text{stay}\}$ to $\alpha$.

We remark that SECs are always zero-reward ECs, see~\cite[Cor. 1]{LICS23}.
Intuitively, if an EC has a state with positive reward, its value is $\infty$ and Bellman updates from above are correct by initialization.

\paragraph{The Dual Case.}
So far, we have considered maximizing the rewards in the $\star=c$ case.
There, only the upper bounds can get stuck at a spurious fixpoint, and thus we only required deflating, i.e.\ reducing upper bounds.
Finding SEC-candidates in this case requires inferring the policy of the minimizing environment player who can lock the agent in some SEC; to ensure that eventually the right policy is used, we infer it according to the converging lower bound sequence $\lb$.

When considering minimizing rewards in the $\star=\inf$ case, the situation is dual. Staying in a MEC again is the worst-case for the agent, this time because it results in infinite value.
Hence, there are spurious lower bounds, where the agent is under the illusion that staying yields a small value, while in fact it results in infinite value.
Thus, we need to \emph{inflate} the lower bounds. 
For finding SEC-candidates, the environment policy is inferred according to the converging upper bound sequence $\ub$.
We refer to~\cite[Sec. VI-H]{LICS23} for more details.

\subsection{Algorithm Description}

In \cref{alg:app-anytime}, we provide the pseudocode of the anytime algorithm for RMDPs without \AssmConstSupp.
It is based on~\cite[Alg. 2]{LICS23}, using some of the insights simplifying it described in their Sec. VI-C.\footnote{We remark that we cannot build on the simpler \cite[Alg. 1]{LICS23}, because it would require solving an infinite-action MDP.}
Moreover, we use some generic notation to make it applicable both when maximizing TR $\star=c$ and minimizing TR with $\star=\inf$.

\begin{algorithm}[tb]
	\caption{Bounded Value Iteration for RMDP without \AssmConstSupp}
	\label{alg:app-anytime} 
	\textbf{Input}: RMDP $\RMDP$, not necessarily satisfying the \AssmConstSupp Assumption and desired precision $\varepsilon>0$\\
	\textbf{Output}: $\varepsilon$-precise lower and upper bounds $\lb$ and $\ub$ on the optimal total reward $\val_{\RMDP}^\opt$
	\begin{algorithmic}[1] 
		
		\STATE $\lb_0, \ub_0 \gets \INITTR(\RMDP')$
		\STATE $i \gets 0$
		
		\WHILE{$\ub(s_i)-\lb(s_i) > \varepsilon$} 
		\FORALL{$s\in S$} 
			\STATE 	$\lb_{i+1}(s) \gets \opt_{a\in\Actions(s)} \rew(s,a)~+$
			
			$\quad\quad\quad\overline{\opt}_{\mdptransitions(s,a)\in\rmdptransitions(s,a)} \sum_{s' \in \States} \mdptransitions(s,a)(s') \cdot \lb_i(s')$
			\STATE 	$\ub_{i+1}(s) \gets \opt_{a\in\Actions(s)} \rew(s,a)~+$
			
			$\quad\quad\quad\overline{\opt}_{\mdptransitions(s,a)\in\rmdptransitions(s,a)} \sum_{s' \in \States} \mdptransitions(s,a)(s') \cdot \ub_i(s')$
		\ENDFOR
		\STATE Infer environment policy $\polTwo$ \\
		~~~~~~~~~~~from $\lb$ ($\star=c$) or $\ub$ ($\star=\inf$) \label{line:infer}
		\FORALL{SEC-candidates $C$ in $\RMDP$ under $\polTwo$} \label{line:findSECs}
			\STATE Deflate ($\star=c$) or inflate ($\star=\inf$)\\
			~~~~~~~~~~~ the estimates for all $s\in C$ \label{line:deflate}
		\ENDFOR
		\STATE $i \gets i+1$
		\ENDWHILE
		\STATE \textbf{return} $(\lb_i,\ub_i)$
	\end{algorithmic}
\end{algorithm}

The algorithm is similar to \cref{alg:3-main-algo}: 
After initializing the bounds, it performs implicit Bellman updates on them.
The difference is that here, we do not collapse MECs, but instead perform additional operations during the main loop, as outlined above.
Concretely, Line~\ref{line:infer} infers an environment policy; this is possible using the witnesses of the Bellman updates, and uses either lower or upper bounds, depending on the objective.
Next, Line~\ref{line:findSECs} finds SEC-candidates under this policy.
Note that this can be done implicitly, as the successors of the environment state are fixed when considering a particular witness. 
Finally, Line~\ref{line:deflate} applies de- or inflating, updating the estimates of the SEC-candidates.

\begin{theorem}[Implicit Anytime Algorithm without \AssmConstSupp]\label{thm:app-anytime}
	Let $\RMDP$ be a polytopic RMDP with (i) a maximizing TR objective under $\star=c$ semantics, (ii) a minimizing TR objective under $\star=\inf$ semantics, or (iii) an SSP objective, and let $\varepsilon>0$ be a precision.
	Then, \cref{alg:app-anytime} is an anytime algorithm (\cref{def:5-anytime-algo}).
	It works implicitly, i.e.\ without constructing the induced SG.
\end{theorem}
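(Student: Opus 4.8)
The plan is to establish the two conditions of \Cref{def:5-anytime-algo} — soundness of the bounds at every iteration and their convergence to zero gap — for \Cref{alg:app-anytime}, handling the three objective variants through the duality outlined in the preceding discussion. I would first reduce cases (ii) and (iii) to case (i): by \Cref{app:2-prelims}, SSP with $\star=0$ reduces to the $\star=c$ total-reward setting treated in the main body, and the minimizing $\star=\inf$ case is dual to the maximizing $\star=c$ case with the roles of $\lb$/$\ub$ and deflate/inflate interchanged. Thus it suffices to prove the theorem carefully for maximizing TR under $\star=c$ and remark that the dual argument is symmetric.

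For soundness, I would argue that both operations preserving the invariant $\lb_i(s)\leq\val_{\RMDP}^\opt(s)\leq\ub_i(s)$. The Bellman updates preserve it by exactly the monotonicity computation in the proof of \Cref{thm:5-alg-stopping} (the inner $\overline{\opt}$ over the uncertainty set together with the outer $\opt$ is monotone in the estimate, and the true value is a fixpoint). The deflation step is sound because a SEC-candidate $C$ found under the inferred environment policy $\polTwo$ is a zero-reward EC, and reducing the upper bounds of all states in $C$ to the value of the agent's best exit never drops below the true value — this is precisely \cite[Lem.\ 5]{LICS23} transported through the SG correspondence of \Cref{lemma:sg-rmdp}. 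The key point is that \Cref{def:sg-poly} gives a \emph{finite-action} SG for polytopic RMDPs, so all the SG machinery of \cite{LICS23} applies verbatim to the induced game, and the implicit computation only changes \emph{how} the witnessing successors and SEC-candidates are found, not \emph{what} they are.

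For convergence, I would invoke the SG convergence result \cite[Thm.\ 3]{LICS23}: once the lower-bound sequence $\lb_i$ is close enough to the value everywhere, the environment policy inferred in Line~\ref{line:infer} stabilizes to one that exposes the correct SEC-candidates, after which deflation drives the upper bounds down to the value. Since the induced SG is finite-action, $\lb_i$ converges from below by \Cref{lem:3-VI-conv}, providing the eventual correctness of the inferred policy; then \cite[Thm.\ 3]{LICS23} yields $\lim_{i\to\infty}\ub_i(s)-\lb_i(s)=0$. The final piece is to verify the \emph{implicitness} claim: Line~\ref{line:infer} reads off a witness distribution from the Bellman update (computable by \Cref{lemma:cov-opt-poly}), and once this single distribution is fixed, the environment state behaves like an ordinary MDP state with known support, so SEC-candidate detection in Line~\ref{line:findSECs} reduces to standard EC analysis on a finite graph without enumerating the polytope's vertices.

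The main obstacle I anticipate is justifying that inferring the environment policy from a \emph{single} witness distribution (rather than the full polytope) still correctly identifies SEC-candidates: in the explicit finite-action SG of \cite{LICS23}, SEC-candidates are computed from the min-player's optimal actions, and I must argue that the witness returned by the implicit optimization in \Cref{lemma:cov-opt-poly} corresponds to such an optimal action and that its support matches the relevant SG edges. This requires care because the polytope may have exponentially many vertices, yet the witness is a single point inside it; I would resolve this by noting that for the purpose of SEC detection only the support of the witness matters, and that a vertex-achieving witness (guaranteed since we optimize a linear function over a polytope) has the same support as some SG edge, so the implicitly found candidate coincides with the explicit one — making the detour through \cite{LICS23} legitimate while avoiding the exponential blowup.
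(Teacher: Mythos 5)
Your proposal is correct and takes essentially the same route as the paper's proof: both transport the anytime guarantee of \cite[Thm.~3]{LICS23} through the finite-action induced SG (\cref{lemma:sg-rmdp}, \cref{def:sg-poly}), handle SSP and the minimizing $\star=\inf$ case by the special-case/duality observations, and then verify implicitness line by line --- Bellman updates and deflation via \cref{lemma:cov-opt-poly}, policy inference by keeping the optimization witness, and SEC detection as MEC search in the finite MDP induced by fixing the environment policy. Your explicit resolution of the vertex-witness/support subtlety is a nice detail that the paper instead subsumes under its appeal to stable strategy recommenders (\cite[Def.~1, Lem.~1]{LICS23}, using that polytopic RMDPs admit memoryless deterministic optimal policies), but it is the same underlying argument.
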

\begin{proof}
	\cref{alg:app-anytime} is an anytime algorithm because of the connection between RMDPs and SGs (\cref{lemma:sg-rmdp}) and the fact that the algorithm follows the steps of~\cite[Alg. 2]{LICS23} which is an anytime algorithm by their Thm. 3, and works for all considered objectives (recall that SSP is a special case of TR).
	The difficulty is to prove that it can be done implicitly.
	Achieving this implicitness is the reason for restricting the objectives and restricting to polytopic RMDPs.
	
	The steps shared with \cref{alg:3-main-algo}, namely initializing and Bellman updates, can be done implicitly as argued in the proof of \cref{thm:5-alg-stopping}.
	It remains to investigate the new steps:
	\begin{compactitem}
		\item Line \ref{line:infer}: Inferring a policy can be done implicitly by keeping the witness of the optimization problem (this is possible for all forms considered in \cref{lemma:cov-opt-poly}).
		However, the proof in requires that the sequence of inferred is a \emph{stable strategy recommender}~\cite[Def. 1]{LICS23}, i.e.\ that the policies eventually are optimal.
		This is why we restrict the proof to polytopic RMDPs. In these, memoryless deterministic optimal strategies exist, and thus we can obtain a stable strategy recommender, cf.~\cite[Lem. 1]{LICS23}.
		For arbitrary uncertainty sets without the \AssmConstSupp, optimal policies need not exist at all, and thus a crucial assumption of the proof of the original algorithm~\cite[Thm. 3]{LICS23} is violated.
		We leave lifting this restriction by analyzing the structure of near-optimal policies in arbitrary RMDPs as future work.
		\item Line \ref{line:findSECs}: Finding SECs amounts to finding MECs in the MDP induced by the environment policy $\polTwo$.
		This is a finite-action MDP where standard algorithms work.
		Note that if we were fixing the agent policy (which is necessary for the excluded objectives minimizing TR $\star=c$ or maximizing TR $\star=\inf$), the resulting MDP would have exponentially many actions for polytopic RMDPs, and infinitely many actions in general, hence the restriction of the objectives.
		\item Line \ref{line:deflate}: Deflating or inflating essentially view the whole SEC as a single state and apply a Bellman update on the exiting actions of the agent. Thus, this can be done implicitly using the standard methods described in \cref{sec:4-implicit}.
	\end{compactitem}
\end{proof}

\paragraph{Complications for Other Objectives and Non-Polytopic RMDPs.}
In the proof of \cref{thm:app-anytime}, we pointed out the reasons for the restriction of the objectives and uncertainty sets.
Here, we summarize the insights:
Finding SEC-candidates requires applying a policy and searching for MECs in the induced MDP. We restrict to the objectives where the policy to be fixed is the environment policy, because otherwise the induced MDP is exponential/infinite.
In these cases, we can resort to the explicit algorithm for polytopic RMDPs, but a general algorithm is elusive.
We conjecture that it is possible to derive an implicit MEC-search algorithm for these cases, but leave it as future work.

Orthogonally, the original proof of~\cite[Thm. 3]{LICS23} requires the existence of memoryless deterministic optimal policies, which is why we restrict to polytopic RMDPs to guarantee this existence. 
It is possible that this assumption is too restrictive, and by a detailed analysis of the near-optimal policies in an arbitrary RMDP, the proof can be extended.

\section{Additional Experimental Results}\label{app:6-exp}

In \cref{tbl:result_ltwo}, we report the results on models from the established benchmark set~\cite{DBLP:conf/tacas/HartmannsKPQR19}, complemented with norm-balls.
For all these models, there is no competitor able to solve them, since no tool can deal with norm-balls.
Moreover, the experiments of~\cite{ijcai-krish,Wang24} consider models with at most a few thousand states, whereas we demonstrate scalability to more than a million states and actions with non-trivial uncertainty sets.

\begin{table}[t]
	\centering
	\begin{tabular}{rrrc}
		Model & \multicolumn{1}{c}{$|\States|$} & \multicolumn{1}{c}{$|\Actions|$} & Solving Time \\
		\midrule
		\texttt{philosophers} 	& 93{,}068 & 437{,}050 &  7     \\
		\texttt{rabin} 			& 668{,}836 & 1{,}170{,}736 &  45     \\
		\texttt{csma} 			& 1{,}460{,}287 & 1{,}471{,}059 &  32     \\		
	\end{tabular}
	\caption{
		Performance of our approach on large models complemented with norm-balls as uncertainty sets.
		We use the same notation as in \cref{tbl:results_prism}, recalled here for convenience:
		The columns $|\States|$ and $|\Actions|$ denote the number of states and actions with more than one successor (where the uncertainty set is not a singleton) in the model, respectively.
		We report solving times (excluding model building and parsing) in seconds.
	} \label{tbl:result_ltwo}
\end{table}

}{}

\end{document}